\renewcommand\footnotetextcopyrightpermission[1]{} % removes footnote with conference information in first column
\newtheorem{assumption}{\textbf{Assumption}}
\newtheorem{theorem}{\textbf{Theorem}}
\newtheorem*{theorem*}{Theorem}
\theoremstyle{definition}
\begin{document}

\title{ 
Neural Predictive Control to Coordinate Discrete- and Continuous-Time Models for Time-Series Analysis with Control-Theoretical Improvements
}

\author{Haoran Li, Muhao Guo, Yang Weng}
\affiliation{%
  \institution{Arizona State University}
  \city{Tempe}
  \state{AZ}
  \country{USA}
}
\email{{lhaoran, mguo26, yweng2}@asu.edu}

\author{Hanghang Tong}
\affiliation{%
  \institution{ University of Illinois at Urbana-Champaign}
  \city{Champaign}
  \state{Illinois}
  \country{USA}
}
\email{htong@illinois.edu}

\begin{abstract} 

Deep sequence models have achieved notable success in time-series analysis, such as interpolation and forecasting. Recent advances move beyond discrete-time architectures like Recurrent Neural Networks (RNNs) toward continuous-time formulations such as the family of Neural Ordinary Differential Equations (Neural ODEs). Generally, they have shown that capturing the underlying dynamics is beneficial for generic tasks like interpolation, extrapolation, and classification. However, existing methods approximate the dynamics using unconstrained neural networks, which struggle to adapt reliably under distributional shifts. In this paper, we recast time-series problems as the continuous ODE-based optimal control problem. Rather than learning dynamics solely from data, we optimize control actions that steer ODE trajectories toward task objectives, bringing control-theoretical performance guarantees. To achieve this goal, we need to $(1)$ design the appropriate control actions and $(2)$ apply effective optimal control algorithms. As the actions should contain rich context information, we propose to employ the discrete-time model to process past sequences and generate actions, leading to a coordinate model to \emph{extract long-term temporal features to modulate short-term continuous dynamics}. During training, we apply model predictive control to plan multi-step future trajectories, minimize a task-specific cost, and greedily select the optimal current action. We show that, under mild assumptions, this multi-horizon optimization leads to exponential convergence to infinite-horizon solutions, indicating that the coordinate model can gain robust and generalizable performance. Extensive experiments on diverse time-series datasets validate our method’s superior generalization and adaptability compared to state-of-the-art baselines.

\end{abstract}

\maketitle

\keywords{Time Series; Deep Sequence Model; Neural ODE; Control Theoretical Design; Guaranteed Improvements}

\fancyfoot[R]{\scriptsize{Copyright \textcopyright\ 2021 by SIAM\\
Unauthorized reproduction of this article is prohibited}}

% Copyright Statement
% When submitting your final paper to a SIAM proceedings, it is requested that you include 
% the appropriate copyright in the footer of the paper.  The copyright added should be 
% consistent with the copyright selected on the copyright form submitted with the paper.
% Please note that "20XX" should be changed to the year of the meeting.

% Default Copyright Statement
\fancyfoot[R]{\scriptsize{Copyright \textcopyright\ 20XX by SIAM\\
Unauthorized reproduction of this article is prohibited}}

% Depending on which copyright you agree to when you sign the copyright form, the copyright 
% can be changed to one of the following after commenting out the default copyright statement
% above.

%\fancyfoot[R]{\scriptsize{Copyright \textcopyright\ 20XX\\
%Copyright for this paper is retained by authors}}

%\fancyfoot[R]{\scriptsize{Copyright \textcopyright\ 20XX\\
%Copyright retained by principal author's organization}}

%\pagenumbering{arabic}
%\setcounter{page}{1}%Leave this line commented out.
\section{Introduction}
\label{sec:Intro}

Time-series analysis, such as classification, interpolation, and forecasting, has extensive applications in various domains like energy \cite{li2025external,li2025exarnn}, marketing \cite{sarkar2021lstm}, transportation \cite{nguyen2018deep}, etc. Recently, introducing dynamical modeling to this analysis has been widely recognized. For example, the continuous feature flow brings interpolative capacities at an arbitrary time, causing substantial improvements for scenarios like irregularly-sampled time-series data \cite{rubanova2019latent,kidger2020neural}. Additionally, the learned dynamics can naturally extrapolate future states, which is often more accurate than discrete updates in many smooth and continuous systems \cite{asikis2022neural,ref:Haoran2025L} or even in some stochastic systems like the PhysioNet dataset \cite{silva2012predicting,rubanova2019latent}. 

However, gaining accurate feature dynamics is challenging. There may exist a smooth dynamical function for many continuous systems, including engineering systems (e.g., power, gas, water, mechanical, and transportation systems) 
\citep{ref:Haoran2023P}, chemical reactions \citep{luis2010chemical}, biological networks ~\cite{palsson2011systems}, weather systems \citep{krishnamurthy2019predictability}, etc. However, for other time-series datasets, frequent disturbances or events could intermittently change dynamical evolution \citep{ref:Haoran2022T}, which calls for a highly adaptive dynamical function to different contexts. As contextual information can be extracted from long-term historical patterns, it's critical to merge both the adaptive short-term dynamics and long-term patterns innovatively.

To understand temporal patterns, discrete-time Deep Learning (DL) methods have been widely utilized, such as Recurrent Neural Networks (RNNs) \citep{chung2014empirical} or Long Short-Term Memory (LSTM) \citep{hochreiter1997long} and Transformer-based models \citep{zhou2021informer,wen2022transformers}. They process discrete observations sequentially but ignore the intra-dynamics between adjacent observations. Consequently, many studies demonstrate the model's insufficiency to tackle data from continuous systems  \citep{rubanova2019latent,kidger2020neural,morrill2021neural,chen2024contiformer,schirmer2022modeling}. Moreover, without dynamical knowledge, the performance of discrete-time DL degenerates significantly if the data is irregularly sampled \citep{rubanova2019latent}.

To incorporate dynamics, existing work can be categorized into the following groups. First, the dynamics between observations can be explicitly learned via methods such as Neural Ordinary Differential Equation (Neural ODE) \citep{chen2018neural}. However, Neural ODE depends on the initial value without adaptation capacities \citep{kidger2020neural}. The dynamics keep changing for complicated time series with disturbances and events. There needs to have a mechanism to adjust the dynamical evolution. Therefore, the second group increases the capacity of the model by adding control actions to the data dynamics. For example, Neural Controlled Differential Equation (Neural CDE) \citep{kidger2020neural} and Neural Rough Differential Equation (Neural RDE) \citep{morrill2021neural} create a rough path based on observed data to control the evolution of the dynamics. Another popular model is a learnable State Space Model (SSM) \citep{gu2021efficiently,smith2022simplified,schirmer2022modeling,ansari2023neural,gu2022parameterization}. SSM-based methods linearly combine feature states and control vectors. Generally speaking, these continuous-time models rely on observational data as control actions and overlook the high-level temporal patterns from the past \citep{chen2024contiformer}. The historical information of time series, however, is critical to providing contextual information and controlling dynamical propagation. 

There has been limited exploration into integrating discrete- and continuous-time models for time-series tasks. ODE-RNN \citep{rubanova2019latent} and GRU-ODE \citep{de2019gru} use discrete-time hidden states to modulate continuous dynamics, leveraging RNN architectures to inform ODE evolution. \citep{chen2024contiformer,jhin2024attentive} combine Neural ODEs or Neural CDEs with Transformer architectures \citep{vaswani2017attention}, using attention mechanisms to compute control actions. However, from a control-theoretic perspective, these approaches are inherently non-robust. They primarily focus on \emph{single-horizon} evaluations by optimizing the current action for immediate objectives. This neglects the long-term influence of actions on future feature trajectories. As a result, the induced dynamics may not converge to a stable equilibrium or a desired reference trajectory \citep{kouvaritakis2016model}.

In this paper, we cast the Neural ODE-based time-series learning problem as an optimal control problem \citep{NEURIPS2020_293835c2,ruiz2023neural}. Hence, we show that the model's performance and adaptation capacity are deeply linked to the convergence of the control problem. For example, (1) \textbf{Time-series classification}. The stable and high-level feature that corresponds to a label can be viewed as an equilibrium point of the feature ODE flow. The perturbations or distributional shift of the time series demand control strategies that always stabilize the ODE flow to the equilibrium. (2) \textbf{Time-series interpolation/extrapolation}. The reference feature flow can correspond to the true time-series dynamics for highly accurate interpolation and extrapolation. Effective control actions can enable converged approximation to these reference trajectories. Therefore, we propose a control-theoretical approach to design efficient \emph{actions} and \emph{control-based training algorithms}, providing certain convergence guarantees.

\begin{table}[h]
\centering
\caption{Table of Notation}
\label{tab:symbols}
\renewcommand{\arraystretch}{1.2}
\resizebox{\columnwidth}{!}{%
\begin{tabular}{ll}
\toprule
\multicolumn{2}{l}{\textbf{Scalars}} \\
\midrule
$t_i$               & Time for the $i^{th}$ observation \\
$N$        & Total number of observations for the time-series\\
$M$         & Number of horizons for the optimization in MPC \\
$y$          & Label of the time-series \\
$\lambda$          & Regularization penalty coefficient \\
\midrule
\multicolumn{2}{l}{\textbf{Vectors}} \\
\midrule
$\boldsymbol{x}_i$       & The $i^{th}$ observation of the time-series \\
$\boldsymbol{z}_i$       & The $i^{th}$ discrete hidden state in an RNN \\
$\boldsymbol{h}(t_i)$       & Continuous hidden state evaluated at time $t_i$ \\
$\boldsymbol{u}_i$ or $\boldsymbol{u}(t_i)$     & Action vector to control $\boldsymbol{h}(t)$ flow at time $t_i$\\
\midrule
\multicolumn{2}{l}{\textbf{Matrices}} \\
\midrule
$U_{i,M}$       & $M$-horizon control sequence \\
$H_{i,M}$       & $M$-horizon sequence of $\boldsymbol{h}(t)$\\
\midrule
\multicolumn{2}{l}{\textbf{Functions}} \\
\midrule
$g_{\psi}(\cdot)$        & Discrete-time updating function \\
$\ell^1_{\psi}(\cdot)$  & Neural network encoding discrete states to actions \\
$\ell_{\psi}^2(\cdot)$ &Neural network to map actions to label or data\\
$f_{\phi}(\cdot)$          & Neural network for the derivative $\frac{d\boldsymbol{h}(t)}{dt}$ \\
$\ell_{\phi}^1(\cdot)$  & Neural network encoding $\boldsymbol{x}_1$ to $\boldsymbol{h}(t_1)$ \\
$\ell_{\phi}^2(\cdot)$ & Neural network to map state to label or data \\
$X(t)$ & Interpolated continuous input path \\
$J(\cdot)$ & Task-dependent cost function \\
$\hat{J}(\cdot)$ & Regularization term for the actions\\
$\text{ODESolve}(\cdot)$ & Function to solve the initial value problem\\
$U_i(t)$ & Interpolated continuous control path from $t_i$ to $t_{i+M}$\\
\bottomrule
\end{tabular}%
}
\end{table}

Instead of utilizing current data or features as the control action, such as those in ODE-RNN or Neural CDE, we propose to extract the temporal features and forecast a sequence of actions for the future. Specifically, we employ an auto-regressive discrete-time model, e.g., an RNN, to understand the temporal information and forecast future actions. The auto-regression aims to provide rich contextual information for the current and future horizons. Hence, the output actions are highly expressive representations that can effectively adjust the continuous-time model, e.g., a Neural-ODE-based model. In general, the overall framework is a coordinate model, where the discrete auto-regressor generates sequential action features to control the evolution of the ODEs in the continuous-time model. The new architecture not only becomes highly capable of extracting and fusing long-term temporal features and short-term dynamics but also embraces control-theoretical training and analysis.

Specifically, we conduct training by solving a \emph{multi-horizon optimal control} problem, which chases convergence to the infinite-horizon solutions. A simple yet efficient approach is the so-called Model Predictive Control (MPC) \citep{kouvaritakis2016model,garcia1989model}. At each time, MPC solves a multi-horizon optimization to maximize the predictive performance and yield a sequence of optimal actions. Then, only the first control action is implemented. Such a process is repeated, bringing rigorous convergence guarantees \citep{veldman2022local}. Mathematically, in Theorems \ref{theo1} and \ref{theo2}, we introduce the \emph{exponential convergence} to a stable equilibrium or a reference trajectory. 

The overall model, dubbed Neural Predictive Control (NPC), leverages MPC during parameter updates. Specifically, with the produced control actions and the ODE dynamics, we construct a multi-horizon cost that can be iteratively minimized through gradient-based methods. In general, our NPC provides a unified framework for coordinating arbitrary pairs of discrete- and continuous-time models. It consistently outperforms each individual component, offers strong theoretical guarantees, and remains simple to implement. We validate its effectiveness on both synthetic and real-world time-series datasets, demonstrating substantial gains: we observe a $5\text{\%}\sim 15\text{\%}$ improvement in classification accuracy and a $30\text{\%}\sim 60\text{\%}$ reduction in mean squared error for regression tasks. Moreover, our framework remains highly scalable for high-volume and multi-dimensional time series, thanks to the highly efficient parallel computations in the discrete- \cite{chung2014empirical} and continuous-time models \cite{chen2018neural}.

\section{Problem Formulation}
\label{sec:back}

%\hh{1. have some naming convention and a table of main symbols. 2. we might consider merge section 3 and section, can it, say Problem Definition -- right now, both sections are quite short. 3. we might also consider to move section 2 before the conclusion section. otherwise, the main methodology (section 4) comes too late.}

In this paper, we aim to solve the following problem.

\begin{itemize}
\item Goal: Build an accurate time-series classifier or regressor by incorporating continuous-time dynamics. 
\item Given: A series of observations $\{\boldsymbol{x}_i\}_{i=1}^N$ at times $\{t_i\}_{i=1}^N$ with potentially \emph{irregular intervals}. For a classification problem, the label $y$ of these observations is also available.
\item Find: A well-trained classifier or regressor to identify time-series labels or conduct accurate time-series interpolation or extrapolation. 
\end{itemize}

Then, we introduce some preliminaries.

\textbf{Discrete-Time Model}. We consider the sequence DL, such as RNNs, that contains a hidden state $\boldsymbol{z}$ to store temporal information. $\boldsymbol{z}$ can be updated at discrete time whenever a new observation is input to the model. Specifically, at time $t_i$, the updating function is:

\begin{equation}
\label{eqn:z_update}
\boldsymbol{z}_i=g_{\psi}(\boldsymbol{z}_{i-1},\boldsymbol{x}_i),
\end{equation}
where $g_{\psi}(\cdot)$ is a neural network with a parameter set $\psi$, e.g., a cell block in RNNs \citep{chung2014empirical}. The updated hidden state can be converted to the time-series label or forecast observations through an output layer. In our NPC framework in Section \ref{sec:method}, we will convert the state to a sequence of actions, leading to high-level temporal representations to capture current and future contextual information to adjust the continuous dynamics. 

\textbf{Remark}: There could be other candidates for the discrete-time model, such as Transformer-based models \cite{zhou2021informer}. In essence, a model is qualified as long as it can extract temporal information and produce sequential action representations.  

\begin{figure*}[h!]
\centering
\includegraphics[width=5.6in,trim=135 120 170 55, clip]{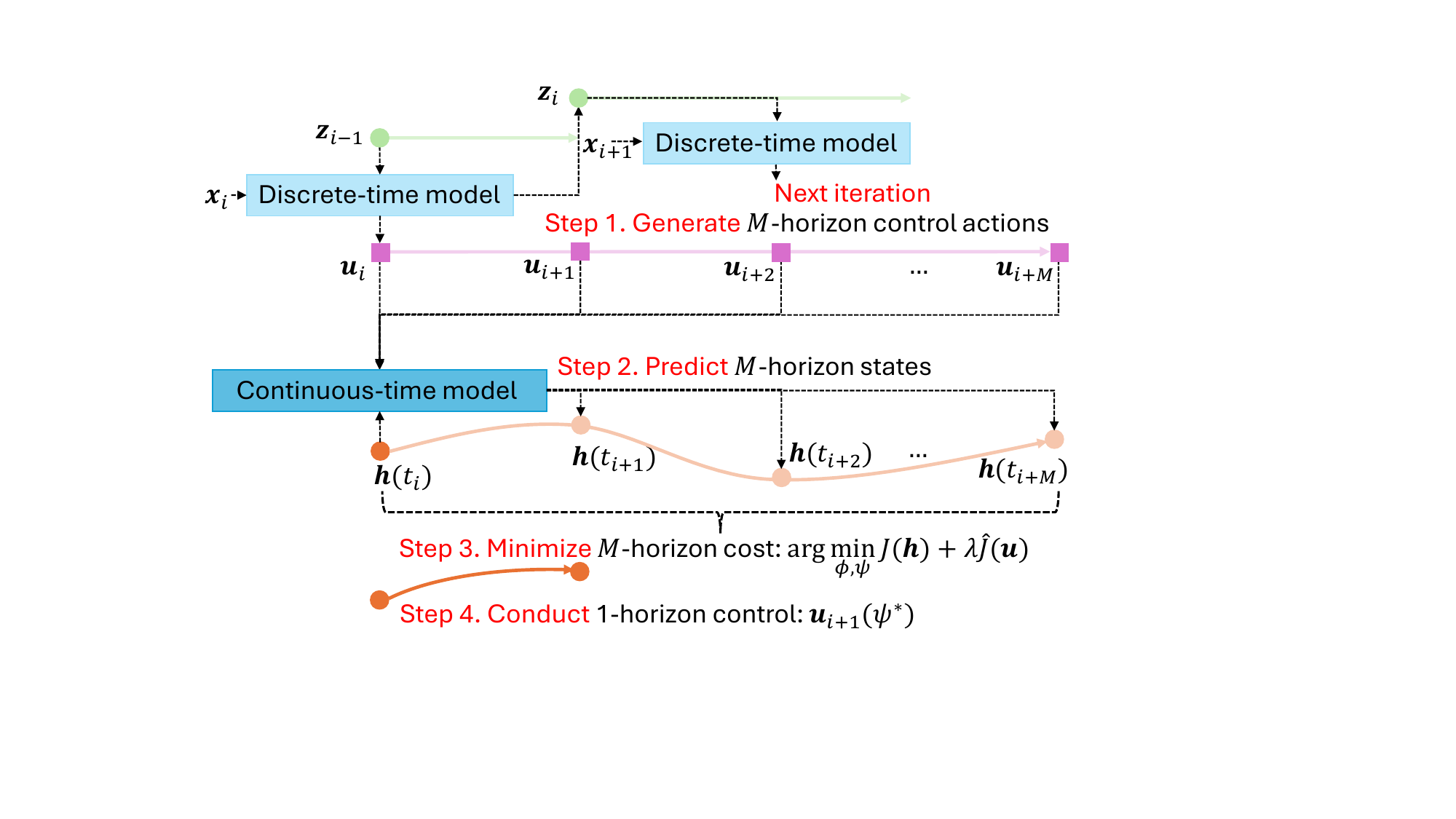}
\centering
\caption{The framework of the proposed NPC. Steps 1-4 demonstrate how to conduct $M$-horizon training at $t_i$.  }
\label{fig:framework}
\vspace{-4mm}
\end{figure*}

\textbf{Continuous-Time Model}. There are extensive Neural ODE variants that can model continuous-time dynamics. Their core component is the continuous feature ODE flow. Specifically, let $\boldsymbol{h}(t)$ denote the continuous feature states. A neural network $f_{\phi}(\cdot)$ is employed to depict the derivative of $\boldsymbol{h}(t)$:
\begin{equation}
\label{eqn:h_dynamic1}
\frac{d\boldsymbol{h}(t)}{dt}=f_{\phi}(\boldsymbol{h}(t),t),
\end{equation}
where $\phi$ is the parameter set of $f_{\phi}(\cdot)$. The derivative in Equation \eqref{eqn:h_dynamic1} can be adapted with respect to input time $t$. However, such a capacity is limited. Recent advances promote the adaptation capacity through introducing an additional context vector $\boldsymbol{u}(t)$.

\begin{equation}
\label{eqn:h_dynamic2}
\frac{d\boldsymbol{h}(t)}{dt}=f_{\phi}(\boldsymbol{h}(t),\boldsymbol{u}(t),t),
\end{equation}

For example, in Neural CDE \citep{kidger2020neural}, $\boldsymbol{u}(t)=\frac{dX}{dt}(t)$, where $X(t)$ is a continuous path created by the interpolation of the observations $\{\boldsymbol{x}_i\}_{i=1}^N$. Specifically, the interpolation uses a natural cubic spline with knots at $\{t_i\}_{i=1}^N$ such that $X(t_i)=(\boldsymbol{x}_i,t_i)$. In Augmented Neural ODE \cite{dupont2019augmented}, $\boldsymbol{u}(t)$ is an auxiliary augmented vector. 

\textbf{Hybrid Model}. In a hybrid framework, a discrete-time model $g_{\psi}(\cdot)$ is utilized to extract past information and provide more sophisticated context vectors. For example, in ODE-RNN \citep{rubanova2019latent}, $\forall t_i\leq t\leq t_{i+1}$, we have:

\begin{equation}
\label{eqn:h_dynamic3}
\frac{d\boldsymbol{h}(t)}{dt}=f_{\phi}\big(\boldsymbol{h}(t),g_{\psi}(\boldsymbol{z}_{i-1},\boldsymbol{x}_i),t\big), 
\end{equation}

In this formula, the context vector is defined as $\boldsymbol{u}_i=\boldsymbol{u}(t_i):=g_{\psi}(\boldsymbol{z}_{i-1},\boldsymbol{x}_i)$. Basically, the discrete state $\boldsymbol{z}_i$ in the RNN (see Equation \eqref{eqn:z_update}) works as the context vector to control $\boldsymbol{h}(t)$ flow from $t_i$ to $t_{i+1}$. Unlike the continuous control signal $\boldsymbol{u}(t)$ in Equation~\eqref{eqn:h_dynamic2}, the hybrid model employs a discrete context vector to improve efficiency. This design is justified, as the context vector can effectively capture local and relatively invariant environmental information. In Section \ref{sec:method}, we show that our framework can employ both discrete and continuous control patterns.

\textbf{Train the Hybrid Model by Solving the Optimal Control}. During training, $\boldsymbol{u}_i$ is also optimized through updating the parameter set $\psi$ of the RNN. Hence, we can view $\boldsymbol{u}_i$ as the control action, the RNN as the controller, and the learning problem as an optimal control problem. Specifically, we have:

\begin{equation}
\begin{aligned}
\label{eqn:opt_control1}
\arg\min_{\psi,\phi} &\ \ J(\boldsymbol{h}(t)),\\
\text{subject to} &\ \ \boldsymbol{h}(t_1)=\ell_{\phi}^1(\boldsymbol{x}_1),\\
&\ \ \boldsymbol{u}_i=g_{\psi}(\boldsymbol{z}_{i-1},\boldsymbol{x}_i),\\
& \ \ \boldsymbol{h}(t_{i+1})=\text{ODESolve}\bigg(f_{\phi}\big(\boldsymbol{h}(t_i),\boldsymbol{u}_i,t\big),\boldsymbol{h}(t_i),[t_i,t_{i+1}]\bigg)
\end{aligned}
\end{equation}
where $J(\cdot)$ is a cost function evaluated at the hidden trajectory of $\boldsymbol{h}(t)$. $J(\cdot)$ is task-dependent and can be written by the cross-entropy (CE) loss in the classification or the mean square error (MSE) in the regression, which will be fully explained in Section \ref{sec:method}. $\text{ODESolve}(\cdot)$, defined in \cite{chen2018neural}, solves the initial value problem in $[t_i,t_{i+1}]$, given the initial state $\boldsymbol{h}(t_i)$ and the derivative $f_{\phi}\big(\boldsymbol{h}(t_i),\boldsymbol{u}_i,t\big)$. $\ell_{\phi}^1(\cdot)$ is a neural network to encode the initial data $\boldsymbol{x}_1$ to an initial continuous hidden state.

However, the control is single-horizon. As shown in the equality constraint, the action $\boldsymbol{u}_i$ at $t_i$ will determine the flow of $\boldsymbol{h}(t)$ from $t_i$ to $t_{i+1}$. Nevertheless, the optimal action, under some optimal parameters $\psi^*$, can't ensure the robust performance for the subsequent flow after $t_i$. The problem is heavily discussed in dynamic programming, 
Reinforcement Learning (RL), and MPC \cite{busoniu2017reinforcement}. In this paper, we adopt MPC theory to renovate the training procedure in Equation \eqref{eqn:opt_control1} since MPC is suitable and practical for the optimal control problem. As a result, innovative model architecture and training algorithms are proposed, which completely changes the paradigms from existing Neural ODE-based time-series analysis.

\vspace{-4mm}
\section{Methods}
\label{sec:method}
In this paper, we propose a highly expressive and theoretically grounded model, Neural Predictive Control (NPC). The overall framework is shown in Fig. \ref{fig:framework}. Generally speaking, NPC assigns a discrete-time model (light blue box) to generate $M$-horizon future actions $[\boldsymbol{u}_i,\cdots,\boldsymbol{u}_{i+M}]$ (pink squares). These actions are input to a continuous-time model (dark blue box) to modify the evolution of a continuous hidden feature state $\boldsymbol{h}(t)$ (light brown curve). Then, following MPC's philosophy, an $M$-horizon optimization is solved and a $1$-horizon optimal action is conducted (dark brown curve). This approximates the solution of the infinite-horizon problem with a bounded and affordable error (see Theorems \ref{theo1} and \ref{theo2}). Steps $1\sim 4$ in Fig. \ref{fig:framework} summarize the process and should be repeated sequentially. Specifically, we elaborate on the procedure as follows.

%$\boldsymbol{h}(t)$ is evaluated with a cost function $J(\boldsymbol{h})$, depending on the classification/regression tasks. By minimizing the cost with some action regularization $\hat{J}(\boldsymbol{u})$, NPC finds the optimal cost $\boldsymbol{u}_{i+1}(\psi^*)$ to move to the next optimal state (dark brown curve). 

\textbf{Discrete-time model to output predictive control sequence}. As shown in the top layer of Fig. \ref{fig:framework}, we utilize the following discrete-time model:
\begin{equation}
\label{eqn:controller}
\boldsymbol{z}_0=\boldsymbol{0},\boldsymbol{z}_{i}=g_{\psi}(\boldsymbol{z}_{i-1},\boldsymbol{x}_i), [\boldsymbol{u}_i,\boldsymbol{u}_{i+1},\cdots,\boldsymbol{u}_{i+M}]=\ell^1_{\psi}(\boldsymbol{z}_i),
\end{equation}
where $\ell^1_{\psi}(\cdot)$ encodes the discrete state to $M$-horizon control sequence $U_{i,M}:=[\boldsymbol{u}_i,\boldsymbol{u}_{i+1},\cdots,\boldsymbol{u}_{i+M}]$. These predictive actions determine the propagation of hidden states $\boldsymbol{h}(t)$ in the continuous-time model.

\textbf{The Proposed Predictive Optimal Control Framework}. Based on the predictive action sequence, we propose a new optimal control paradigm. Mathematically, let $H_{i,M}:=[\boldsymbol{h}(t_{i+1}),\cdots,\boldsymbol{h}(t_{i+M})]$, we introduce the following iterative $M$-horizon subproblem:

\begin{equation}
\begin{aligned}
\label{eqn:opt_control2}
\arg\min_{\psi,\phi} &\ \ {\color{blue}J(H_{i,M})+ \lambda \hat{J}(U_{i,M})},\\
\text{subject to} &\ \ \boldsymbol{h}(t_1)=\ell_{\phi}^1(\boldsymbol{x}_1),\\
&{\color{blue}\ \ U_{i,M}=\ell^1_{\psi}(\boldsymbol{z}_i),}\\
{\color{blue}H_{i,M}}&=\text{ODESolve}\bigg(f_{\phi}\big(\boldsymbol{h}(t_i),{\color{blue}\boldsymbol{U}_{i,M}},t\big),\boldsymbol{h}(t_i),{\color{blue}[t_i,\cdots,t_{i+M}]}\bigg)
\end{aligned}
\end{equation}

Compared to Equation \eqref{eqn:opt_control1}, important changes, induced by the predictive control strategy, are marked in {\color{blue}blue}. $\lambda>0$ is a penalty constant and $\hat{J}(\cdot)$ is a regularization term for the actions. The detailed formula for $J(\cdot)$ and $\hat{J}(\cdot)$ are presented in Equations \eqref{eqn:loss_class} to \eqref{eqn:reg_action}. In this optimization, starting from an initial state $\boldsymbol{h}(t_1)$, the action sequence controls the dynamics in the subsequent $M$ horizons, causing the feature sequence $H_{i,M}$ that are evaluated and optimized. The evolution computation differs according to different continuous ODE models. In this paper, we investigate two popular models: Neural CDE and ODE-RNN, which serve as excellent representatives of continuous-time and hybrid models, respectively.
%More specifically, they represent two control styles: either changing the dynamical function (Neural CDE) or recurrently changing the initial condition (ODE-RNN), generic to cover most of the existing methods.

\textbf{Predictive-sequence controlled Neural CDE}. The original Neural CDE \citep{kidger2020neural} is controlled by the raw data path $X(t)$. We generalize this idea and consider a \emph{control path} $U_i(t)$ ($1\leq i\leq N$) of $M+1$ horizons, where $U_{i}(t_{i+k})=(\boldsymbol{u}_{i+k},t_{i+k})$ is formulated based on $\{\boldsymbol{u}_{i+k},t_{i+k}\}_{k=0}^M$ obtained in Equation \eqref{eqn:controller}. Consequently, for $t_i\leq t\leq t_{i+M}$, our \emph{predictive Neural CDE} updates the continuous state $\boldsymbol{h}(t)$ with Riemann–Stieltjes integral \citep{mozyrska2009riemann}: 
\begin{equation}
\begin{aligned}
\label{eqn:ht_cde}
\boldsymbol{h}(t) &= \boldsymbol{h}(t_i)+\int_{t_i}^{t}f'_{\phi}(\boldsymbol{h}(s),s)\frac{dU_i}{ds}(s)ds,
\end{aligned}
\end{equation}
where the dynamics $f_{\phi}(\boldsymbol{h}(t),\boldsymbol{u}(t),t)=f'_{\phi}(\boldsymbol{h}(t),t)\frac{dU_i}{dt}(t)$, continuously changed by the control action $\frac{dU_i}{dt}(t)$. Then, $H_{i,M}$ can be inferred by inserting the time values $t_i$ to $t_{i+M}$. 

\textbf{Predictive-sequence controlled ODE-RNN}. In ODE-RNN \citep{rubanova2019latent}, the observations, processed by another RNN, work as a controller to sequentially change the initial state of $\boldsymbol{h}(t)$ for each interval $[t_i,t_{i+1}]$. To distinguish the built-in RNN from the discrete-time model $g_{\psi}(\cdot)$, we denote the cell block of the former as $g'_{\phi}(\cdot)$, and the subscript $\phi$ implies that $g'_{\phi}(\cdot)$ and $f_{\phi}(\cdot)$ have a common parameter set.  Hence, the following updates exist for each $t_i\leq t\leq t_{i+1}$:
\begin{equation}
\label{eqn:ht_ode_rnn}
\tilde{\boldsymbol{h}}(t) = \boldsymbol{h}(t_i) + \int_{t_i}^t f_{\phi}(\boldsymbol{h}(s),\boldsymbol{u}_{i},s)ds, 
\end{equation}
where control actions $\boldsymbol{u}_i$ are calculated from Equation \eqref{eqn:controller}. $H_{i,M}$ can be calculated from $t_i$ to $t_{i+M}$. 

To summarize, Equations \eqref{eqn:ht_cde} and \eqref{eqn:ht_ode_rnn} merge discrete-time model $g_{\psi}(\cdot)$ and continuous-time model $f_{\phi}(\cdot)$ to construct a coordinated model. Fig. \ref{fig:framework} demonstrates how the two models (light/dark blue boxes) work together to output the state flow $\boldsymbol{h}(t)$ (light brown curve). Subsequently, the flow from $\boldsymbol{h}(t_i)$ to $\boldsymbol{h}(t_{i+M})$ should be evaluated with a cost function in Optimization \eqref{eqn:opt_control2}, which is the core of MPC to greedily maintain optimality. Thus, we present the cost function according to different tasks.

%Notably, when $i>1$, the initial state is not $\boldsymbol{h}(t_1)$ but a result of the control implementation, a critical point to enforce MPC formulations for the training procedure and guarantee high performances for the trained coordinated model. We will show how NPC iteratively conducts training and control later. 

%Inspired by MPC, we seek optimal control in a multi-horizon optimization, which brings the coordinated model look-ahead abilities for high generalizability. The control cost function is as follows.

\textbf{Classification cost}. For a classification problem, we define the cost as the classification loss evaluated at the terminal state at $t_{i+M}$. Hence, Optimization \eqref{eqn:opt_control2} drives the whole feature flow towards more separable regions. Specifically, we have:
\begin{equation}
\label{eqn:loss_class}
J(H_{i,M}) =L_{\text{CE}}\Big(\ell_{\phi}^2\big(\boldsymbol{h}(t_{i+M})\big),y\Big),
\end{equation}
where $\ell_{\phi}^2$ is a readout neural network to map from hidden states to labels, $y$ is the true label for input time series, and $L_{\text{CE}}(\cdot,\cdot)$ is the cross-entropy loss.

\textbf{Regression cost}. In a regression task, the cost is the mean square error (MSE, $L_{\text{MSE}}(\cdot,\cdot)$) for each state.

\begin{align}
\label{eqn:loss_reg}
J(H_{i,M}) &= \sum_{k=0}^M L_{\text{MSE}}\Big(\ell_{\phi}^2\big(\boldsymbol{h}(t_{i+k})\big),\boldsymbol{x}_{i+k}\Big),
\end{align}
where $\ell_{\phi}^2(\cdot)$, with a slight abuse of notation, maps from state to time-series values.

\textbf{Action regularization}. The norm of the action is used in MPC to mitigate control chattering and guarantee stability and convergence \citep{bejarano2023multi,veldman2022local}. However, this may limit the expressivity of the discrete-time model. Instead, we encourage the consistence of actions: they should all be tied to most relative features associated with the corresponding labels/values. To this end, we endow the discrete-time model with the capacity to solely complete the learning task:

\begin{align}
\label{eqn:reg_action}
\hat{J}(\psi) = 
\begin{cases}
\sum_{k=0}^M L_{\text{CE}}\Big(\ell_{\psi}^2\big(\boldsymbol{u}_{i+k}\big),y\Big) \\ 
\quad \quad \quad \quad \text{Classification loss}\\
\sum_{k=0}^M L_{\text{MSE}}\Big(\ell_{\psi}^2\big(\boldsymbol{u}_{i+k}\big),\boldsymbol{x}_{i+k}\Big) \\ 
\quad \quad \quad \quad \text{Regression loss},
\end{cases}
\end{align}
where $\ell_{\psi}^2(\cdot)$ is a neural network to convert actions to labels or time-series data.
%The time $\Delta t_{i+k} = t_{i+k} - t_{i+k-1}$ is an argument to $\ell_{\psi}^2(\cdot)$ to tackle irregular time series \citep{che2018recurrent}. 
%Naturally, minimizing $\hat{J}(\psi)$ enables actions $\boldsymbol{u}_i$ to be highly relevant to the learning tasks.

\textbf{Training algorithms}.As shown in Algorithm \ref{alg:train_NPC_NPCF}, NPC solves Optimization \eqref{eqn:opt_control2} by gradient descent and only conducts $1$-horizon optimal control action to reach $\boldsymbol{h}(t_{i+1})$. This approach is known to provide good convergence to infinite-horizon minimization \citep{veldman2022local}. 

\begin{algorithm}[h]
\caption{MPC-based Training Algorithm to Train NPC.}\label{alg:train_NPC_NPCF}
\begin{algorithmic}
\State \textbf{Input:} A sequence of observations $\{\boldsymbol{x}_i\}_{i=1}^N$ at times $\{t_i\}_{i=1}^N$. For a classification problem, the time-series label $y$ is also provided. There can be multiple sequences and labels. 
\State \textbf{Initialize:} $\boldsymbol{z}_0=\boldsymbol{0}$. The number of look-ahead horizons $M$ and penalty term $\lambda$.
\While{Not converge}
\For{$i$ in $1,2,\cdots, N$}
\State Compute actions $U_{i,M}$ as outputs of $g_{\psi}(\cdot)$ in Equation \eqref{eqn:controller}.
\State Compute $H_{i,M}$ in Equation \eqref{eqn:ht_cde} or \eqref{eqn:ht_ode_rnn} with control actions and a continuous-time model $f_{\phi}(\boldsymbol{h}(t),\boldsymbol{u}(t),t)$. 
\State Conduct $M$-horizon minimization in Equation \eqref{eqn:opt_control2}. 
\State Conduct the action $\boldsymbol{u}_{i+1}(\psi^*)$, where $\psi^*$ is a current optimal solution. 
\EndFor
\EndWhile
\State \textbf{Output:} Optimal parameters $(\psi^*,\phi^*)$.
\end{algorithmic}
\end{algorithm}

\section{Theoretical Analysis}
\label{sec:theo}
In this section, we provide theoretical insights about how MPC can guide the training towards a more robust and generalized solution. In particular, we elaborate on the function class of $f_{\phi}(\boldsymbol{h}(t),\boldsymbol{u}(t),t)$ that can be linearized into a state-space model (SSM) with a bounded error, as shown in Assumption \ref{assum1} in Appendix \ref{appendix:assum}. Hence, linear ODE and control theorems can be used to analyze the infinite-horizon problem and derive the convergent distance between $M$-horizon and infinite-horizon solutions. This analysis sheds light on the general analysis, even when the derivative can be nonlinear: as shown in Equations \eqref{eqn:stability_error} and \eqref{eqn:generalizability}, the bounded linearization error can also converge. While it's hard to directly quantify the potential bounds as different time-series data have different complexities, a large amount of work \citep{gu2021efficiently,smith2022simplified,schirmer2022modeling,ansari2023neural,gu2022parameterization} reveals that SSM is highly competent for generic time-series modeling.

For a classification problem, without the loss of generality, the proposed NPC aims to start from $\boldsymbol{h}(t_1)$ and stabilize $\boldsymbol{h}(t)$ to the origin such that $\ell_{\phi}^2(\boldsymbol{0})\approx C\cdot\boldsymbol{0}=\boldsymbol{0}$, implying that the label $y=0$, where $C$ is defined in the SSM in Assumption \ref{assum1}. Other labels have similar processes with some variable transformations. For a regression problem, the control goal is to enable $\ell_{\phi}^2(\boldsymbol{h}(t))$ to certain values. %However, the state stabilization process between every two observations is equivalent to the process of classification. Hence, we concentrate our analysis on the classification task but underline that there are many successful stories of applying MPC to trajectory tracking problems \citep{kunhe2005mobile,kamel2017model,zheng2014trajectory}, including our experiments in Section \ref{sec:exp_reg}. 
Then, we answer the following two questions: ({\em Q1. Stability}). Will the flow of $\boldsymbol{h}(t)$ get stable at the origin with different initial values $\boldsymbol{h}(t_1)$? ({\em Q2. Generalizability}). Will the flow and control sequence under $M$-horizon optimization converge to the infinite-horizon optimal solutions? 
%The latter, although analyzed for a classification problem, implies the generalization power of NPC in regression problem \citep{zhang2021trajectory,bousquet2002stability,zhang2021deep}. %\hh{do we have any reference for this?}. 
%For the two desired properties, we show exponential convergence.

\begin{theorem}[Stability]
\label{theo1}
Assume Assumption \ref{assum1} holds. Let $T=t_{i+M}-t_i$, $\tau=t_{i+1}-t_i$ and $\boldsymbol{h}_{(\psi^*,\phi^*)}(t)$ denote the optimal state after NPC training in Algorithm \ref{alg:train_NPC_NPCF}. There exists constants $K$, $K_1$, $K_2$, $\mu_{\infty}$, and $M_{\infty}\geq 1$ such that 
% \begin{equation}
\begin{align}
\label{eqn:stability_error}
||\boldsymbol{h}_{(\psi^*,\phi^*)}(t)||\leq  M_{\infty}e^{-\mu t}||\boldsymbol{h}_{(\psi^*,\phi^*)}^*(t_1)|| + \notag \\ \frac{1-e^{-\mu t}}{\mu}K(1+(L+1)\tau e^{K(L+1)\tau})||\boldsymbol{w}||_{l^{\infty}(0,t)},
\end{align}
% \end{equation}
where $\mu=\mu_{\infty}-K_1e^{-2\mu_{\infty}(T-\tau)}-K_2L-KL(L+1)\tau e^{K(L+1)\tau}$, $\boldsymbol{h}_{(\psi^*,\phi^*)(t)}^*$ is the globally optimal solution, and $||\cdot||_{l^{p}(0,t)}$ is the $l$-p norm on the function space over $(0,t)$.
\end{theorem}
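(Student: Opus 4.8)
The plan is to read the estimate as an input-to-state stability (ISS) bound for the receding-horizon closed loop, built on the linearized state-space representation guaranteed by Assumption~\ref{assum1}. First I would invoke Assumption~\ref{assum1} to write the feature dynamics as a linear SSM $\frac{d\boldsymbol{h}}{dt} = A\boldsymbol{h} + B\boldsymbol{u} + \boldsymbol{w}$, where $\boldsymbol{w}$ collects the bounded linearization residual (with $L$ the associated Lipschitz/bound constant) together with any exogenous disturbance; this is exactly what makes $||\boldsymbol{w}||_{l^{\infty}(0,t)}$ the natural perturbation measure appearing in the final bound. On this linear surrogate the infinite-horizon optimal control reduces to an LQR-type feedback whose closed loop is exponentially stable, $||\boldsymbol{h}_{\infty}(t)|| \le M_{\infty} e^{-\mu_{\infty} t}||\boldsymbol{h}(t_1)||$ with overshoot $M_{\infty}\ge 1$ and nominal rate $\mu_{\infty}$. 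These two facts supply the leading exponential term and isolate the rate that the receding-horizon controller must approximately recover.

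Next I would quantify the gap between the $M$-horizon feedback actually produced by $\ell^1_{\psi}$ and the ideal infinite-horizon feedback. Using the turnpike / receding-horizon sensitivity analysis of \citep{veldman2022local}, the finite-horizon value function and its induced feedback differ from their infinite-horizon counterparts by a quantity that decays exponentially in the tail length $T-\tau$ (the factor $2\mu_{\infty}$ arising because both the forward reachability map and the backward Riccati/adjoint flow contract at rate $\mu_{\infty}$); propagating this through the loop degrades the effective rate by $K_1 e^{-2\mu_{\infty}(T-\tau)}$, the first correction in $\mu$. I would then account for the dynamics not being exactly linear: a Grönwall estimate over one step of length $\tau$ shows that the linearization error (scaled by $L$) and its interaction with the feedback contribute the two further corrections $K_2 L$ and $KL(L+1)\tau e^{K(L+1)\tau}$, giving the stated $\mu = \mu_{\infty} - K_1 e^{-2\mu_{\infty}(T-\tau)} - K_2 L - KL(L+1)\tau e^{K(L+1)\tau}$.

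Finally I would assemble the pieces through a variation-of-constants (Duhamel) representation for the perturbed closed loop. The homogeneous part inherits exponential decay at the effective rate $\mu$, yielding $M_{\infty} e^{-\mu t}||\boldsymbol{h}(t_1)||$, while convolving the stable transition against $\boldsymbol{w}$ produces the gain $\int_0^t e^{-\mu(t-s)}\,ds = \frac{1-e^{-\mu t}}{\mu}$ multiplied by the input-channel constant $K(1+(L+1)\tau e^{K(L+1)\tau})$, whose one-step Grönwall factor matches the rate correction above. I expect the main obstacle to be the middle step: rigorously transferring the infinite-horizon exponential stability to the receding-horizon loop with an explicit horizon-dependent rate, and then bookkeeping the MPC approximation error, the linearization residual, and the one-step discretization error so that they combine into a single clean $\mu$ without double-counting. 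Guaranteeing $\mu>0$, so that the bound is genuinely contractive, forces $T-\tau$ large and $L,\tau$ small; I would record this as the quantitative content of the ``mild assumptions'' and verify it at the close of the argument.
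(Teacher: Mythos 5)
Your proposal follows essentially the same route as the paper's proof: linearize to the SSM of Assumption~\ref{assum1}, use the infinite-horizon LQR closed loop with $\|e^{A_{\infty}t}\|\leq M_{\infty}e^{-\mu_{\infty}t}$, invoke the Riccati/turnpike convergence of \citep{veldman2022local} to get the $K_1e^{-2\mu_{\infty}(T-\tau)}$ degradation, handle the nonlinear residual via the Lipschitz constant $L$ and a one-step Gr\"onwall bound over $\tau$ (the paper does this through the auxiliary mismatch $\boldsymbol{\epsilon}(t)=\tilde{\boldsymbol{h}}^*_T(t)-\boldsymbol{h}^*_{(\psi^*,\phi^*)}(t)$), and assemble everything with variation of constants into the ISS-type bound. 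The only slight deviation is your opening suggestion to fold the linearization residual into $\boldsymbol{w}$, which you correctly abandon later: as in the paper, $\boldsymbol{w}$ must remain solely the network approximation error while the $L$-scaled terms degrade the rate $\mu$ rather than the forcing term.
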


The proof can be seen in Appendix \ref{appendix:proof1}. When the linearization error, measured by $L$ defined in Assumption \ref{assum1}, is small and $T-\tau$ is large (i.e., a large $M$ in the $M$-horizon optimization), $\mu>0$. Hence, the first term on the Right-Hand-Side (RHS) of Equation \eqref{eqn:stability_error} exponentially decreases as $t$ increases. The second term on RHS is limited when $\boldsymbol{w}(t)$ and $L$ are small. Thus, $\boldsymbol{h}_{(\psi^*,\phi^*)}(t)$ gets stable to the origin exponentially. For the generalizability, we have:

\begin{theorem}[Generalizability]
    \label{theo2} Assume Assumption \ref{assum1} holds. Consider $T$, $\tau$, $K_2$, $\mu_{\infty}$, and $\boldsymbol{h}_{(\psi^*,\phi^*)}(t)$ defined in Theorem \ref{theo1} and let $\boldsymbol{u}_{(\psi^*,\phi^*)}(t)$ denote the optimal control action after NPC training in Algorithm \ref{alg:train_NPC_NPCF}. Let $\boldsymbol{u}^*_{\infty}(t)$ denote the optimal solution of applying the linear model in Assumption \ref{assum1} to the infinite-horizon minimization problem, defined in Equation (3)
    % \eqref{eqn:J_infty} 
    in Appendix A 
    % \ref{appendix:proof1}
    . Let $\boldsymbol{h}^*_{\infty}(t)$ denote the state controlled by $\boldsymbol{u}^*_{\infty}(t)$ using the non-linear model $f_{\phi^*}(\cdot)$. There exists a constant $K_3$ such that:
    \begin{equation}
    \begin{aligned}
    \label{eqn:generalizability}
||\boldsymbol{h}_{(\psi^*,\phi^*)}(t) - \boldsymbol{h}^*_{\infty}(t)|| + ||\boldsymbol{u}_{(\psi^*,\phi^*)}(t) - \boldsymbol{u}^*_{\infty}(t)|| \\ \leq K_3e^{-2\mu_{\infty}(T-\tau)}\Big(\frac{L+1}{\mu_{\infty}-K_2L}||\boldsymbol{h}||_{l^1(0,t)} 
+||\boldsymbol{h}(t)||\Big) 
\\ + K_3\tau e^{K_3(L+1)\tau}\frac{L+1}{\mu_{\infty}-K_2L}(L||\boldsymbol{h}||_{l^1(0,t)}+||\boldsymbol{w}||_{l^{\infty}(0,t)}).
\end{aligned}
    \end{equation}
\end{theorem}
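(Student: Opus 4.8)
The plan is to decompose the total error through a chain of comparison trajectories so as to isolate the two physically distinct contributions visible on the right-hand side of Equation \eqref{eqn:generalizability}: the \emph{finite-horizon truncation} (the $e^{-2\mu_{\infty}(T-\tau)}$ factor) and the \emph{linearization/discretization} error (the $\tau e^{K_3(L+1)\tau}$ Gr\"onwall factor). Following Assumption \ref{assum1}, I would write the dynamics as $\tfrac{d\boldsymbol{h}}{dt}=A\boldsymbol{h}+B\boldsymbol{u}+\boldsymbol{r}(\boldsymbol{h},\boldsymbol{u},t)$, where the residual $\boldsymbol{r}$ collects the linearization error and obeys a bound of the form $\|\boldsymbol{r}\|\le L\|\boldsymbol{h}\|+\|\boldsymbol{w}\|$. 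Then I introduce two auxiliary solutions: $(\boldsymbol{h}^{L}_{M},\boldsymbol{u}^{L}_{M})$, the optimum of the $M$-horizon problem under the \emph{linear} SSM, and $(\boldsymbol{h}^{L}_{\infty},\boldsymbol{u}^{*}_{\infty})$, the optimum of the infinite-horizon linear-quadratic (LQ) problem. A triangle inequality splits $\|\boldsymbol{u}_{(\psi^*,\phi^*)}-\boldsymbol{u}^*_{\infty}\|$ into $\|\boldsymbol{u}_{(\psi^*,\phi^*)}-\boldsymbol{u}^{L}_{M}\|$ (nonlinear versus linear at the same horizon) plus $\|\boldsymbol{u}^{L}_{M}-\boldsymbol{u}^*_{\infty}\|$ (linear, horizon truncation), with the state difference then recovered from the control difference by a single variation-of-constants step on the nonlinear flow, since $\boldsymbol{h}^*_{\infty}$ is by definition the nonlinear response to $\boldsymbol{u}^*_{\infty}$.

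For the infinite-horizon baseline I would solve the algebraic Riccati equation associated with the cost in Equations \eqref{eqn:loss_class}--\eqref{eqn:reg_action} to obtain the optimal feedback and certify that the closed loop decays exponentially at rate $\mu_{\infty}$. The truncation term $\|\boldsymbol{u}^{L}_{M}-\boldsymbol{u}^*_{\infty}\|$ is then controlled by the \emph{exponential turnpike} property of LQ model predictive control: writing the Pontryagin optimality system for both horizons, the state decays forward from $\boldsymbol{h}(t_1)$ while the costate decays backward from the terminal time, so their deviation scales as $e^{-2\mu_{\infty}(T-\tau)}$, where the factor $2$ reflects the two-sided decay and $T-\tau$ is the portion of the horizon not consumed by the single implemented step. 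I would invoke the local-convergence machinery of \cite{veldman2022local} at this stage; the weighting by $\|\boldsymbol{h}\|_{l^1(0,t)}$ and the terminal $\|\boldsymbol{h}(t)\|$ emerges when one expresses the trajectory difference through the fundamental matrix of the closed loop.

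The nonlinear perturbation $\|\boldsymbol{u}_{(\psi^*,\phi^*)}-\boldsymbol{u}^{L}_{M}\|$ requires a sensitivity analysis of the $M$-horizon optimizer with respect to the residual $\boldsymbol{r}$. Because only the first action is implemented, the relevant comparison lives on the single interval $[t_i,t_{i+1}]$ of length $\tau$; applying Gr\"onwall's inequality to the difference of the two flows (one driven by $A\boldsymbol{h}+B\boldsymbol{u}$, the other additionally by $\boldsymbol{r}$) with a Lipschitz constant of order $L+1$ produces the factor $\tau e^{K_3(L+1)\tau}$ and the source term $L\|\boldsymbol{h}\|_{l^1(0,t)}+\|\boldsymbol{w}\|_{l^{\infty}(0,t)}$. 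The denominator $\mu_{\infty}-K_2L$ appears because the perturbed problem inherits a degraded but still positive decay rate, exactly as in $\mu$ of Theorem \ref{theo1}, and it is this positivity that keeps the sensitivity bound finite. Summing the truncation and perturbation contributions and collecting the $\boldsymbol{h}^*_{\infty}$ state via the final variation-of-constants step yields Equation \eqref{eqn:generalizability}.

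The main obstacle I anticipate is the sensitivity step: establishing that the optimal-control \emph{map} of the $M$-horizon problem is Lipschitz in the dynamics perturbation requires strong convexity (second-order sufficient conditions) of the LQ cost uniformly in the horizon, together with controllability from Assumption \ref{assum1}, so that the Hamiltonian (Riccati) flow stays well-conditioned as $L$ grows. Guaranteeing $\mu_{\infty}-K_2L>0$ is precisely the regime in which this conditioning holds, allowing an implicit-function or contraction argument to close; handling it jointly with the turnpike estimate, rather than in isolation, is where the delicate bookkeeping lies.
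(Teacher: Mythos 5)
Your overall decomposition---a horizon-truncation term handled by Riccati/turnpike convergence and a linearization term handled by Gr\"onwall over the length-$\tau$ implemented interval---matches the paper's proof in its essential ingredients. The paper likewise uses the convergence of the Riccati differential equation solution $P(t)$ to the algebraic Riccati solution $P_{\infty}$ at rate $e^{-2\mu_{\infty}t}$ to produce the $e^{-2\mu_{\infty}(T-\tau)}$ factor, a Gr\"onwall estimate tied to the $\tau$-periodic closed loop to produce $\tau e^{K(L+1)\tau}$, the Lipschitz condition of Assumption \ref{assum1} to control the linearization residual, and a variation-of-constants-plus-Gr\"onwall step (the source of the $1/(\mu_{\infty}-K_2L)$ factor) combined with the feedback-law identity to pass between the state and control errors. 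The main organizational difference is that you run a triangle inequality on controls first and recover states afterwards, whereas the paper derives a single error ODE for $\boldsymbol{e}(t)=\boldsymbol{h}_{(\psi^*,\phi^*)}(t)-\boldsymbol{h}^*_{\infty}(t)$, coupled to the quantity $\boldsymbol{\epsilon}(t)=\tilde{\boldsymbol{h}}^*_{T}(t)-\boldsymbol{h}_{(\psi^*,\phi^*)}(t)$ already bounded in the proof of Theorem \ref{theo1}, and extracts the control bound at the very end.

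However, there is a genuine gap at precisely the step you flag as the main obstacle. You propose to bound the distance between the NPC-trained control $\boldsymbol{u}_{(\psi^*,\phi^*)}$ and the linear $M$-horizon LQ optimum $\boldsymbol{u}^{L}_{M}$ by a sensitivity analysis of the optimal-control map with respect to the nonlinear residual, requiring uniform second-order sufficient conditions and an implicit-function or contraction argument that you do not carry out. This step cannot be closed as stated: the NPC control is produced by gradient descent on a neural parameterization $\ell^1_{\psi}(\boldsymbol{z}_i)$ in Algorithm \ref{alg:train_NPC_NPCF}, not as the exact minimizer of the nonlinear $M$-horizon problem, so perturbation theory for exact optimizers does not apply to it directly. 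The paper sidesteps this entirely: the second condition of Assumption \ref{assum1} \emph{postulates} that the NPC-trained control is sufficiently close to the $M$-horizon optimal control of the SSM, i.e., $\boldsymbol{u}_{(\psi^*,\phi^*)}(t)\approx\tilde{\boldsymbol{u}}^*_{T}(t)$, which lets the paper write the NPC trajectory as the nonlinear flow driven by the linear MPC feedback and proceed with purely linear-quadratic machinery. Since the theorem's hypotheses include Assumption \ref{assum1}, you should invoke that condition rather than attempt to prove a strictly harder statement; with that substitution, the remainder of your plan goes through and essentially coincides with the paper's argument.
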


The proof is in Appendix \ref{appendix:proof2}. By Theorem \ref{theo1}, $||\boldsymbol{h}(t)||$ has an exponential decay to $0$. Moreover, $T-\tau$ is sufficiently large. Thus, the first term of RHS in Equation \eqref{eqn:generalizability} decreases exponentially, and the second term is small as long as $\boldsymbol{w}(t)$ is sufficiently small, demonstrating the high generalizability.

\section{Experiments}

\label{sec:exp}
\subsection{Settings}
\label{sec:setting}

\textbf{Datasets.} We use the following datasets for experiments. (1) \textbf{Synthetic dataset}. We create a toy example to validate the stability of our NPC framework. The specific data generation process is described in Appendix \ref{sec:synthe_data}. Fig. \ref{fig:syn_train} and \ref{fig:syn_test} visualize the training and the test time series, and different colors (blue and brown) represent different labels. In the test dataset, we introduce different levels of deviations (i.e., brown colors from light to dark) to evaluate the stability. (2) \textbf{Human Activity Recognition (HAR) dataset.} HAR data \citep{anguita2013public} has recordings of $30$ subjects from waist-mounted smartphone with embedded inertial sensors. Observations include linear acceleration and angular velocity, and there are $6$ different types of labels describing different activities. $(3)$ \textbf{UCR Time Series Archive.} The archive \citep{UCRArchive} contains $85$ different types of time series from diverse domains. The time-series length ranges from $60$ to $2700$, and the number of label classes ranges from $2$ to $60$. We randomly select $9$ datasets to test. $(4)$ \textbf{Photovoltaic (PV) datasets.} We introduce a publicly available Photovoltaic (PV) dataset \citep{boyd2016nist} about the sequential solar power generations. The values are largely determined by the continuous movement of the sun and the wind. These datasets are selected due to diversified applications, complex and continuous dynamics, and potentially irregular samples. 
%\textcolor{red}{This dataset, which contains sequential data on solar power generation, is used to evaluate the performance of our proposed methods. It implies that the dataset is not proprietary or restricted and can be accessed and used by other researchers or practitioners interested in similar studies.}

%These datasets are crucial to test time-series modeling for several reasons: (1) variety of applications: time series from HAR and UCR archive is comprehensive for benchmarking time series classification models. It contains $86$ different types of time series from diverse domains, and we randomly select $10$ of them. The PV data is also important for analyzing renewable energy with great applications to energy systems. (2) complex dynamics: The datasets include various types of dynamics, such as continuous movements (e.g., ECG datasets) and discrete actions (e.g., HAR), challenging the model to capture both short-term and long-term patterns effectively. (3) irregular sampling: Many real-world time-series datasets, like those in the UCR archive, are irregularly sampled, which tests the model's ability to handle non-uniform time intervals between observations.

\begin{figure}[h]
    \centering
    \begin{subfigure}{0.5\textwidth}
        \centering
        \includegraphics[width=\textwidth]{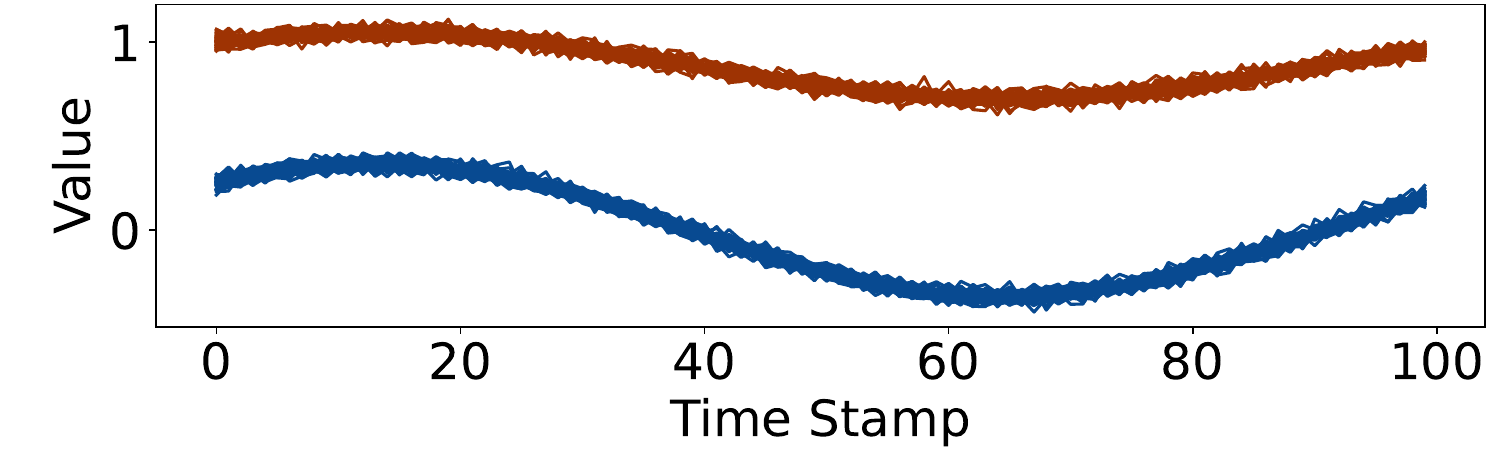}
        \caption{Train data of the toy example.}
        \label{fig:syn_train}
    \end{subfigure}
    \hspace{-2.5mm}
    \begin{subfigure}{0.5\textwidth}
        \centering
        \includegraphics[width=\textwidth]{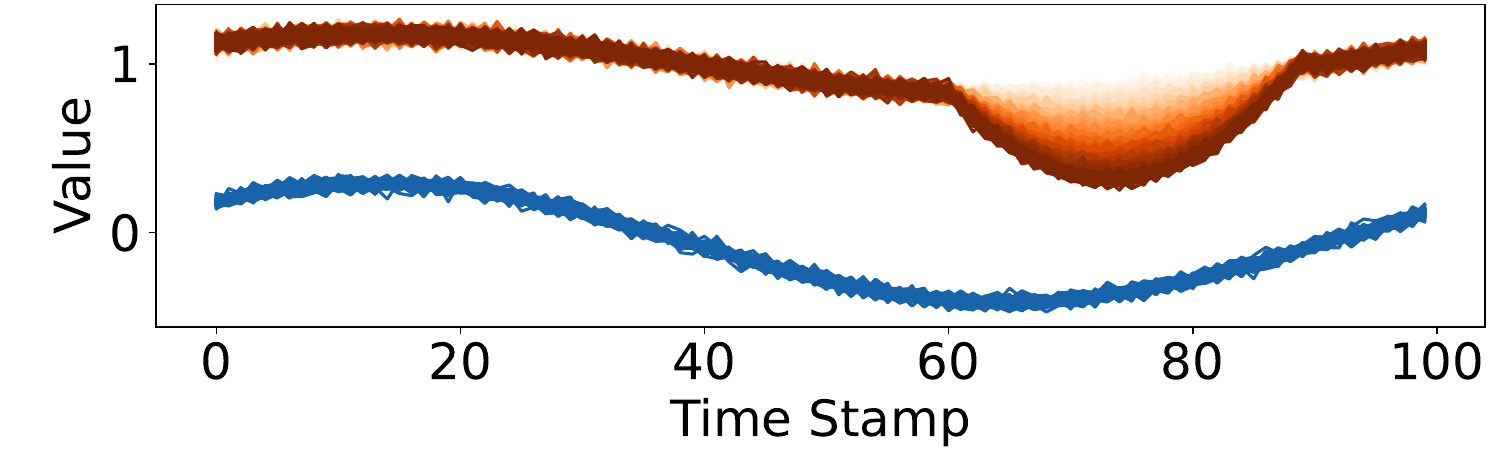}
        \caption{Test data of the toy example with deviations.}
        \label{fig:syn_test}
    \end{subfigure}
    \vskip\baselineskip
    \begin{subfigure}{0.5\textwidth}
        \centering
        \includegraphics[width=\textwidth]{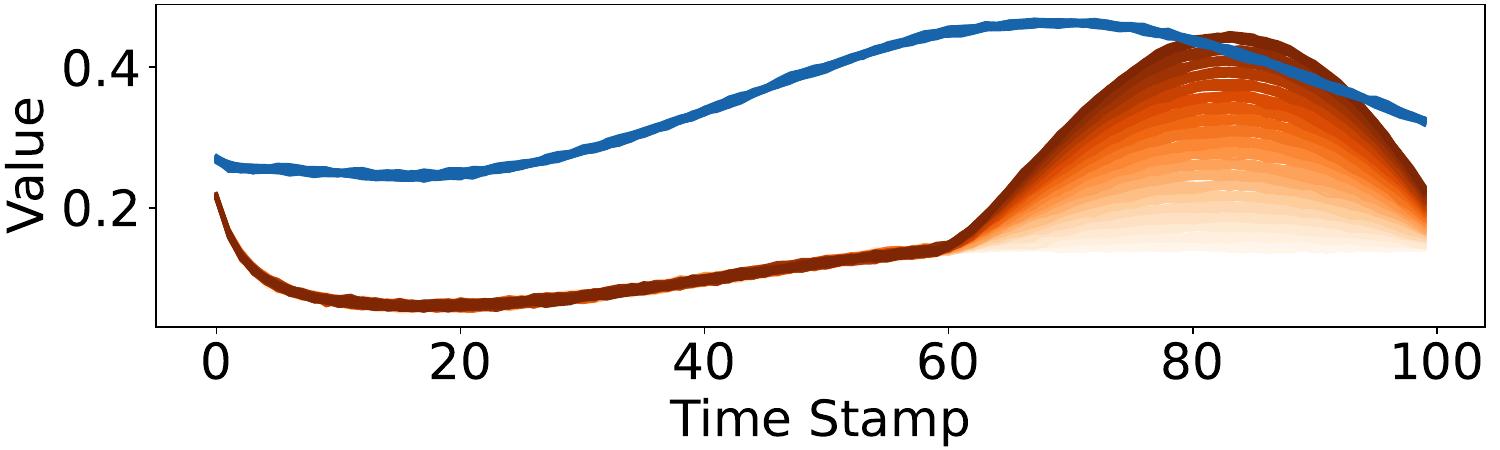}
        \caption{The evolution of $\boldsymbol{h}(t)$ in ODE-RNN for test sets. }
        \label{fig:ht_odernn}
    \end{subfigure}
        \hspace{-2mm}
    \begin{subfigure}{0.5\textwidth}
        \centering
        \includegraphics[width=\textwidth]{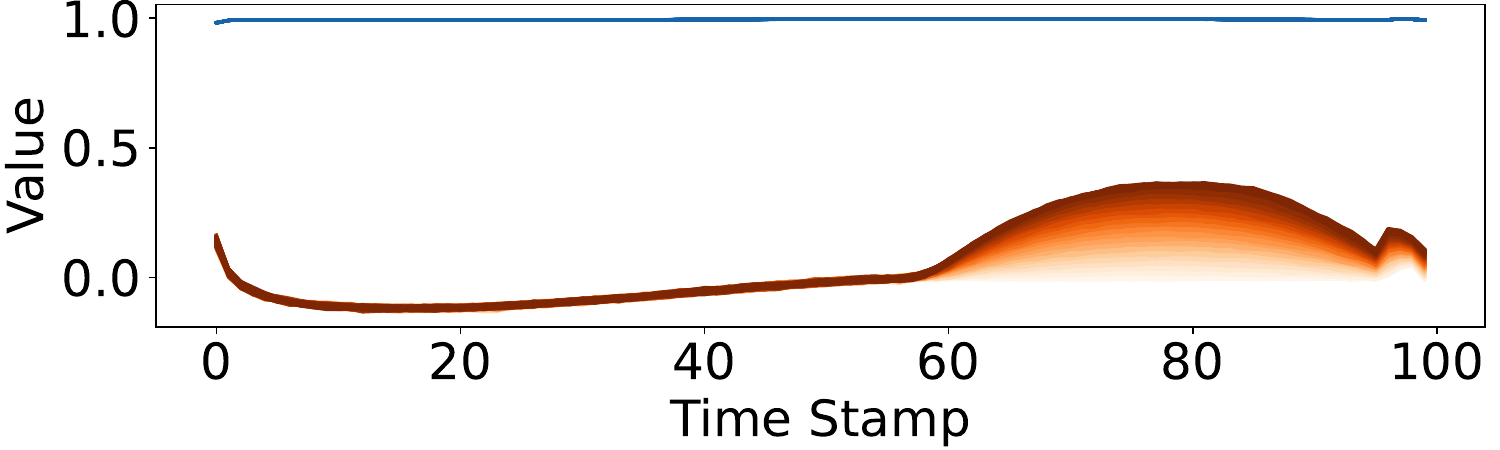}
        \caption{The evolution of $\boldsymbol{h}(t)$ in the proposed NPC for test sets.}
        \label{fig:ht_npc}
    \end{subfigure}
    \caption{Visualization of data and features in toy examples.}
    \label{fig:result_toy}
    \vspace{-4mm}
\end{figure}

\textbf{Benchmark methods}. The following methods are utilized as benchmarks. (1) RNN-$\Delta_t$. The time difference between every two observations, i.e., $\Delta_t$, is introduced to a classic RNN model \citep{che2018recurrent}. (2) RNN Decay (RNN-D). An exponential decay process is introduced to capture the dynamical changes of hidden states between observed timestamps in an RNN \citep{mozer2017discrete}. (3) ODE-RNN \citep{rubanova2019latent}. Neural ODE is embedded to learn the dynamical function of hidden states between every two observed timestamps. (4) Neural CDE (NCDE) \citep{kidger2020neural}. Neural CDE creates a continuous data path to control the evolution of the state's ODE flow. (5) ContiFormer (ContiF.) \citep{chen2024contiformer}. ContiFormer generalizes Neural CDE control as a continuous attention mechanism for integrating dynamical information. \textbf{For our proposed NPC framework, we utilize an RNN as the discrete-time model and an ODE-RNN as the continuous-time model}.

\textbf{Implementing details}. For reproducibility, we describe the implementation details in Appendix \ref{sec:implementation}.

\begin{table*}[h]
\caption{Classification test accuracy ($\%$) with mean $\pm$ standard deviation for different baselines.
}
\label{table:class_accu}
\begin{center}
% \begin{small}
%\begin{sc}
\resizebox{2 \columnwidth}{!}{
\begin{tabular}{l|cccccccccccc}
\toprule
&HAR & Earth   &  ECG&  Car   & WorSyn. & Trace & Plane & Fish & Symbol & SynCon.\\
\midrule
ODE-RNN & $63.1 \pm 0.09$ & $82.3 \pm 0.08$  & $91.1 \pm 0.11$ & $66.7 \pm 0.12$ & $44.5 \pm 0.10$& $97.0 \pm 0.08$& $99.0 \pm 0.09$ & $64.0 \pm 0.12$ & $51.8 \pm 0.13$ & $97.3 \pm 0.11$ \\
RNN-$\Delta_t$ & $60.9 \pm 0.11$ & $81.1 \pm 0.12$ & $90.7 \pm 0.12$ & $46.7 \pm 0.13$ & $43.7 \pm 0.14$ & $69.0 \pm 0.20$ & $83.1 \pm 0.13$ & $65.1 \pm 0.15$ & $85.9 \pm 0.11$ & $96.7 \pm 0.10$ \\
RNN-D  & $55.9 \pm 0.18$ & $82.0 \pm 0.13$  & $58.4 \pm 0.09$ & $21.7 \pm 0.10$  & $46.6 \pm 0.12$ & $98.0 \pm 0.13$ & $77.0\pm 0.14$ & $74.9 \pm 0.09$& $78.4 \pm 0.08$& $94.3 \pm 0.12$\\
NCDE         & $31.8 \pm 0.13$  & $70.5 \pm 0.10$      & $75.7\pm 0.19$ & $25.0 \pm 0.14$ & $24.5 \pm 0.15$ & $59.0 \pm 0.18$& $41.9 \pm 0.11$ & $23.4 \pm 0.09$ & $67.4 \pm 0.13$ & $57.0 \pm 0.12$\\
Contif.  & $58.7 \pm 0.11$     & $82.0 \pm 0.12$   & $\bm{93.9 \pm 0.14}$  & $21.7 \pm 0.21$  & $21.9 \pm 0.11$  & $49.0 \pm 0.17$ &  $96.2 \pm 0.14$  & $12.6 \pm 0.10$ & $85.6 \pm 0.11$  & $89.7 \pm 0.13$ \\
\midrule
NPC & $ \bm{ 70.1 \pm 0.11}$  & $\bm{85.3 \pm 0.09}$ &  $91.1 \pm 0.12$ & $\bm{76.7 \pm 0.10}$ & $\bm{50.6 \pm 0.11}$   & $\bm{99.8 \pm 0.08}$ & $\bm{99.7 \pm 0.11}$ & $\bm{77.7 \pm 0.09}$ & $\bm{86.5 \pm 0.11}$ & $\bm{99.9 \pm 0.07}$ \\
\bottomrule
\end{tabular}
}
%\end{sc}
\end{center}
\end{table*}

\subsection{Verification of the High Stability in NPC}

For the synthetic dataset in Fig. \ref{fig:syn_train} and \ref{fig:syn_test}, described in Setting, we visualize feature flow $\boldsymbol{h}(t)$ in for ODE-RNN and NPC, shown in Fig. \ref{fig:ht_odernn} and \ref{fig:ht_npc}, respectively. We first analyze the models and observe that, for this binary classification task, ODE-RNN uses a decision boundary of $\boldsymbol{0.17}$, whereas NPC adopts a more standard threshold of $\boldsymbol{0.5}$. Comparing these boundaries with the end-point feature values (i.e., $\boldsymbol{h}(t_{100})$), it's clear that the NPC classifier has a larger classification margin that leads to more robust results. 
 
More specifically, for ODE-RNN's result in Fig. \ref{fig:ht_odernn}, the deviations of test data cause $\boldsymbol{h}(t)$ to change with a high sensitivity. With test deviations, some feature flow's (brown lines) end-point feature value is larger than $0.17$ and wrongly labeled as blue, leading to $87.2\%$ test accuracy. %\hh{does this correspond to the area in 3c where blue and brown cross? consider to make this point more explicit, either in the figure or in the text} 
For our NPC's result in Fig. \ref{fig:ht_npc}, in the beginning, the blue and the brown features are quickly set apart and converge to equilibrium points, i.e., $1.0$ and $0.0$, respectively. This is because NPC looks ahead with the current dynamics and concludes that features should get stable to equilibrium points to minimize the error. When disturbances appear at timestamp $60$, the impacted features have much smaller deviations compared to those of DOE-RNN, implying that it's harder to leave the equilibrium point. %The last $5$ horizons are extrapolated features under predictive control actions, which will eventually converge again. 
Consequently, the brown endpoints are smaller than the boundary, i.e., $0.5$, yielding $100\%$ test accuracy.

\subsection{Stability Guarantees General Classification Improvements on Diversified Domains}

We evaluate the overall performance for time-series classification. $80\%$ of the data is randomly dropped to create irregularly sampled observations. Then, Table \ref{table:class_accu} demonstrates the result. In most cases, our NPC has an accuracy increase of $2\%\sim 15\%$, compared to the state-of-the-art. In particular, when the train/test data have a significant distribution discrepancy, like HAR, CAR, and WorSyn., our methods perform much better ($6\%\sim 15\%$), which indicates the robustness against the data deviations due to the high stability. For the dataset ECG5000, ContiFormer performs slightly better. This could happen because our tested NPC is based on ODE-RNN, which may underperform ContiFormer when the time series is relatively long. However, ContiFormer can be utilized in our NPC framework.

\begin{figure*}[h!]
\centering
\includegraphics[width=2\columnwidth]{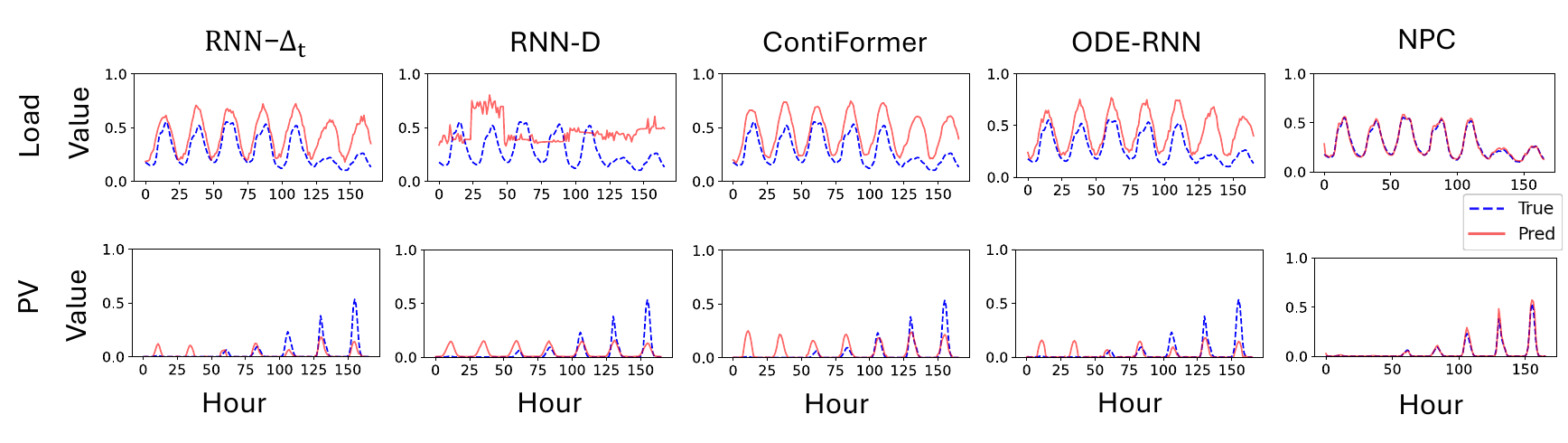}
\centering
\caption{Extrapolation results for $168$h data in a week.}
\label{fig:timeseries_res}
\vspace{-3mm}
\end{figure*}

\subsection{Generalizability Leads to Accurate Interpolation and Extrapolation}
\label{sec:exp_reg}

\begin{figure}[h!]
    \centering
    \subfloat[Interpolation result of ODE-RNN.]{%
        \includegraphics[width = 1\columnwidth, trim=4 0 8 4, clip]{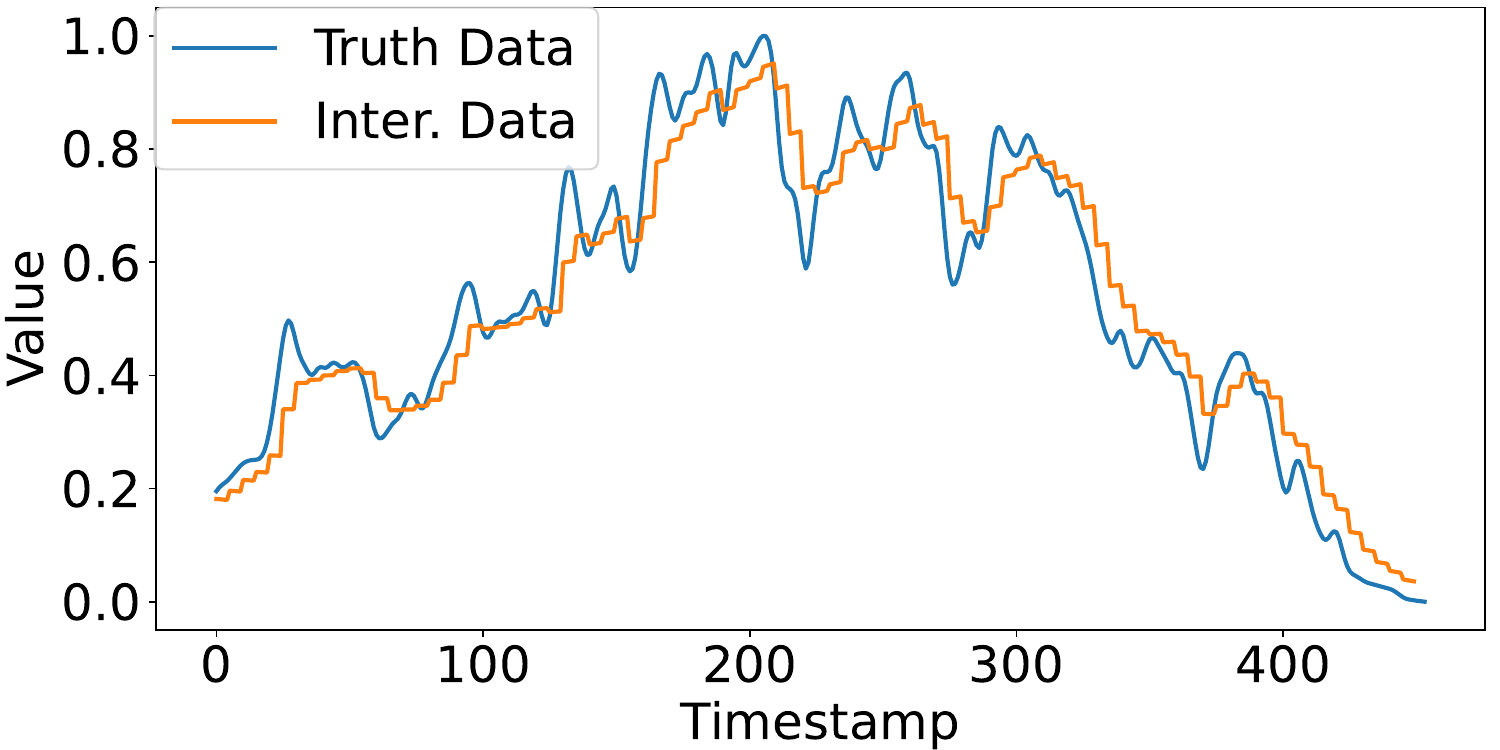}%
        \label{fig:subfig1}%
    }\\
    \subfloat[Interpolation result of NPC.]{%
        \includegraphics[width = 1\columnwidth, trim=4 0 8 4, clip]{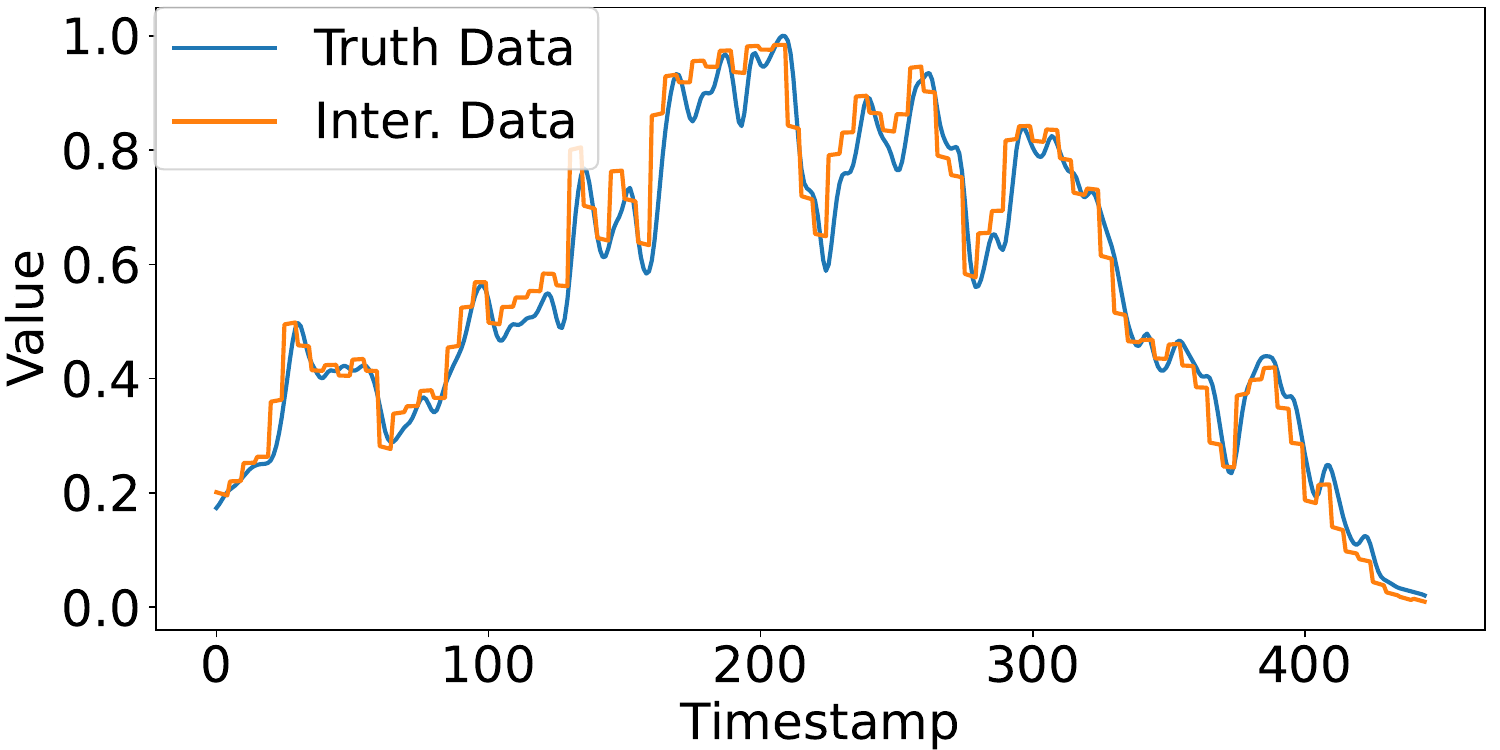}%
        \label{fig:subfig2}%
    }
    \caption{Interpolation results for $420$min data.}
    \label{fig:inter}
\end{figure}

\renewcommand{\arraystretch}{1.05}
\begin{table}[h]
    \centering
    \caption{Interpolation and extrapolation results on PV datasets.}
    \label{tab:inter_extra}
    \resizebox{\columnwidth}{!}{%
    \begin{tabular}{c|c|cccc|c}
        \toprule
        Drop Rate & Metric & ODE-RNN & RNN-$\Delta_t$ & RNN-D & Contif. & NPC \\
        \midrule
        \midrule
        \multicolumn{2}{c|}{\textbf{Interpolation}} & \multicolumn{5}{c}{} \\
        \midrule
        40\% & RMSE       & 0.044 & 0.080 & 0.075 & 0.051 & \textbf{0.037} \\
             & MAPE (\%)  & 2.13  & 3.89  & 3.76  & 2.66  & \textbf{1.79}  \\
        \midrule
        60\% & RMSE       & 0.058 & 0.093 & 0.068 & 0.064 & \textbf{0.041} \\
             & MAPE (\%)  & 2.94  & 4.10  & 3.28  & 3.21  & \textbf{2.21}  \\
        \midrule
        80\% & RMSE       & 0.060 & 0.110 & 0.087 & 0.069 & \textbf{0.051} \\
             & MAPE (\%)  & 3.06  & 4.54  & 4.17  & 3.44  & \textbf{2.37}  \\
        \midrule
        \midrule
        \multicolumn{2}{c|}{\textbf{Extrapolation}} & \multicolumn{5}{c}{} \\
        \midrule
        40\% & RMSE       & 0.071 & 0.088 & 0.092 & 0.033 & \textbf{0.020} \\
             & MAPE (\%)  & 4.17  & 4.86  & 5.21  & 2.25  & \textbf{1.65}  \\
        \midrule
        60\% & RMSE       & 0.075 & 0.094 & 0.098 & 0.048 & \textbf{0.026} \\
             & MAPE (\%)  & 4.46  & 5.22  & 5.51  & 3.04  & \textbf{1.98}  \\
        \midrule
        80\% & RMSE       & 0.089 & 0.120 & 0.100 & 0.055 & \textbf{0.038} \\
             & MAPE (\%)  & 4.86  & 6.53  & 5.73  & 3.19  & \textbf{2.41}  \\
        \bottomrule
    \end{tabular}
    }
\end{table}

Next, we test regression tasks, i.e., interpolation and extrapolation, on PV datasets with a drop rate in $\{40\%, 60\%, 80\%\}$.
When dropping $80\%$ data, Fig. \ref{fig:inter} visualizes the interpolated and the true data using ODE-RNN and our NPC (see Appendix E
% \ref{sec:additional_inter} 
for more results) for PV datasets for several hours, where the x-axis unit is minutes. Obviously, the interpolated data from NPC are much closer to the ground truth. This is because, in NPC, the additional discrete RNN adjusts the interpolation of ODE-RNN by minimizing the $M$-horizon predictions. This gives much richer information for current interpolation and approximates the infinite-horizon result. Table \ref{tab:inter_extra} exhibits averaged results for different methods. NPC gains consistent and significant improvements under various drop rates. Moreover, we plot the extrapolation results for PV and load datasets in Fig. \ref{fig:timeseries_res}. They cover a week's data, where the x-axis unit is hours. The results demonstrate that NPC methods can successfully approximate the short-term dynamics and predict the most accurate results. In particular, we note that from $120$h to $168$h (i.e., the weekends), load and PV have gone through a distributional shift. Under this condition, our discrete-time model in NPC can sense the context change and adaptively change the flow evolution in the continuous-time model, thus leading to accurate results. For example, we can observe different dynamics between weekdays and weekends.

\subsection{Sensitivity and Efficiency Analysis}
\label{sec:effi}

We conduct sensitivity analysis with respect to the horizon number $M$. We utilize the interpolation problem as an example and vary $M\in\{2,\cdots, 8\}$. Fig \ref{fig:sensi} illustrates the RMSE with respect to $M$ for the NPC method under different data drop rates. At first glance, the results seemingly violate the theory that the larger $M$ is, the better. In particular, we can observe that the optimal $M$ in Fig \ref{fig:sensi} shifts to the right as the drop rate decreases. We attribute the phenomena to the fact that the denser the data are, the easier it is to learn $f_{\phi}(\cdot)$. Hence, for dense time series, we can assign a larger $M$ to attain convergence in Theorem \ref{theo1} and \ref{theo2}. However, for sparse data, if $M$ is too large, the $M$-horizon minimization and the learning $f_{\phi}(\cdot)$ negatively affect each other at initial iterations due to random parameter initializations. This suggests a novel future direction to improve the training algorithm. 
\begin{figure}[h!]
  \begin{center}
    \includegraphics[width = 1\columnwidth, trim=5 14 6 4, clip]{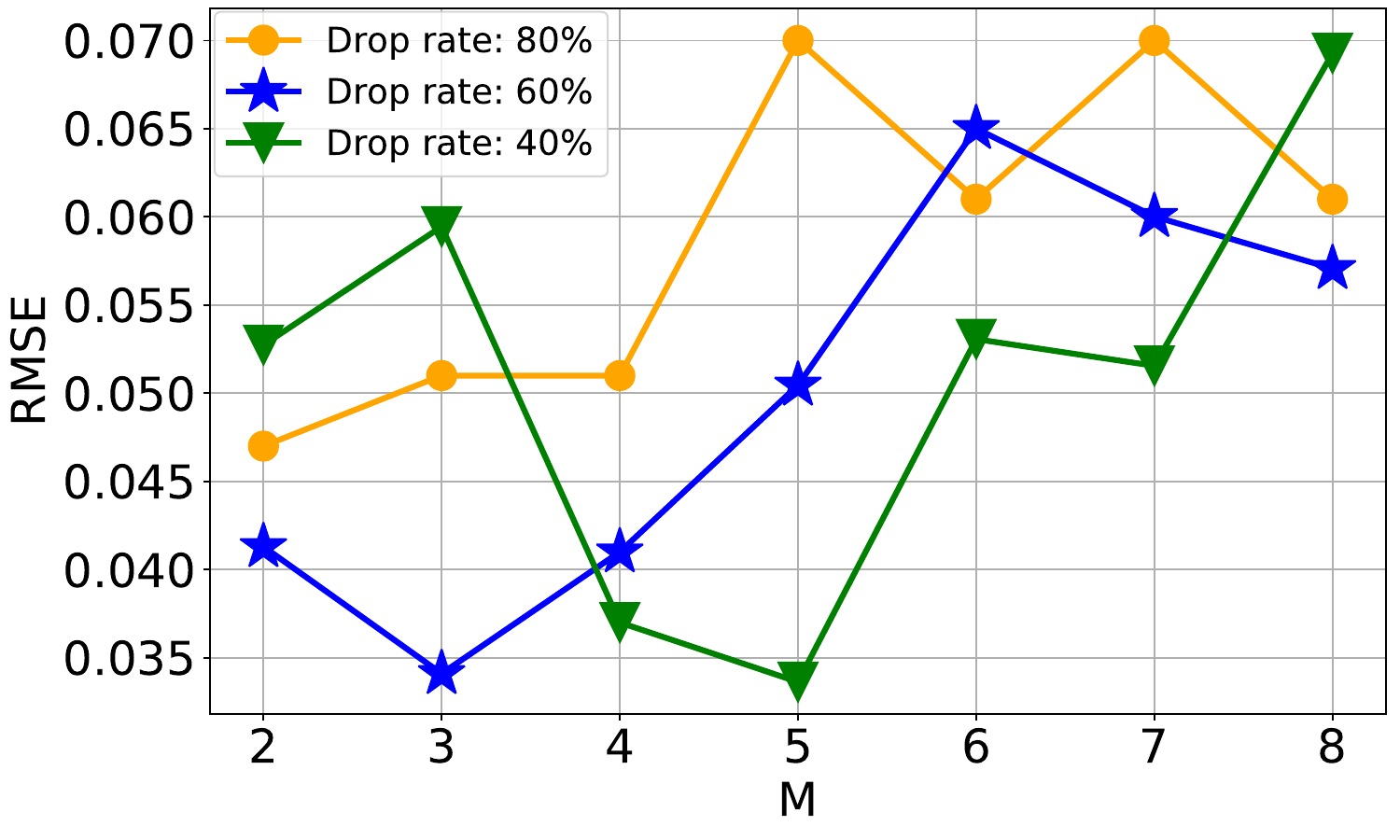}
  \end{center}
  \caption{The sensitivity analysis}
  \label{fig:sensi}
\end{figure}

In the training phase, compared to ODE-RNN, NPC requires around $N(M-1)$ more iterations for solving an ODE. 
Improving the training efficiency is listed as future work in Conclusion. However, in the testing phase, only $1$ out of $M$ action is conducted each time. Hence, NPC has comparable test efficiency. For example, we consider the Car dataset with $(60,73)$ sample numbers and length, respectively. The test time is listed in Table \ref{tab:time_comparison}. NPC is around three times that of ODE-RNN but still affordable for real-time predictions. ContiFormer needs to compute expensive continuous attention and has a much longer test time.

%This is because that the state dynamics $f_{\phi}(\cdot)$ is not accurate at the beginning, making the learning dynamics complicated for the coordinated model.wraptable

%\begin{table}[5]{r}{0.7\textwidth}

\begin{table}[h]
\caption{Test time (s) for different methods.}
\label{tab:time_comparison}
\begin{center}
%\begin{small}
%\begin{sc}
\resizebox{\columnwidth}{!}{
    \begin{tabular}{@{}l*{7}{c}@{}}
    % \begin{tabular}
    \toprule
    Method & RNN-$\Delta_t$ & RNN-D & ODE-RNN & NCDE & ContiF. & NPC  \\
    \midrule
    Time (s) & 0.033 & 0.031 & 0.121 & 0.034 & 3.609 & 0.389  \\
    \bottomrule
\end{tabular}
}
%\end{sc}
%\end{small}
\end{center}
\end{table}

\section{Related Work}
\label{sec:work}

\textbf{Control Theory-based Deep Learning Models}. Several recent studies have shed light on applying insightful control theories to DL. Inspired by the fact that Resnet \citep{he2016deep} and Neural ODE are dynamical functions, their training procedure becomes an optimal control problem where the parameters of neural networks are control variables \citep{rodriguez2022lyanet}. Novel training methods are proposed according to mature control analyses like maximum principle \citep{benning2019deep,li2018maximum,zhang2019you,seidman2020robust}, mean-field theory \citep{liu2019deep,weinan2018mean}, feedback control \citep{chalvidal2020go}, and Lyapunov analysis \citep{rodriguez2022lyanet,kang2021stable}. However, these analyses are hardly applicable to time series. In particular, sequential control with temporal information is needed. To chase optimal sequential control and simultaneously guarantee stability or generalizability, MPC is a natural choice.

\textbf{Deep Learning for Non-linear Control Systems}. There is a different domain that exploits well-trained DL models for non-linear control in physical systems. For instance, the total deviations from a desired trajectory should be minimized in a vehicle trajectory tracking problem \citep{zhang2021trajectory}. In these systems, DL models are utilized to approximate unknown system dynamics for MPC formulations \citep{zhang2021trajectory,luo2023model,pan2011model,chee2022knode}. This suggests the high potential of combining MPC and DL.

\section{Conclusion, Limitation, and Future Work}
\label{sec:con}

We propose NPC, a novel method to coordinate arbitrary discrete- and continuous-time DL models to efficiently utilize short-term dynamics and long-term patterns. Moreover, our model can provably achieve high stability and generalizability. The target is an infinite-horizon cost minimization. Grounded in the optimal control theory, we apply the finite-horizon relaxation with an exponential convergence. Furthermore, a feedback mechanism is added as additional information to improve the convergence. The framework is simple yet highly effective, gaining the best performance on various time-series tasks. The limitation of NPC is the high training time due to the additional $\mathcal{O}(N\cdot M)$ ODE computations and minimization. In the future, we could address the issue by $(1)$ selectively conducting the $M$-horizon optimization and $(2)$ employing SSM instead of Neural ODE to capture the dynamics in the continuous-time model. Then, fast ODE inferences could be achieved based on Kalman filter \citep{schirmer2022modeling,ansari2023neural} or Fast Fourier Transform (FFT) \citep{fu2022hungry}.

\bibliographystyle{ACM-Reference-Format}
\bibliography{acmart.bib}

%%% -*-BibTeX-*-
%%% Do NOT edit. File created by BibTeX with style
%%% ACM-Reference-Format-Journals [18-Jan-2012].

\begin{thebibliography}{65}

%%% ====================================================================
%%% NOTE TO THE USER: you can override these defaults by providing
%%% customized versions of any of these macros before the \bibliography
%%% command.  Each of them MUST provide its own final punctuation,
%%% except for \shownote{}, \showDOI{}, and \showURL{}.  The latter two
%%% do not use final punctuation, in order to avoid confusing it with
%%% the Web address.
%%%
%%% To suppress output of a particular field, define its macro to expand
%%% to an empty string, or better, \unskip, like this:
%%%
%%% \newcommand{\showDOI}[1]{\unskip}   % LaTeX syntax
%%%
%%% \def \showDOI #1{\unskip}           % plain TeX syntax
%%%
%%% ====================================================================

\ifx \showCODEN    \undefined \def \showCODEN     #1{\unskip}     \fi
\ifx \showDOI      \undefined \def \showDOI       #1{#1}\fi
\ifx \showISBNx    \undefined \def \showISBNx     #1{\unskip}     \fi
\ifx \showISBNxiii \undefined \def \showISBNxiii  #1{\unskip}     \fi
\ifx \showISSN     \undefined \def \showISSN      #1{\unskip}     \fi
\ifx \showLCCN     \undefined \def \showLCCN      #1{\unskip}     \fi
\ifx \shownote     \undefined \def \shownote      #1{#1}          \fi
\ifx \showarticletitle \undefined \def \showarticletitle #1{#1}   \fi
\ifx \showURL      \undefined \def \showURL       {\relax}        \fi
% The following commands are used for tagged output and should be
% invisible to TeX
\providecommand\bibfield[2]{#2}
\providecommand\bibinfo[2]{#2}
\providecommand\natexlab[1]{#1}
\providecommand\showeprint[2][]{arXiv:#2}

\bibitem[\protect\citeauthoryear{Anguita, Ghio, Oneto, Parra, Reyes-Ortiz, et~al\mbox{.}}{Anguita et~al\mbox{.}}{2013}]%
        {anguita2013public}
\bibfield{author}{\bibinfo{person}{Davide Anguita}, \bibinfo{person}{Alessandro Ghio}, \bibinfo{person}{Luca Oneto}, \bibinfo{person}{Xavier Parra}, \bibinfo{person}{Jorge~Luis Reyes-Ortiz}, {et~al\mbox{.}}} \bibinfo{year}{2013}\natexlab{}.
\newblock \showarticletitle{A public domain dataset for human activity recognition using smartphones.}. In \bibinfo{booktitle}{\emph{Esann}}, Vol.~\bibinfo{volume}{3}. \bibinfo{pages}{3}.
\newblock


\bibitem[\protect\citeauthoryear{Ansari, Heng, Lim, and Soh}{Ansari et~al\mbox{.}}{2023}]%
        {ansari2023neural}
\bibfield{author}{\bibinfo{person}{Abdul~Fatir Ansari}, \bibinfo{person}{Alvin Heng}, \bibinfo{person}{Andre Lim}, {and} \bibinfo{person}{Harold Soh}.} \bibinfo{year}{2023}\natexlab{}.
\newblock \showarticletitle{Neural continuous-discrete state space models for irregularly-sampled time series}. In \bibinfo{booktitle}{\emph{International Conference on Machine Learning}}. PMLR, \bibinfo{pages}{926--951}.
\newblock


\bibitem[\protect\citeauthoryear{Asikis, B{\"o}ttcher, and Antulov-Fantulin}{Asikis et~al\mbox{.}}{2022}]%
        {asikis2022neural}
\bibfield{author}{\bibinfo{person}{Thomas Asikis}, \bibinfo{person}{Lucas B{\"o}ttcher}, {and} \bibinfo{person}{Nino Antulov-Fantulin}.} \bibinfo{year}{2022}\natexlab{}.
\newblock \showarticletitle{Neural ordinary differential equation control of dynamics on graphs}.
\newblock \bibinfo{journal}{\emph{Physical Review Research}} \bibinfo{volume}{4}, \bibinfo{number}{1} (\bibinfo{year}{2022}), \bibinfo{pages}{013221}.
\newblock


\bibitem[\protect\citeauthoryear{Ball}{Ball}{1977}]%
        {ball1977strongly}
\bibfield{author}{\bibinfo{person}{John~M Ball}.} \bibinfo{year}{1977}\natexlab{}.
\newblock \showarticletitle{Strongly continuous semigroups, weak solutions, and the variation of constants formula}.
\newblock \bibinfo{journal}{\emph{Proc. Amer. Math. Soc.}} \bibinfo{volume}{63}, \bibinfo{number}{2} (\bibinfo{year}{1977}), \bibinfo{pages}{370--373}.
\newblock


\bibitem[\protect\citeauthoryear{Bejarano, Brunke, and Schoellig}{Bejarano et~al\mbox{.}}{2023}]%
        {bejarano2023multi}
\bibfield{author}{\bibinfo{person}{Federico~Pizarro Bejarano}, \bibinfo{person}{Lukas Brunke}, {and} \bibinfo{person}{Angela~P Schoellig}.} \bibinfo{year}{2023}\natexlab{}.
\newblock \showarticletitle{Multi-Step Model Predictive Safety Filters: Reducing Chattering by Increasing the Prediction Horizon}. In \bibinfo{booktitle}{\emph{2023 62nd IEEE Conference on Decision and Control (CDC)}}. IEEE, \bibinfo{pages}{4723--4730}.
\newblock


\bibitem[\protect\citeauthoryear{Benner and Mena}{Benner and Mena}{2018}]%
        {benner2018numerical}
\bibfield{author}{\bibinfo{person}{Peter Benner} {and} \bibinfo{person}{Hermann Mena}.} \bibinfo{year}{2018}\natexlab{}.
\newblock \showarticletitle{Numerical solution of the infinite-dimensional LQR problem and the associated Riccati differential equations}.
\newblock \bibinfo{journal}{\emph{Journal of Numerical Mathematics}} \bibinfo{volume}{26}, \bibinfo{number}{1} (\bibinfo{year}{2018}), \bibinfo{pages}{1--20}.
\newblock


\bibitem[\protect\citeauthoryear{Benning, Celledoni, Ehrhardt, Owren, and Sch{\"o}nlieb}{Benning et~al\mbox{.}}{2019}]%
        {benning2019deep}
\bibfield{author}{\bibinfo{person}{Martin Benning}, \bibinfo{person}{Elena Celledoni}, \bibinfo{person}{Matthias~J Ehrhardt}, \bibinfo{person}{Brynjulf Owren}, {and} \bibinfo{person}{Carola-Bibiane Sch{\"o}nlieb}.} \bibinfo{year}{2019}\natexlab{}.
\newblock \showarticletitle{Deep learning as optimal control problems: Models and numerical methods}.
\newblock \bibinfo{journal}{\emph{arXiv preprint arXiv:1904.05657}} (\bibinfo{year}{2019}).
\newblock


\bibitem[\protect\citeauthoryear{Boyd}{Boyd}{2016}]%
        {boyd2016nist}
\bibfield{author}{\bibinfo{person}{Matthew Boyd}.} \bibinfo{year}{2016}\natexlab{}.
\newblock \showarticletitle{NIST Weather Station for Photovoltaic and Building System Research}.
\newblock \bibinfo{journal}{\emph{National Institute of Standards and Technology, Gaithersburg, MD, Technical Note}} \bibinfo{number}{1913} (\bibinfo{year}{2016}).
\newblock


\bibitem[\protect\citeauthoryear{Busoniu, Babuska, De~Schutter, and Ernst}{Busoniu et~al\mbox{.}}{2017}]%
        {busoniu2017reinforcement}
\bibfield{author}{\bibinfo{person}{Lucian Busoniu}, \bibinfo{person}{Robert Babuska}, \bibinfo{person}{Bart De~Schutter}, {and} \bibinfo{person}{Damien Ernst}.} \bibinfo{year}{2017}\natexlab{}.
\newblock \bibinfo{booktitle}{\emph{Reinforcement learning and dynamic programming using function approximators}}.
\newblock \bibinfo{publisher}{CRC press}.
\newblock


\bibitem[\protect\citeauthoryear{Callier, Winkin, and Willems}{Callier et~al\mbox{.}}{1994}]%
        {callier1994convergence}
\bibfield{author}{\bibinfo{person}{Frank~M Callier}, \bibinfo{person}{Joseph Winkin}, {and} \bibinfo{person}{Jacques~L Willems}.} \bibinfo{year}{1994}\natexlab{}.
\newblock \showarticletitle{Convergence of the time-invariant Riccati differential equation and LQ-problem: mechanisms of attraction}.
\newblock \bibinfo{journal}{\emph{International journal of control}} \bibinfo{volume}{59}, \bibinfo{number}{4} (\bibinfo{year}{1994}), \bibinfo{pages}{983--1000}.
\newblock


\bibitem[\protect\citeauthoryear{Chalvidal, Ricci, VanRullen, and Serre}{Chalvidal et~al\mbox{.}}{2020}]%
        {chalvidal2020go}
\bibfield{author}{\bibinfo{person}{Mathieu Chalvidal}, \bibinfo{person}{Matthew Ricci}, \bibinfo{person}{Rufin VanRullen}, {and} \bibinfo{person}{Thomas Serre}.} \bibinfo{year}{2020}\natexlab{}.
\newblock \showarticletitle{Go with the flow: Adaptive control for neural ODEs}.
\newblock \bibinfo{journal}{\emph{arXiv preprint arXiv:2006.09545}} (\bibinfo{year}{2020}).
\newblock


\bibitem[\protect\citeauthoryear{Che, Purushotham, Cho, Sontag, and Liu}{Che et~al\mbox{.}}{2018}]%
        {che2018recurrent}
\bibfield{author}{\bibinfo{person}{Zhengping Che}, \bibinfo{person}{Sanjay Purushotham}, \bibinfo{person}{Kyunghyun Cho}, \bibinfo{person}{David Sontag}, {and} \bibinfo{person}{Yan Liu}.} \bibinfo{year}{2018}\natexlab{}.
\newblock \showarticletitle{Recurrent neural networks for multivariate time series with missing values}.
\newblock \bibinfo{journal}{\emph{Scientific reports}} \bibinfo{volume}{8}, \bibinfo{number}{1} (\bibinfo{year}{2018}), \bibinfo{pages}{6085}.
\newblock


\bibitem[\protect\citeauthoryear{Chee, Jiahao, and Hsieh}{Chee et~al\mbox{.}}{2022}]%
        {chee2022knode}
\bibfield{author}{\bibinfo{person}{Kong~Yao Chee}, \bibinfo{person}{Tom~Z Jiahao}, {and} \bibinfo{person}{M~Ani Hsieh}.} \bibinfo{year}{2022}\natexlab{}.
\newblock \showarticletitle{Knode-mpc: A knowledge-based data-driven predictive control framework for aerial robots}.
\newblock \bibinfo{journal}{\emph{IEEE Robotics and Automation Letters}} \bibinfo{volume}{7}, \bibinfo{number}{2} (\bibinfo{year}{2022}), \bibinfo{pages}{2819--2826}.
\newblock


\bibitem[\protect\citeauthoryear{Chen, Rubanova, Bettencourt, and Duvenaud}{Chen et~al\mbox{.}}{2018}]%
        {chen2018neural}
\bibfield{author}{\bibinfo{person}{Ricky~TQ Chen}, \bibinfo{person}{Yulia Rubanova}, \bibinfo{person}{Jesse Bettencourt}, {and} \bibinfo{person}{David~K Duvenaud}.} \bibinfo{year}{2018}\natexlab{}.
\newblock \showarticletitle{Neural ordinary differential equations}.
\newblock \bibinfo{journal}{\emph{Advances in neural information processing systems}}  \bibinfo{volume}{31} (\bibinfo{year}{2018}).
\newblock


\bibitem[\protect\citeauthoryear{Chen, Keogh, Hu, Begum, Bagnall, Mueen, and Batista}{Chen et~al\mbox{.}}{2015}]%
        {UCRArchive}
\bibfield{author}{\bibinfo{person}{Yanping Chen}, \bibinfo{person}{Eamonn Keogh}, \bibinfo{person}{Bing Hu}, \bibinfo{person}{Nurjahan Begum}, \bibinfo{person}{Anthony Bagnall}, \bibinfo{person}{Abdullah Mueen}, {and} \bibinfo{person}{Gustavo Batista}.} \bibinfo{year}{2015}\natexlab{}.
\newblock \bibinfo{title}{The UCR Time Series Classification Archive}.
\newblock
\newblock
\newblock
\shownote{\url{www.cs.ucr.edu/~eamonn/time_series_data/}.}


\bibitem[\protect\citeauthoryear{Chen, Ren, Wang, Fang, Sun, and Li}{Chen et~al\mbox{.}}{2024}]%
        {chen2024contiformer}
\bibfield{author}{\bibinfo{person}{Yuqi Chen}, \bibinfo{person}{Kan Ren}, \bibinfo{person}{Yansen Wang}, \bibinfo{person}{Yuchen Fang}, \bibinfo{person}{Weiwei Sun}, {and} \bibinfo{person}{Dongsheng Li}.} \bibinfo{year}{2024}\natexlab{}.
\newblock \showarticletitle{ContiFormer: Continuous-time transformer for irregular time series modeling}.
\newblock \bibinfo{journal}{\emph{Advances in Neural Information Processing Systems}}  \bibinfo{volume}{36} (\bibinfo{year}{2024}).
\newblock


\bibitem[\protect\citeauthoryear{Chung, Gulcehre, Cho, and Bengio}{Chung et~al\mbox{.}}{2014}]%
        {chung2014empirical}
\bibfield{author}{\bibinfo{person}{Junyoung Chung}, \bibinfo{person}{Caglar Gulcehre}, \bibinfo{person}{KyungHyun Cho}, {and} \bibinfo{person}{Yoshua Bengio}.} \bibinfo{year}{2014}\natexlab{}.
\newblock \showarticletitle{Empirical evaluation of gated recurrent neural networks on sequence modeling}.
\newblock \bibinfo{journal}{\emph{arXiv preprint arXiv:1412.3555}} (\bibinfo{year}{2014}).
\newblock


\bibitem[\protect\citeauthoryear{De~Brouwer, Simm, Arany, and Moreau}{De~Brouwer et~al\mbox{.}}{2019}]%
        {de2019gru}
\bibfield{author}{\bibinfo{person}{Edward De~Brouwer}, \bibinfo{person}{Jaak Simm}, \bibinfo{person}{Adam Arany}, {and} \bibinfo{person}{Yves Moreau}.} \bibinfo{year}{2019}\natexlab{}.
\newblock \showarticletitle{GRU-ODE-Bayes: Continuous modeling of sporadically-observed time series}.
\newblock \bibinfo{journal}{\emph{Advances in neural information processing systems}}  \bibinfo{volume}{32} (\bibinfo{year}{2019}).
\newblock


\bibitem[\protect\citeauthoryear{Dupont, Doucet, and Teh}{Dupont et~al\mbox{.}}{2019}]%
        {dupont2019augmented}
\bibfield{author}{\bibinfo{person}{Emilien Dupont}, \bibinfo{person}{Arnaud Doucet}, {and} \bibinfo{person}{Yee~Whye Teh}.} \bibinfo{year}{2019}\natexlab{}.
\newblock \showarticletitle{Augmented neural odes}.
\newblock \bibinfo{journal}{\emph{Advances in neural information processing systems}}  \bibinfo{volume}{32} (\bibinfo{year}{2019}).
\newblock


\bibitem[\protect\citeauthoryear{Fu, Dao, Saab, Thomas, Rudra, and R{\'e}}{Fu et~al\mbox{.}}{2022}]%
        {fu2022hungry}
\bibfield{author}{\bibinfo{person}{Daniel~Y Fu}, \bibinfo{person}{Tri Dao}, \bibinfo{person}{Khaled~K Saab}, \bibinfo{person}{Armin~W Thomas}, \bibinfo{person}{Atri Rudra}, {and} \bibinfo{person}{Christopher R{\'e}}.} \bibinfo{year}{2022}\natexlab{}.
\newblock \showarticletitle{Hungry hungry hippos: Towards language modeling with state space models}.
\newblock \bibinfo{journal}{\emph{arXiv preprint arXiv:2212.14052}} (\bibinfo{year}{2022}).
\newblock


\bibitem[\protect\citeauthoryear{Garcia, Prett, and Morari}{Garcia et~al\mbox{.}}{1989}]%
        {garcia1989model}
\bibfield{author}{\bibinfo{person}{Carlos~E Garcia}, \bibinfo{person}{David~M Prett}, {and} \bibinfo{person}{Manfred Morari}.} \bibinfo{year}{1989}\natexlab{}.
\newblock \showarticletitle{Model predictive control: Theory and practice—A survey}.
\newblock \bibinfo{journal}{\emph{Automatica}} \bibinfo{volume}{25}, \bibinfo{number}{3} (\bibinfo{year}{1989}), \bibinfo{pages}{335--348}.
\newblock


\bibitem[\protect\citeauthoryear{Gu, Goel, Gupta, and R{\'e}}{Gu et~al\mbox{.}}{2022}]%
        {gu2022parameterization}
\bibfield{author}{\bibinfo{person}{Albert Gu}, \bibinfo{person}{Karan Goel}, \bibinfo{person}{Ankit Gupta}, {and} \bibinfo{person}{Christopher R{\'e}}.} \bibinfo{year}{2022}\natexlab{}.
\newblock \showarticletitle{On the parameterization and initialization of diagonal state space models}.
\newblock \bibinfo{journal}{\emph{Advances in Neural Information Processing Systems}}  \bibinfo{volume}{35} (\bibinfo{year}{2022}), \bibinfo{pages}{35971--35983}.
\newblock


\bibitem[\protect\citeauthoryear{Gu, Goel, and R{\'e}}{Gu et~al\mbox{.}}{2021}]%
        {gu2021efficiently}
\bibfield{author}{\bibinfo{person}{Albert Gu}, \bibinfo{person}{Karan Goel}, {and} \bibinfo{person}{Christopher R{\'e}}.} \bibinfo{year}{2021}\natexlab{}.
\newblock \showarticletitle{Efficiently modeling long sequences with structured state spaces}.
\newblock \bibinfo{journal}{\emph{arXiv preprint arXiv:2111.00396}} (\bibinfo{year}{2021}).
\newblock


\bibitem[\protect\citeauthoryear{He, Zhang, Ren, and Sun}{He et~al\mbox{.}}{2016}]%
        {he2016deep}
\bibfield{author}{\bibinfo{person}{Kaiming He}, \bibinfo{person}{Xiangyu Zhang}, \bibinfo{person}{Shaoqing Ren}, {and} \bibinfo{person}{Jian Sun}.} \bibinfo{year}{2016}\natexlab{}.
\newblock \showarticletitle{Deep residual learning for image recognition}. In \bibinfo{booktitle}{\emph{Proceedings of the IEEE conference on computer vision and pattern recognition}}. \bibinfo{pages}{770--778}.
\newblock


\bibitem[\protect\citeauthoryear{Hochreiter and Schmidhuber}{Hochreiter and Schmidhuber}{1997}]%
        {hochreiter1997long}
\bibfield{author}{\bibinfo{person}{Sepp Hochreiter} {and} \bibinfo{person}{J{\"u}rgen Schmidhuber}.} \bibinfo{year}{1997}\natexlab{}.
\newblock \showarticletitle{Long short-term memory}.
\newblock \bibinfo{journal}{\emph{Neural computation}} \bibinfo{volume}{9}, \bibinfo{number}{8} (\bibinfo{year}{1997}), \bibinfo{pages}{1735--1780}.
\newblock


\bibitem[\protect\citeauthoryear{Jhin, Shin, Kim, Hong, Jo, Park, Park, Lee, Maeng, and Jeon}{Jhin et~al\mbox{.}}{2024}]%
        {jhin2024attentive}
\bibfield{author}{\bibinfo{person}{Sheo~Yon Jhin}, \bibinfo{person}{Heejoo Shin}, \bibinfo{person}{Sujie Kim}, \bibinfo{person}{Seoyoung Hong}, \bibinfo{person}{Minju Jo}, \bibinfo{person}{Solhee Park}, \bibinfo{person}{Noseong Park}, \bibinfo{person}{Seungbeom Lee}, \bibinfo{person}{Hwiyoung Maeng}, {and} \bibinfo{person}{Seungmin Jeon}.} \bibinfo{year}{2024}\natexlab{}.
\newblock \showarticletitle{Attentive neural controlled differential equations for time-series classification and forecasting}.
\newblock \bibinfo{journal}{\emph{Knowledge and Information Systems}} \bibinfo{volume}{66}, \bibinfo{number}{3} (\bibinfo{year}{2024}), \bibinfo{pages}{1885--1915}.
\newblock


\bibitem[\protect\citeauthoryear{Kang, Song, Ding, and Tay}{Kang et~al\mbox{.}}{2021}]%
        {kang2021stable}
\bibfield{author}{\bibinfo{person}{Qiyu Kang}, \bibinfo{person}{Yang Song}, \bibinfo{person}{Qinxu Ding}, {and} \bibinfo{person}{Wee~Peng Tay}.} \bibinfo{year}{2021}\natexlab{}.
\newblock \showarticletitle{Stable neural ode with lyapunov-stable equilibrium points for defending against adversarial attacks}.
\newblock \bibinfo{journal}{\emph{Advances in Neural Information Processing Systems}}  \bibinfo{volume}{34} (\bibinfo{year}{2021}), \bibinfo{pages}{14925--14937}.
\newblock


\bibitem[\protect\citeauthoryear{Kidger, Morrill, Foster, and Lyons}{Kidger et~al\mbox{.}}{2020}]%
        {kidger2020neural}
\bibfield{author}{\bibinfo{person}{Patrick Kidger}, \bibinfo{person}{James Morrill}, \bibinfo{person}{James Foster}, {and} \bibinfo{person}{Terry Lyons}.} \bibinfo{year}{2020}\natexlab{}.
\newblock \showarticletitle{Neural controlled differential equations for irregular time series}.
\newblock \bibinfo{journal}{\emph{Advances in Neural Information Processing Systems}}  \bibinfo{volume}{33} (\bibinfo{year}{2020}), \bibinfo{pages}{6696--6707}.
\newblock


\bibitem[\protect\citeauthoryear{Kouvaritakis and Cannon}{Kouvaritakis and Cannon}{2016}]%
        {kouvaritakis2016model}
\bibfield{author}{\bibinfo{person}{Basil Kouvaritakis} {and} \bibinfo{person}{Mark Cannon}.} \bibinfo{year}{2016}\natexlab{}.
\newblock \showarticletitle{Model predictive control}.
\newblock \bibinfo{journal}{\emph{Switzerland: Springer International Publishing}}  \bibinfo{volume}{38} (\bibinfo{year}{2016}), \bibinfo{pages}{13--56}.
\newblock


\bibitem[\protect\citeauthoryear{Krishnamurthy}{Krishnamurthy}{2019}]%
        {krishnamurthy2019predictability}
\bibfield{author}{\bibinfo{person}{Venkataramanaiah Krishnamurthy}.} \bibinfo{year}{2019}\natexlab{}.
\newblock \showarticletitle{Predictability of weather and climate}.
\newblock \bibinfo{journal}{\emph{Earth and Space Science}} \bibinfo{volume}{6}, \bibinfo{number}{7} (\bibinfo{year}{2019}), \bibinfo{pages}{1043--1056}.
\newblock


\bibitem[\protect\citeauthoryear{Lancaster and Rodman}{Lancaster and Rodman}{1995}]%
        {lancaster1995algebraic}
\bibfield{author}{\bibinfo{person}{Peter Lancaster} {and} \bibinfo{person}{Leiba Rodman}.} \bibinfo{year}{1995}\natexlab{}.
\newblock \bibinfo{booktitle}{\emph{Algebraic riccati equations}}.
\newblock \bibinfo{publisher}{Clarendon press}.
\newblock


\bibitem[\protect\citeauthoryear{Li, Guo, Ilic, Weng, and Ruan}{Li et~al\mbox{.}}{2025a}]%
        {li2025external}
\bibfield{author}{\bibinfo{person}{Haoran Li}, \bibinfo{person}{Muhao Guo}, \bibinfo{person}{Marija Ilic}, \bibinfo{person}{Yang Weng}, {and} \bibinfo{person}{Guangchun Ruan}.} \bibinfo{year}{2025}\natexlab{a}.
\newblock \showarticletitle{External Data-Enhanced Meta-Representation for Adaptive Probabilistic Load Forecasting}.
\newblock \bibinfo{journal}{\emph{arXiv preprint arXiv:2506.23201}} (\bibinfo{year}{2025}).
\newblock


\bibitem[\protect\citeauthoryear{Li, Guo, Weng, Ilic, and Ruan}{Li et~al\mbox{.}}{2025b}]%
        {li2025exarnn}
\bibfield{author}{\bibinfo{person}{Haoran Li}, \bibinfo{person}{Muhao Guo}, \bibinfo{person}{Yang Weng}, \bibinfo{person}{Marija Ilic}, {and} \bibinfo{person}{Guangchun Ruan}.} \bibinfo{year}{2025}\natexlab{b}.
\newblock \showarticletitle{ExARNN: An Environment-Driven Adaptive RNN for Learning Non-Stationary Power Dynamics}.
\newblock \bibinfo{journal}{\emph{arXiv preprint arXiv:2505.17488}} (\bibinfo{year}{2025}).
\newblock


\bibitem[\protect\citeauthoryear{Li, Ma, and Weng}{Li et~al\mbox{.}}{2022}]%
        {ref:Haoran2022T}
\bibfield{author}{\bibinfo{person}{Haoran Li}, \bibinfo{person}{Zhihao Ma}, {and} \bibinfo{person}{Yang Weng}.} \bibinfo{year}{2022}\natexlab{}.
\newblock \showarticletitle{A Transfer Learning Framework for Power System Event Identification}.
\newblock \bibinfo{journal}{\emph{IEEE Transactions on Power Systems}} \bibinfo{volume}{37}, \bibinfo{number}{6} (\bibinfo{year}{2022}), \bibinfo{pages}{4424--4435}.
\newblock
\urldef\tempurl%
\url{https://doi.org/10.1109/TPWRS.2022.3153445}
\showDOI{\tempurl}


\bibitem[\protect\citeauthoryear{Li, Ma, Weng, Zhong, and Zheng}{Li et~al\mbox{.}}{2025c}]%
        {ref:Haoran2025L}
\bibfield{author}{\bibinfo{person}{Haoran Li}, \bibinfo{person}{Zhihao Ma}, \bibinfo{person}{Yang Weng}, \bibinfo{person}{Haiwang Zhong}, {and} \bibinfo{person}{Xiaodong Zheng}.} \bibinfo{year}{2025}\natexlab{c}.
\newblock \showarticletitle{Low-Dimensional ODE Embedding to Convert Low-Resolution Meters Into “Virtual” PMUs}.
\newblock \bibinfo{journal}{\emph{IEEE Transactions on Power Systems}} \bibinfo{volume}{40}, \bibinfo{number}{2} (\bibinfo{year}{2025}), \bibinfo{pages}{1439--1451}.
\newblock
\urldef\tempurl%
\url{https://doi.org/10.1109/TPWRS.2024.3427637}
\showDOI{\tempurl}


\bibitem[\protect\citeauthoryear{Li and Weng}{Li and Weng}{2023}]%
        {ref:Haoran2023P}
\bibfield{author}{\bibinfo{person}{Haoran Li} {and} \bibinfo{person}{Yang Weng}.} \bibinfo{year}{2023}\natexlab{}.
\newblock \showarticletitle{PIX-GAN: Enhance Physics-Informed Estimation via Generative Adversarial Network}. In \bibinfo{booktitle}{\emph{2023 IEEE International Conference on Data Mining (ICDM)}}. \bibinfo{pages}{1085--1090}.
\newblock
\urldef\tempurl%
\url{https://doi.org/10.1109/ICDM58522.2023.00128}
\showDOI{\tempurl}


\bibitem[\protect\citeauthoryear{Li, Chen, Tai, and Weinan}{Li et~al\mbox{.}}{2018}]%
        {li2018maximum}
\bibfield{author}{\bibinfo{person}{Qianxiao Li}, \bibinfo{person}{Long Chen}, \bibinfo{person}{Cheng Tai}, {and} \bibinfo{person}{E Weinan}.} \bibinfo{year}{2018}\natexlab{}.
\newblock \showarticletitle{Maximum principle based algorithms for deep learning}.
\newblock \bibinfo{journal}{\emph{Journal of Machine Learning Research}} \bibinfo{volume}{18}, \bibinfo{number}{165} (\bibinfo{year}{2018}), \bibinfo{pages}{1--29}.
\newblock


\bibitem[\protect\citeauthoryear{Liu and Theodorou}{Liu and Theodorou}{2019}]%
        {liu2019deep}
\bibfield{author}{\bibinfo{person}{Guan-Horng Liu} {and} \bibinfo{person}{Evangelos~A Theodorou}.} \bibinfo{year}{2019}\natexlab{}.
\newblock \showarticletitle{Deep learning theory review: An optimal control and dynamical systems perspective}.
\newblock \bibinfo{journal}{\emph{arXiv preprint arXiv:1908.10920}} (\bibinfo{year}{2019}).
\newblock


\bibitem[\protect\citeauthoryear{Luis and Garc{\'\i}a-Verdugo}{Luis and Garc{\'\i}a-Verdugo}{2010}]%
        {luis2010chemical}
\bibfield{author}{\bibinfo{person}{Sant{\'\i}ago~V Luis} {and} \bibinfo{person}{Eduardo Garc{\'\i}a-Verdugo}.} \bibinfo{year}{2010}\natexlab{}.
\newblock \bibinfo{booktitle}{\emph{Chemical reactions and processes under flow conditions}}. Vol.~\bibinfo{volume}{5}.
\newblock \bibinfo{publisher}{Royal Society of Chemistry}.
\newblock


\bibitem[\protect\citeauthoryear{Luo, Abdullah, and Christofides}{Luo et~al\mbox{.}}{2023}]%
        {luo2023model}
\bibfield{author}{\bibinfo{person}{Junwei Luo}, \bibinfo{person}{Fahim Abdullah}, {and} \bibinfo{person}{Panagiotis~D Christofides}.} \bibinfo{year}{2023}\natexlab{}.
\newblock \showarticletitle{Model predictive control of nonlinear processes using neural ordinary differential equation models}.
\newblock \bibinfo{journal}{\emph{Computers \& Chemical Engineering}}  \bibinfo{volume}{178} (\bibinfo{year}{2023}), \bibinfo{pages}{108367}.
\newblock


\bibitem[\protect\citeauthoryear{Massaroli, Poli, Park, Yamashita, and Asama}{Massaroli et~al\mbox{.}}{2020}]%
        {NEURIPS2020_293835c2}
\bibfield{author}{\bibinfo{person}{Stefano Massaroli}, \bibinfo{person}{Michael Poli}, \bibinfo{person}{Jinkyoo Park}, \bibinfo{person}{Atsushi Yamashita}, {and} \bibinfo{person}{Hajime Asama}.} \bibinfo{year}{2020}\natexlab{}.
\newblock \showarticletitle{Dissecting Neural ODEs}. In \bibinfo{booktitle}{\emph{Advances in Neural Information Processing Systems}}, \bibfield{editor}{\bibinfo{person}{H.~Larochelle}, \bibinfo{person}{M.~Ranzato}, \bibinfo{person}{R.~Hadsell}, \bibinfo{person}{M.F. Balcan}, {and} \bibinfo{person}{H.~Lin}} (Eds.), Vol.~\bibinfo{volume}{33}. \bibinfo{publisher}{Curran Associates, Inc.}, \bibinfo{pages}{3952--3963}.
\newblock
\urldef\tempurl%
\url{https://proceedings.neurips.cc/paper_files/paper/2020/file/293835c2cc75b585649498ee74b395f5-Paper.pdf}
\showURL{%
\tempurl}


\bibitem[\protect\citeauthoryear{Morrill, Salvi, Kidger, and Foster}{Morrill et~al\mbox{.}}{2021}]%
        {morrill2021neural}
\bibfield{author}{\bibinfo{person}{James Morrill}, \bibinfo{person}{Cristopher Salvi}, \bibinfo{person}{Patrick Kidger}, {and} \bibinfo{person}{James Foster}.} \bibinfo{year}{2021}\natexlab{}.
\newblock \showarticletitle{Neural rough differential equations for long time series}. In \bibinfo{booktitle}{\emph{International Conference on Machine Learning}}. PMLR, \bibinfo{pages}{7829--7838}.
\newblock


\bibitem[\protect\citeauthoryear{Mozer, Kazakov, and Lindsey}{Mozer et~al\mbox{.}}{2017}]%
        {mozer2017discrete}
\bibfield{author}{\bibinfo{person}{Michael~C Mozer}, \bibinfo{person}{Denis Kazakov}, {and} \bibinfo{person}{Robert~V Lindsey}.} \bibinfo{year}{2017}\natexlab{}.
\newblock \showarticletitle{Discrete event, continuous time rnns}.
\newblock \bibinfo{journal}{\emph{arXiv preprint arXiv:1710.04110}} (\bibinfo{year}{2017}).
\newblock


\bibitem[\protect\citeauthoryear{Mozyrska, Pawluszewicz, and Torres}{Mozyrska et~al\mbox{.}}{2009}]%
        {mozyrska2009riemann}
\bibfield{author}{\bibinfo{person}{Dorota Mozyrska}, \bibinfo{person}{Ewa Pawluszewicz}, {and} \bibinfo{person}{Delfim~FM Torres}.} \bibinfo{year}{2009}\natexlab{}.
\newblock \showarticletitle{The Riemann-Stieltjes integral on time scales}.
\newblock \bibinfo{journal}{\emph{arXiv preprint arXiv:0903.1224}} (\bibinfo{year}{2009}).
\newblock


\bibitem[\protect\citeauthoryear{Nguyen, Kieu, Wen, and Cai}{Nguyen et~al\mbox{.}}{2018}]%
        {nguyen2018deep}
\bibfield{author}{\bibinfo{person}{Hoang Nguyen}, \bibinfo{person}{Le-Minh Kieu}, \bibinfo{person}{Tao Wen}, {and} \bibinfo{person}{Chen Cai}.} \bibinfo{year}{2018}\natexlab{}.
\newblock \showarticletitle{Deep learning methods in transportation domain: a review}.
\newblock \bibinfo{journal}{\emph{IET Intelligent Transport Systems}} \bibinfo{volume}{12}, \bibinfo{number}{9} (\bibinfo{year}{2018}), \bibinfo{pages}{998--1004}.
\newblock


\bibitem[\protect\citeauthoryear{Palsson}{Palsson}{2011}]%
        {palsson2011systems}
\bibfield{author}{\bibinfo{person}{Bernhard~{\O} Palsson}.} \bibinfo{year}{2011}\natexlab{}.
\newblock \bibinfo{booktitle}{\emph{Systems biology: simulation of dynamic network states}}.
\newblock \bibinfo{publisher}{Cambridge University Press}.
\newblock


\bibitem[\protect\citeauthoryear{Pan and Wang}{Pan and Wang}{2011}]%
        {pan2011model}
\bibfield{author}{\bibinfo{person}{Yunpeng Pan} {and} \bibinfo{person}{Jun Wang}.} \bibinfo{year}{2011}\natexlab{}.
\newblock \showarticletitle{Model predictive control of unknown nonlinear dynamical systems based on recurrent neural networks}.
\newblock \bibinfo{journal}{\emph{IEEE Transactions on Industrial Electronics}} \bibinfo{volume}{59}, \bibinfo{number}{8} (\bibinfo{year}{2011}), \bibinfo{pages}{3089--3101}.
\newblock


\bibitem[\protect\citeauthoryear{Porretta and Zuazua}{Porretta and Zuazua}{2013}]%
        {porretta2013long}
\bibfield{author}{\bibinfo{person}{Alessio Porretta} {and} \bibinfo{person}{Enrique Zuazua}.} \bibinfo{year}{2013}\natexlab{}.
\newblock \showarticletitle{Long time versus steady state optimal control}.
\newblock \bibinfo{journal}{\emph{SIAM Journal on Control and Optimization}} \bibinfo{volume}{51}, \bibinfo{number}{6} (\bibinfo{year}{2013}), \bibinfo{pages}{4242--4273}.
\newblock


\bibitem[\protect\citeauthoryear{Rodriguez, Ames, and Yue}{Rodriguez et~al\mbox{.}}{2022}]%
        {rodriguez2022lyanet}
\bibfield{author}{\bibinfo{person}{Ivan Dario~Jimenez Rodriguez}, \bibinfo{person}{Aaron Ames}, {and} \bibinfo{person}{Yisong Yue}.} \bibinfo{year}{2022}\natexlab{}.
\newblock \showarticletitle{Lyanet: A lyapunov framework for training neural odes}. In \bibinfo{booktitle}{\emph{International Conference on Machine Learning}}. PMLR, \bibinfo{pages}{18687--18703}.
\newblock


\bibitem[\protect\citeauthoryear{Rubanova, Chen, and Duvenaud}{Rubanova et~al\mbox{.}}{2019}]%
        {rubanova2019latent}
\bibfield{author}{\bibinfo{person}{Yulia Rubanova}, \bibinfo{person}{Ricky~TQ Chen}, {and} \bibinfo{person}{David~K Duvenaud}.} \bibinfo{year}{2019}\natexlab{}.
\newblock \showarticletitle{Latent ordinary differential equations for irregularly-sampled time series}.
\newblock \bibinfo{journal}{\emph{Advances in neural information processing systems}}  \bibinfo{volume}{32} (\bibinfo{year}{2019}).
\newblock


\bibitem[\protect\citeauthoryear{Ruiz-Balet and Zuazua}{Ruiz-Balet and Zuazua}{2023}]%
        {ruiz2023neural}
\bibfield{author}{\bibinfo{person}{Domenec Ruiz-Balet} {and} \bibinfo{person}{Enrique Zuazua}.} \bibinfo{year}{2023}\natexlab{}.
\newblock \showarticletitle{Neural ODE control for classification, approximation, and transport}.
\newblock \bibinfo{journal}{\emph{SIAM Rev.}} \bibinfo{volume}{65}, \bibinfo{number}{3} (\bibinfo{year}{2023}), \bibinfo{pages}{735--773}.
\newblock


\bibitem[\protect\citeauthoryear{Sage}{Sage}{1968}]%
        {sage1968optimum}
\bibfield{author}{\bibinfo{person}{Andrew~P Sage}.} \bibinfo{year}{1968}\natexlab{}.
\newblock \showarticletitle{Optimum systems control}.
\newblock \bibinfo{journal}{\emph{(No Title)}} (\bibinfo{year}{1968}).
\newblock


\bibitem[\protect\citeauthoryear{Sarkar and De~Bruyn}{Sarkar and De~Bruyn}{2021}]%
        {sarkar2021lstm}
\bibfield{author}{\bibinfo{person}{Mainak Sarkar} {and} \bibinfo{person}{Arnaud De~Bruyn}.} \bibinfo{year}{2021}\natexlab{}.
\newblock \showarticletitle{LSTM response models for direct marketing analytics: Replacing feature engineering with deep learning}.
\newblock \bibinfo{journal}{\emph{Journal of Interactive Marketing}} \bibinfo{volume}{53}, \bibinfo{number}{1} (\bibinfo{year}{2021}), \bibinfo{pages}{80--95}.
\newblock


\bibitem[\protect\citeauthoryear{Scheutzow}{Scheutzow}{2013}]%
        {scheutzow2013stochastic}
\bibfield{author}{\bibinfo{person}{Michael Scheutzow}.} \bibinfo{year}{2013}\natexlab{}.
\newblock \showarticletitle{A stochastic Gronwall lemma}.
\newblock \bibinfo{journal}{\emph{Infinite Dimensional Analysis, Quantum Probability and Related Topics}} \bibinfo{volume}{16}, \bibinfo{number}{02} (\bibinfo{year}{2013}), \bibinfo{pages}{1350019}.
\newblock


\bibitem[\protect\citeauthoryear{Schirmer, Eltayeb, Lessmann, and Rudolph}{Schirmer et~al\mbox{.}}{2022}]%
        {schirmer2022modeling}
\bibfield{author}{\bibinfo{person}{Mona Schirmer}, \bibinfo{person}{Mazin Eltayeb}, \bibinfo{person}{Stefan Lessmann}, {and} \bibinfo{person}{Maja Rudolph}.} \bibinfo{year}{2022}\natexlab{}.
\newblock \showarticletitle{Modeling irregular time series with continuous recurrent units}. In \bibinfo{booktitle}{\emph{International Conference on Machine Learning}}. PMLR, \bibinfo{pages}{19388--19405}.
\newblock


\bibitem[\protect\citeauthoryear{Seidman, Fazlyab, Preciado, and Pappas}{Seidman et~al\mbox{.}}{2020}]%
        {seidman2020robust}
\bibfield{author}{\bibinfo{person}{Jacob~H Seidman}, \bibinfo{person}{Mahyar Fazlyab}, \bibinfo{person}{Victor~M Preciado}, {and} \bibinfo{person}{George~J Pappas}.} \bibinfo{year}{2020}\natexlab{}.
\newblock \showarticletitle{Robust deep learning as optimal control: Insights and convergence guarantees}. In \bibinfo{booktitle}{\emph{Learning for Dynamics and Control}}. PMLR, \bibinfo{pages}{884--893}.
\newblock


\bibitem[\protect\citeauthoryear{Silva, Moody, Scott, Celi, and Mark}{Silva et~al\mbox{.}}{2012}]%
        {silva2012predicting}
\bibfield{author}{\bibinfo{person}{Ikaro Silva}, \bibinfo{person}{George Moody}, \bibinfo{person}{Daniel~J Scott}, \bibinfo{person}{Leo~A Celi}, {and} \bibinfo{person}{Roger~G Mark}.} \bibinfo{year}{2012}\natexlab{}.
\newblock \showarticletitle{Predicting in-hospital mortality of icu patients: The physionet/computing in cardiology challenge 2012}. In \bibinfo{booktitle}{\emph{2012 computing in cardiology}}. IEEE, \bibinfo{pages}{245--248}.
\newblock


\bibitem[\protect\citeauthoryear{Smith, Warrington, and Linderman}{Smith et~al\mbox{.}}{2022}]%
        {smith2022simplified}
\bibfield{author}{\bibinfo{person}{Jimmy~TH Smith}, \bibinfo{person}{Andrew Warrington}, {and} \bibinfo{person}{Scott~W Linderman}.} \bibinfo{year}{2022}\natexlab{}.
\newblock \showarticletitle{Simplified state space layers for sequence modeling}.
\newblock \bibinfo{journal}{\emph{arXiv preprint arXiv:2208.04933}} (\bibinfo{year}{2022}).
\newblock


\bibitem[\protect\citeauthoryear{Vaswani, Shazeer, Parmar, Uszkoreit, Jones, Gomez, Kaiser, and Polosukhin}{Vaswani et~al\mbox{.}}{2017}]%
        {vaswani2017attention}
\bibfield{author}{\bibinfo{person}{Ashish Vaswani}, \bibinfo{person}{Noam Shazeer}, \bibinfo{person}{Niki Parmar}, \bibinfo{person}{Jakob Uszkoreit}, \bibinfo{person}{Llion Jones}, \bibinfo{person}{Aidan~N Gomez}, \bibinfo{person}{{\L}ukasz Kaiser}, {and} \bibinfo{person}{Illia Polosukhin}.} \bibinfo{year}{2017}\natexlab{}.
\newblock \showarticletitle{Attention is all you need}.
\newblock \bibinfo{journal}{\emph{Advances in neural information processing systems}}  \bibinfo{volume}{30} (\bibinfo{year}{2017}).
\newblock


\bibitem[\protect\citeauthoryear{Veldman and Zuazua}{Veldman and Zuazua}{2022}]%
        {veldman2022local}
\bibfield{author}{\bibinfo{person}{Daniel Veldman} {and} \bibinfo{person}{Enrique Zuazua}.} \bibinfo{year}{2022}\natexlab{}.
\newblock \showarticletitle{Local Stability and Convergence of Unconstrained Model Predictive Control}.
\newblock \bibinfo{journal}{\emph{arXiv preprint arXiv:2206.01097}} (\bibinfo{year}{2022}).
\newblock


\bibitem[\protect\citeauthoryear{Weinan, Han, and Li}{Weinan et~al\mbox{.}}{2018}]%
        {weinan2018mean}
\bibfield{author}{\bibinfo{person}{E Weinan}, \bibinfo{person}{Jiequn Han}, {and} \bibinfo{person}{Qianxiao Li}.} \bibinfo{year}{2018}\natexlab{}.
\newblock \showarticletitle{A mean-field optimal control formulation of deep learning}.
\newblock \bibinfo{journal}{\emph{arXiv preprint arXiv:1807.01083}} (\bibinfo{year}{2018}).
\newblock


\bibitem[\protect\citeauthoryear{Wen, Zhou, Zhang, Chen, Ma, Yan, and Sun}{Wen et~al\mbox{.}}{2022}]%
        {wen2022transformers}
\bibfield{author}{\bibinfo{person}{Qingsong Wen}, \bibinfo{person}{Tian Zhou}, \bibinfo{person}{Chaoli Zhang}, \bibinfo{person}{Weiqi Chen}, \bibinfo{person}{Ziqing Ma}, \bibinfo{person}{Junchi Yan}, {and} \bibinfo{person}{Liang Sun}.} \bibinfo{year}{2022}\natexlab{}.
\newblock \showarticletitle{Transformers in time series: A survey}.
\newblock \bibinfo{journal}{\emph{arXiv preprint arXiv:2202.07125}} (\bibinfo{year}{2022}).
\newblock


\bibitem[\protect\citeauthoryear{Zhang, Zhang, Lu, Zhu, and Dong}{Zhang et~al\mbox{.}}{2019}]%
        {zhang2019you}
\bibfield{author}{\bibinfo{person}{Dinghuai Zhang}, \bibinfo{person}{Tianyuan Zhang}, \bibinfo{person}{Yiping Lu}, \bibinfo{person}{Zhanxing Zhu}, {and} \bibinfo{person}{Bin Dong}.} \bibinfo{year}{2019}\natexlab{}.
\newblock \showarticletitle{You only propagate once: Accelerating adversarial training via maximal principle}.
\newblock \bibinfo{journal}{\emph{Advances in neural information processing systems}}  \bibinfo{volume}{32} (\bibinfo{year}{2019}).
\newblock


\bibitem[\protect\citeauthoryear{Zhang, Sun, and Shi}{Zhang et~al\mbox{.}}{2021}]%
        {zhang2021trajectory}
\bibfield{author}{\bibinfo{person}{Kunwu Zhang}, \bibinfo{person}{Qi Sun}, {and} \bibinfo{person}{Yang Shi}.} \bibinfo{year}{2021}\natexlab{}.
\newblock \showarticletitle{Trajectory tracking control of autonomous ground vehicles using adaptive learning MPC}.
\newblock \bibinfo{journal}{\emph{IEEE Transactions on Neural Networks and Learning Systems}} \bibinfo{volume}{32}, \bibinfo{number}{12} (\bibinfo{year}{2021}), \bibinfo{pages}{5554--5564}.
\newblock


\bibitem[\protect\citeauthoryear{Zhou, Zhang, Peng, Zhang, Li, Xiong, and Zhang}{Zhou et~al\mbox{.}}{2021}]%
        {zhou2021informer}
\bibfield{author}{\bibinfo{person}{Haoyi Zhou}, \bibinfo{person}{Shanghang Zhang}, \bibinfo{person}{Jieqi Peng}, \bibinfo{person}{Shuai Zhang}, \bibinfo{person}{Jianxin Li}, \bibinfo{person}{Hui Xiong}, {and} \bibinfo{person}{Wancai Zhang}.} \bibinfo{year}{2021}\natexlab{}.
\newblock \showarticletitle{Informer: Beyond efficient transformer for long sequence time-series forecasting}. In \bibinfo{booktitle}{\emph{Proceedings of the AAAI conference on artificial intelligence}}, Vol.~\bibinfo{volume}{35}. \bibinfo{pages}{11106--11115}.
\newblock


\end{thebibliography}

%%
%% If your work has an appendix, this is the place to put it.
\appendix

\section{Assumption}
\label{appendix:assum}
\begin{assumption} 
\label{assum1}
Let $\frac{d\boldsymbol{h}(t)}{dt} = f(\boldsymbol{h}(t),\boldsymbol{u}(t))$ denote the true dynamics of $\boldsymbol{h}(t)$ in our Coordinated model. Consider linearization $f(\boldsymbol{h}(t),\boldsymbol{u}(t))\approx A\boldsymbol{h}(t)+B\boldsymbol{u}(t)$ and $\ell_{\phi}^2(\boldsymbol{h}(t)) \approx C\boldsymbol{h}(t)$, where $A$, $B$, and $C$ are the state, input, and output matrices of the state-space model (SSM), respectively. Assume the following conditions hold:
\begin{itemize}
\item Our approximation for the true dynamics satisfies
% $f_{\phi^*}(\boldsymbol{h}(t),\boldsymbol{u}(t))=f(\boldsymbol{h}(t),\boldsymbol{u}(t))+\boldsymbol{w}(t)$, 
$f_{\phi^*}(\boldsymbol{h}(t),\allowbreak\boldsymbol{u}(t))
 = f(\boldsymbol{h}(t),\allowbreak\boldsymbol{u}(t))
 + \boldsymbol{w}(t)$, 
where $\boldsymbol{w}(t)$ is an approximation error.
\item The optimal control in NPC training is sufficiently close to the $M$-horizon optimal control results in the SSM model.
\item $(A,B)$ is controllable and $(A,C)$ is observable.
% \item There exists a Lipschitz constant $L$ such that $\forall i,j>0$, $||f(\boldsymbol{h}(t_i),\boldsymbol{u}(t_j)) - A\boldsymbol{h}(t_i)-B\boldsymbol{u}(t_j)-f(\boldsymbol{h}(t_j),\boldsymbol{u}(t_j))+A\boldsymbol{h}(t_j)+B\boldsymbol{u}(t_j)||\leq L(||\boldsymbol{h}(t_i)-\boldsymbol{h}(t_j)||+||\boldsymbol{u}(t_i)-\boldsymbol{u}(t_j)||)$. 
\item There exists a Lipschitz constant $L$ such that $\forall i,j>0$,
\begin{align*}
& ||f(\boldsymbol{h}(t_i),\boldsymbol{u}(t_j)) - A\boldsymbol{h}(t_i)-B\boldsymbol{u}(t_j) -f(\boldsymbol{h}(t_j),\boldsymbol{u}(t_j)) +  A\boldsymbol{h}(t_j) \\ 
&  +B\boldsymbol{u}(t_j)|| \leq L(||\boldsymbol{h}(t_i)-\boldsymbol{h}(t_j)||+||\boldsymbol{u}(t_i)-\boldsymbol{u}(t_j)||).
\end{align*}
\end{itemize}
\end{assumption}

The first condition depends on the approximation power of the continuous-time model can improve the approximation to make $\boldsymbol{w}(t)$ sufficiently small. The second to the fourth conditions require a small linearization error from $f(\cdot)$ to an SSM. While it's hard to give direct proofs, a large amount of work \citep{gu2021efficiently,smith2022simplified,schirmer2022modeling,ansari2023neural,gu2022parameterization} reveals that SSM is competent for time-series modeling. 
%Hence, Assumption \ref{assum1} converts the nonlinear system analysis to mature SSM-based analysis, but the dynamical-function approximation and linearization errors are also considered. Such a setting is also employed in recent nonlinear MPC analysis \citep{veldman2022local}.

\section{Theorem 1 and Proof} \label{appendix:proof1}

\begin{theorem*}[Stability]
\label{theo1_appen}
Let $T=t_{i+M}-t_i$, $\tau=t_{i+1}-t_i$ and $\boldsymbol{h}_{(\psi^*,\phi^*)}^{*}(t)$ denote the trajectory after NPC training and control. There exists constants $K$, $K_1$, $K_2$, $\mu_{\infty}$, and $M_{\infty}\geq 1$ such that 
\begin{align}
\label{eqn:stability_appen}
& ||\boldsymbol{h}_{(\psi^*,\phi^*)}(t)|| \leq  M_{\infty}e^{-\mu t}||\boldsymbol{h}_{(\psi^*,\phi^*)}^*(t_1)|| 
\notag
\\ & + \frac{1-e^{-\mu t}}{\mu}K(1+(L+1)\tau e^{K(L+1)\tau})||\boldsymbol{w}||_{l^{\infty}(0,t)},
\end{align}
where $\mu=\mu_{\infty}-K_1e^{-2\mu_{\infty}(T-\tau)}-K_2L-KL(L+1)\tau e^{K(L+1)\tau}$ and $||\cdot||_{l^{\infty}(0,t)}$ is the infinity norm on the function space over $(0,t)$.
\end{theorem*}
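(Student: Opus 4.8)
The plan is to reduce the nonlinear closed loop to the linear SSM of Assumption~\ref{assum1}, invoke exponential-turnpike machinery for receding-horizon control, and then lift the estimate back to the nonlinear system by treating the linearization defect and the approximation error as perturbations. Concretely, I would first analyze the idealized problem in which $f$ is replaced by $A\boldsymbol{h}(t)+B\boldsymbol{u}(t)$ and the readout by $C\boldsymbol{h}(t)$, with the quadratic surrogate of the cost $J+\lambda\hat{J}$ in Optimization~\eqref{eqn:opt_control2}. Since $(A,B)$ is controllable and $(A,C)$ is observable, the associated infinite-horizon optimal feedback exists and renders the closed loop exponentially stable at the origin; this supplies the constant $M_{\infty}\geq 1$ and the base rate $\mu_{\infty}$ that appear in the leading term $M_{\infty}e^{-\mu t}||\boldsymbol{h}_{(\psi^*,\phi^*)}^*(t_1)||$.

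Next I would quantify the two discrepancies separating our closed loop from this ideal one. The first is the finite-horizon truncation: the algorithm solves the $M$-horizon subproblem~\eqref{eqn:opt_control2} rather than the infinite-horizon one. Exploiting the hyperbolic (stable/unstable) splitting of the optimality system, the gap between the $M$-horizon and the infinite-horizon optimal feedback evaluated at the reached state $\boldsymbol{h}(t_{i+1})$ decays like $e^{-2\mu_{\infty}(T-\tau)}$, which is precisely the origin of the $-K_1 e^{-2\mu_{\infty}(T-\tau)}$ correction to the rate; the factor $2$ reflects the two boundary layers of the two-point boundary value problem and the fact that one step of length $\tau$ has already been consumed. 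The second discrepancy is the receding-horizon sampling together with the linearization defect: only the first action is applied on $[t_i,t_{i+1}]$ before re-optimization, and the true $f$ differs from $A\boldsymbol{h}+B\boldsymbol{u}$ by a term controlled through the Lipschitz constant $L$ of Assumption~\ref{assum1}. A Gr\"onwall estimate over one sampling interval bounds the resulting drift of the realized trajectory from the planned one by a factor proportional to $\tau e^{K(L+1)\tau}$, producing the remaining rate corrections $-K_2 L-K L(L+1)\tau e^{K(L+1)\tau}$.

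Having established exponential stability of the disturbance-free closed loop with effective rate $\mu=\mu_{\infty}-K_1 e^{-2\mu_{\infty}(T-\tau)}-K_2 L-K L(L+1)\tau e^{K(L+1)\tau}$, I would reintroduce the approximation error $\boldsymbol{w}(t)$ from Assumption~\ref{assum1}. Writing the perturbed dynamics by variation of constants and bounding the closed-loop transition operator by $M_{\infty}e^{-\mu(t-s)}$, the disturbance enters as a convolution; integrating the decaying exponential over $[0,t]$ gives the factor $\frac{1-e^{-\mu t}}{\mu}$, while the same per-interval Gr\"onwall amplification that appeared above multiplies $||\boldsymbol{w}||_{l^{\infty}(0,t)}$ by $K(1+(L+1)\tau e^{K(L+1)\tau})$. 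Summing the homogeneous and forced contributions yields~\eqref{eqn:stability_appen}.

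I expect the principal obstacle to be the sampling/linearization interaction in the second step: one must control, uniformly in the horizon index $i$, how the one-step Gr\"onwall drift compounds with the turnpike gap without the constants degenerating, and then verify that the composite rate $\mu$ stays strictly positive. This is exactly where the hypotheses that $L$ be small and $T-\tau$ be large are needed, and the bookkeeping that keeps $K$, $K_1$, $K_2$ independent of $i$ and of the horizon length is the delicate part. By contrast, the purely linear stability step and the final disturbance-propagation step are comparatively routine once the effective rate and transition bound are in hand.
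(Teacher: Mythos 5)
Your proposal follows essentially the same route as the paper's proof: the paper likewise first solves the linear-quadratic problem via the algebraic and differential Riccati equations (its $e^{-2\mu_{\infty}(T-\tau)}$ correction comes from the turnpike-type estimate $||P(t)-P_{\infty}||\leq K_0 e^{-2\mu_{\infty}t}$, which is the same mechanism as the hyperbolic splitting you invoke), then bounds the gap $\boldsymbol{\epsilon}(t)$ between the planned linear trajectory and the realized nonlinear one by a Gr\"onwall argument over a sampling interval (yielding the $\tau e^{K(L+1)\tau}$ factors), and finally propagates the approximation error $\boldsymbol{w}$ through variation of constants to produce the $\frac{1-e^{-\mu t}}{\mu}$ convolution term. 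The decomposition of the effective rate $\mu$ into the four contributions you list matches the paper's derivation exactly.
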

\begin{proof}
To begin with, we introduce preliminary results about the infinite-horizon and $M$-horizon MPC for SSM. Then, we conduct the stability and convergence analysis for nonlinear dynamical equations. First, by the SSM formula in Assumption 1,
%\ref{assum1}%
we consider a continuous extension of the $M$-horizon minimization in Equation 6 
% \eqref{eqn:cost_mini} 
in $[t_i,t_i+T]$:
\begin{equation}
\label{eqn:J_T_Mhorizon}
J_T(\boldsymbol{u};t_i) =  \frac{1}{2}\int_{t_i}^{t_i+T}||C\boldsymbol{h}(t)||^2 + (\boldsymbol{u}(t))^{\top} R \boldsymbol{u}(t)dt,
\end{equation}
where $||C\boldsymbol{h}(t)||^2$ is the continuous version of $J(\boldsymbol{h}(t_i),\cdots,\boldsymbol{h}(t_{i+M}))$ to force $C\boldsymbol{h}(t)\rightarrow \boldsymbol{0}$ so that the feature state is classified to have label $y=0$. $(\boldsymbol{u}(t))^{\top} R \boldsymbol{u}(t)$ is a continuous version of $\lambda \hat{J}(\boldsymbol{u}_i,\cdots,\boldsymbol{u}_{i+M})$ and $R$ is a symmetric positive
definite matrix. Similarly, for the infinite-horizon problem \citep{veldman2022local}, we have:
\begin{equation}
\label{eqn:J_infty}
J_{\infty}(\boldsymbol{u}) =  \frac{1}{2}\int_{t_1}^{\infty}||C\boldsymbol{h}(t)||^2 + (\boldsymbol{u}(t))^{\top} R \boldsymbol{u}(t)dt,
\end{equation}
where $t_1$ is the start time in our time-series observations and we can set $t_1=0$ without loss of generality. By Assumption 1 in the main paper, in the SSM, the above optimizations have the following constraints:
\begin{equation}
\begin{aligned}
\label{eqn:SSM_constraint}
\frac{d\boldsymbol{h}(t)}{dt} &=  A\boldsymbol{h}(t)+B\boldsymbol{u}(t), \\ \ell_{\phi}^2(\boldsymbol{h}(t)) &= C\boldsymbol{h}(t),
\end{aligned}
\end{equation}
where $(A,B)$ is controllable and $(A,C)$ is observable. Hence, for the objective in Equation \eqref{eqn:J_infty} and the constraint \eqref{eqn:SSM_constraint}, the optimal trajectory of \citep{sage1968optimum} is given by:
\begin{equation}
\begin{aligned}
\label{eqn:optimal_traj_hat_h_infty}
\frac{d\tilde{\boldsymbol{h}}^*_{\infty}(t)}{dt} &= A_{\infty}\tilde{\boldsymbol{h}}^*_{\infty}(t), \tilde{\boldsymbol{h}}^*_{\infty}(t_1)=\boldsymbol{h}(t_1),\\
A_{\infty} &= A - BR^{-1}B^{\top}P_{\infty},\\
\tilde{\boldsymbol{u}}^*_{\infty}(t) &= -R^{-1}B^{\top} P_{\infty}\tilde{\boldsymbol{h}}^*_{\infty}(t),
\end{aligned}
\end{equation}
where $P_{\infty}$ is the unique symmetric positive-definite solution of the following Algebraic Riccati Equation (ARE) \citep{lancaster1995algebraic}:
\begin{equation}
A^{\top} P_{\infty} + P_{\infty} A - P_{\infty} B R^{-1}B^{\top}P_{\infty} + C^{\top} C = \boldsymbol{0}.
\end{equation}

According to \citep{porretta2013long}, the controllability of $(A,B)$ leads to the fact that there are constants $\mu_{\infty}>0$ and $M_{\infty}\geq 1$ such that
\begin{equation}
\label{eqn:upper_operator}
\forall t\geq t_1, ||e^{A_{\infty}t}||\leq M_{\infty}e^{-\mu_{\infty}t},
\end{equation}
where $||A||$ for an operator $A$ is the operator norm. The exponentially decreasing upper bound in Equation \eqref{eqn:upper_operator} implies that if $t\rightarrow \infty$, the solution in Equation \eqref{eqn:optimal_traj_hat_h_infty}, i.e., $\tilde{\boldsymbol{h}}^*_{\infty}(t)\rightarrow \boldsymbol{0}$. This implies that the infinite-horizon problem has exponential convergence to the origin. 

Then, for the $M$-horizon problem in Equation \eqref{eqn:J_T_Mhorizon}, by \citep{sage1968optimum}, the optimal trajectory is: 
\begin{equation}
\label{eqn:optimal_traj_hat_h}
\begin{aligned}
\frac{d\tilde{\boldsymbol{h}}^*_{T}(t)}{dt} &= A_{T,\tau}(t)\tilde{\boldsymbol{h}}^*_{T}(t), \tilde{\boldsymbol{h}}^*_{T}(t_1)=\boldsymbol{h}(t_1),\\
A_{T,\tau} &= A - BR^{-1}B^{\top}P(T-(t \ \text{mod}\ \tau)),\\
\tilde{\boldsymbol{u}}^*_{T}(t) &= -R^{-1}B^{\top} P(T-(t \ \text{mod}\ \tau))\tilde{\boldsymbol{h}}^*_{T}(t),
\end{aligned}
\end{equation}

$A_{T,\tau}$ is a $\tau$-periodic matrix since the MPC only updates the optimal action from Equation \eqref{eqn:J_T_Mhorizon} and evolve the optimal state for an interval of $\tau$ (i.e., conduct one-horizon action in our Algorithm 1 in the main paper). Subsequently, Equation \eqref{eqn:J_T_Mhorizon} needs to be resolved. Essentially, $P(t)$ follows the so-called Ricatti Differential Equation (RDE) \citep{benner2018numerical} in $[0,T]$: 
\begin{align}
\frac{dP(t)}{dt} & = A^{\top} P(t) + P(t)A - P(t)BR^{-1}B^{\top}P(t) + C^{\top}C, \notag \\ 
& P(0)=C.
\end{align}

To analyze the convergence of $\tilde{\boldsymbol{h}}^*_{T}(t)$, it follows that we need to understand the relations between the RDE solution $P(t)$ and the ARE solution $P_{\infty}$. By \citep{callier1994convergence,porretta2013long,veldman2022local}, there exists a constant $K_0$ such that 
\begin{equation}
\label{eqn:pt_pinfty_conver}
||P(t) - P_{\infty}||\leq K_0e^{-2\mu_{\infty}t}.
\end{equation}
Therefore, as $t\rightarrow \infty$, $P(t)\rightarrow P_{\infty}$ and $A_{T,\tau}(t)\rightarrow A_{\infty}$ when $T-\tau \rightarrow \infty$. This connects the convergence analysis between infinite-horizon to $M$-horizon results with SSM as the ODE model. 

Instead of the SSM in Equation \eqref{eqn:SSM_constraint}, in our NPC framework, we note that the constraint should satisfy:
\begin{equation}
\label{eqn:nonlinear_constraint}
\frac{d\boldsymbol{h}(t)}{dt}=f_{\phi^*}(\boldsymbol{h}(t),\boldsymbol{u}(t))=f(\boldsymbol{h}(t),\boldsymbol{u}(t))+\boldsymbol{w}(t),
\end{equation}
where $\boldsymbol{w}(t)$ is a sufficiently small approximation error by the first condition in Assumption 1. 
% \ref{assum1}
For our NPC training and by the second condition in Assumption 
% \ref{assum1} 
1, we have:
\begin{equation}
\begin{aligned}
\label{eqn:h_optimal_npc}
\frac{d\boldsymbol{h}_{(\psi^*,\phi^*)}^*(t)}{dt}&=f(\boldsymbol{h}_{(\psi^*,\phi^*)}^*(t),\tilde{\boldsymbol{u}}_T^*(t)) + \boldsymbol{w}(t)\\
&= f(\boldsymbol{h}_{(\psi^*,\phi^*)}^*(t),\tilde{\boldsymbol{u}}_T^*(t)) \\ & +  \boldsymbol{w}(t) - A\boldsymbol{h}_{(\psi^*,\phi^*)}^*(t)- B\tilde{\boldsymbol{u}}_T^*(t)\\
&-BR^{-1}B^{\top}P(T-(t \ \text{mod}\ \tau))\tilde{\boldsymbol{h}}^*_{T}(t) + \\ &BR^{-1}B^{\top}P(T-(t \ \text{mod}\ \tau))\boldsymbol{h}_{(\psi^*,\phi^*)}^*(t)\\
&+A_{T,\tau}(t)\boldsymbol{h}_{(\psi^*,\phi^*)}^*(t) - A_{\infty}\boldsymbol{h}_{(\psi^*,\phi^*)}^*(t) \\ & + A_{\infty}\boldsymbol{h}_{(\psi^*,\phi^*)}^*(t)\\
&=f(\boldsymbol{h}_{(\psi^*,\phi^*)}^*(t),\tilde{\boldsymbol{u}}_T^*(t))  - A\boldsymbol{h}_{(\psi^*,\phi^*)}^*(t) \\ 
& - B\tilde{\boldsymbol{u}}_T^*(t) + \boldsymbol{w}(t)
+A_{\infty}\boldsymbol{h}_{(\psi^*,\phi^*)}^*(t) \\ 
& + (A_{T,\tau}(t)-A_{\infty})\boldsymbol{h}_{(\psi^*,\phi^*)}^*(t) - \\ &BR^{-1}B^{\top}P(T-(t \ \text{mod}\ \tau))\boldsymbol{\epsilon}(t),
\end{aligned}
\end{equation}
where $\boldsymbol{\epsilon}(t)=\tilde{\boldsymbol{h}}^*_{T}(t)-\boldsymbol{h}_{(\psi^*,\phi^*)}^*(t)$. Equation \eqref{eqn:h_optimal_npc} links the $\boldsymbol{h}_{(\psi^*,\phi^*)}^*(t)$ and $\tilde{\boldsymbol{h}}^*_{T}(t)$ so that the convergence of $\boldsymbol{h}_{(\psi^*,\phi^*)}^*(t)$ can be analyzed. Specifically, by the forth condition of Assumption 1 
% \ref{assum1}
, we have:
\begin{equation}
\begin{aligned}
\label{eqn:norm_L_K2}
&||f(\boldsymbol{h}_{(\psi^*,\phi^*)}^*(t),\tilde{\boldsymbol{u}}_T^*(t))  - A\boldsymbol{h}_{(\psi^*,\phi^*)}^*(t)- B\tilde{\boldsymbol{u}}_T^*(t)|| \\ 
& \leq L(||\boldsymbol{h}_{(\psi^*,\phi^*)}^*(t)||+||\tilde{\boldsymbol{u}}_T^*(t)||)\\
&=L(||\boldsymbol{h}_{(\psi^*,\phi^*)}^*(t)||+ \\ & ||-R^{-1}B^{\top}P(T-(t \ \text{mod}\ \tau))(\boldsymbol{\epsilon}(t)+\boldsymbol{h}_{(\psi^*,\phi^*)}^*(t))||)\\
&\leq L((1+K_2')||\boldsymbol{h}_{(\psi^*,\phi^*)}^*(t)||+K_2'||\boldsymbol{\epsilon}(t)||),
\end{aligned}
\end{equation}
where $K_2'=||R^{-1}B^{\top}||(K_0+||P_{\infty}||)$ and the last inequality holds by the fact that $||P(T-(t \ \text{mod}\ \tau))||\leq K_0 + ||P_{\infty}||$ from Equation \eqref{eqn:pt_pinfty_conver}. Applying the variation of constants formula \citep{ball1977strongly} to Equation \eqref{eqn:h_optimal_npc} and the norm operations, by Equations \eqref{eqn:upper_operator} and \eqref{eqn:norm_L_K2}, we have:
\begin{equation}
\begin{aligned}
\label{eqn:norm_h_star}
||\boldsymbol{h}_{(\psi^*,\phi^*)}^*(t)||& \leq M_{\infty}e^{-\mu_{\infty}}t||\boldsymbol{h}_{(\psi^*,\phi^*)}^*(t_1)||+ \\& (K_1e^{-2\mu_{\infty}(T-\tau)} + K_2L)\int_{0}^te^{-\mu_{\infty}(t-s)} \\& ||\boldsymbol{h}_{(\psi^*,\phi^*)}^*(s)||ds\\
&+K(L+1)\int_0^te^{-\mu_{\infty}(t-s)}||\boldsymbol{\epsilon}(s)||ds \\& + M_{\infty}\int_0^te^{-\mu_{\infty}(t-s)}||\boldsymbol{w}(s)||ds,
\end{aligned}
\end{equation}
where $K_2=M_{\infty}(1+K_2')$ and $K_1=M_{\infty}||BR^{-1}B^{\top}||K_0$. Note that the above inequality also uses the inequality: $\max_t||A_{T,\tau}(t)-A_{\infty}||\leq ||BR^{-1}B^{\top}||K_0e^{-2\mu_{\infty}(T-\tau)}$ by Equation \eqref{eqn:pt_pinfty_conver}. To investigate the impact of $||\boldsymbol{\epsilon}(t)||$ in Equation \eqref{eqn:norm_h_star}, the definition of $\boldsymbol{\epsilon}(t)$ and Equation \eqref{eqn:h_optimal_npc} imply that:
\begin{equation}
\begin{aligned}
\frac{d\boldsymbol{\epsilon}(t)}{dt}&= A_{\infty}\boldsymbol{\epsilon}(t)+(A_{T,\tau}(t)-A_{\infty})\boldsymbol{\epsilon}(t) \\
& -BR^{-1}B^{\top}P(T-(t \ \text{mod}\ \tau))\boldsymbol{\epsilon}(t)\\
&-f(\boldsymbol{h}_{(\psi^*,\phi^*)}^*(t),\tilde{\boldsymbol{u}}_T^*(t)) + A\boldsymbol{h}_{(\psi^*,\phi^*)}^*(t) \\
& + B \tilde{\boldsymbol{u}}_T^*(t) - \boldsymbol{w}(t).
\end{aligned}
\end{equation}

Using the variation of constants formula again and Gronwall lemma \citep{scheutzow2013stochastic} gives:
\begin{equation}
\begin{aligned}
\label{eqn:norm_eplison}
& ||\boldsymbol{\epsilon}(t)||\leq e^{K(L+1)\tau}(KL||\boldsymbol{h}_{(\psi^*,\phi^*)}^*(t)||_{l^1(0,t)} \\
& +\tau M_{\infty}||\boldsymbol{w}||_{L^{\infty}(0,t)}).
\end{aligned}
\end{equation}

Combing Equation \eqref{eqn:norm_h_star} and \eqref{eqn:norm_eplison} finally yields
\begin{equation}
\begin{aligned}
& ||\boldsymbol{h}_{(\psi^*,\phi^*)}(t)||\leq  M_{\infty}e^{-\mu t}||\boldsymbol{h}_{(\psi^*,\phi^*)}^*(t_1)|| \\
& + \frac{1-e^{-\mu t}}{\mu}K(1+(L+1)\tau e^{K(L+1)\tau})||\boldsymbol{w}||_{l^{\infty}(0,t)},
\end{aligned}
\end{equation}
where $\mu=\mu_{\infty}-K_1e^{-2\mu_{\infty}(T-\tau)}-K_2L-KL(L+1)\tau e^{K(L+1)\tau}$. 
\end{proof}

\section{Theorem 2 and Proof}\label{appendix:proof2}
\begin{theorem*}[Generalizability]
    \label{theo2_appen} Consider $T$, $\tau$, $K_2$, $\mu_{\infty}$, and $\boldsymbol{h}_{(\psi^*,\phi^*)}(t)$ defined in Theorem 1 
    % \ref{theo1}
    and let $\boldsymbol{u}_{(\psi^*,\phi^*)}(t)$ denote the optimal control action after NPC training in Algorithm 1
    % \ref{alg:train_NPC_NPCF}
    . Let $\boldsymbol{u}^*_{\infty}(t)$ denote the optimal solution of applying the linear model in Assumption 1 
    % \ref{assum1} 
    to the infinite-horizon minimization problem, defined in Equation \eqref{eqn:J_infty} in Appendix \ref{appendix:proof1}. Let $\boldsymbol{h}^*_{\infty}(t)$ denote the state controlled by $\boldsymbol{u}^*_{\infty}(t)$ using the nonlinear model $f_{\phi^*}(\cdot)$. There exists a constant $K_3$ such that:
    \begin{equation}
    \begin{aligned}
    \label{eqn:generalizability_appen}
& ||\boldsymbol{h}_{(\psi^*,\phi^*)}(t) - \boldsymbol{h}^*_{\infty}(t)|| + ||\boldsymbol{u}_{(\psi^*,\phi^*)}(t) - \boldsymbol{u}^*_{\infty}(t)|| \\
& \leq K_3e^{-2\mu_{\infty}(T-\tau)}\Big(\frac{L+1}{\mu_{\infty}-K_2L}||\boldsymbol{h}||_{l^1(0,t)} +||\boldsymbol{h}(t)||\Big)  \\
& + K_3\tau e^{K_3(L+1)\tau}\frac{L+1}{\mu_{\infty}-K_2L}(L||\boldsymbol{h}||_{l^1(0,t)}+||\boldsymbol{w}||_{l^{\infty}(0,t)}),
\end{aligned}
    \end{equation}
where $||\cdot||_{l^{1}(0,t)}$ is the $l$-$1$ norm on the function space over $(0,t)$.
\end{theorem*}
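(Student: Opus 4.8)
The plan is to recycle the machinery built for Theorem 1 in Appendix~\ref{appendix:proof1} and compare the NPC quantities against the infinite-horizon pair by chaining through the $M$-horizon linear solution. By the second condition of Assumption~\ref{assum1} I identify the trained control with the $M$-horizon linear feedback, $\boldsymbol{u}_{(\psi^*,\phi^*)}(t)=\tilde{\boldsymbol{u}}^*_T(t)=-R^{-1}B^\top P(T-(t\bmod\tau))\tilde{\boldsymbol{h}}^*_T(t)$ from \eqref{eqn:optimal_traj_hat_h}, while $\boldsymbol{u}^*_\infty(t)=\tilde{\boldsymbol{u}}^*_\infty(t)=-R^{-1}B^\top P_\infty\tilde{\boldsymbol{h}}^*_\infty(t)$ from \eqref{eqn:optimal_traj_hat_h_infty}, and $\boldsymbol{h}^*_\infty(t)$ is the response of the nonlinear model $f_{\phi^*}(\cdot)$ to this open-loop signal. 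Thus the whole task reduces to estimating the state gap $\boldsymbol{\delta}(t):=\boldsymbol{h}_{(\psi^*,\phi^*)}(t)-\boldsymbol{h}^*_\infty(t)$ and the matching control gap.

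First I would split the control error through the two feedback laws, $\boldsymbol{u}_{(\psi^*,\phi^*)}(t)-\boldsymbol{u}^*_\infty(t)=-R^{-1}B^\top\big[(P(T-(t\bmod\tau))-P_\infty)\tilde{\boldsymbol{h}}^*_T(t)+P_\infty(\tilde{\boldsymbol{h}}^*_T(t)-\tilde{\boldsymbol{h}}^*_\infty(t))\big]$. The first bracket is bounded pointwise by the Riccati-convergence estimate \eqref{eqn:pt_pinfty_conver}; since $T-(t\bmod\tau)\ge T-\tau$ it carries the factor $K_0e^{-2\mu_\infty(T-\tau)}$ multiplying $\|\boldsymbol{h}(t)\|$, which is exactly the $e^{-2\mu_\infty(T-\tau)}\|\boldsymbol{h}(t)\|$ summand on the first line of \eqref{eqn:generalizability_appen}. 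The remaining pieces feed into an error ODE for $\boldsymbol{\delta}(t)$: adding and subtracting $A\boldsymbol{\delta}$, $B(\boldsymbol{u}_{(\psi^*,\phi^*)}-\boldsymbol{u}^*_\infty)$ and the stable generator $A_\infty$, exactly as in the derivation of \eqref{eqn:h_optimal_npc}, I obtain a perturbed linear equation whose homogeneous part decays like $e^{-\mu_\infty t}$ by \eqref{eqn:upper_operator}, with two sources: the gain mismatch $(A_{T,\tau}(t)-A_\infty)$ bounded via \eqref{eqn:pt_pinfty_conver}, and the Lipschitz remainder plus disturbance from the fourth and first conditions of Assumption~\ref{assum1}.

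Next I would apply the variation-of-constants formula and Gronwall's lemma, mirroring the passage \eqref{eqn:norm_h_star}--\eqref{eqn:norm_eplison} of Theorem 1's proof. Integrating the decay $e^{-\mu_\infty(t-s)}$ against a forcing whose state-dependent coefficient is $\sim K_2L$ turns the rate into the effective $\mu_\infty-K_2L$, and the inversion $\int_0^\infty e^{-(\mu_\infty-K_2L)s}\,ds=\tfrac{1}{\mu_\infty-K_2L}$ produces the $\tfrac{L+1}{\mu_\infty-K_2L}$ prefactor that multiplies both lines, the $L+1$ arising from the Lipschitz coupling of the state and control gaps. The $\tau$-periodic reset of $A_{T,\tau}(t)$ over each one-horizon interval reproduces the $\tau e^{K_3(L+1)\tau}$ factor of \eqref{eqn:norm_eplison}. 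Sorting the resulting summands by source---the Riccati-mismatch terms carry $e^{-2\mu_\infty(T-\tau)}$ and form the first line, while the Lipschitz-and-$\boldsymbol{w}$ terms form the second line---and absorbing the generic constants $M_\infty,K_0,\|R^{-1}B^\top\|,\|P_\infty\|$ into a single $K_3$ yields \eqref{eqn:generalizability_appen}.

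The hard part will be the bookkeeping of the \emph{coupled} state and control errors: because $\boldsymbol{h}^*_\infty$ is the nonlinear (not linear) response to $\boldsymbol{u}^*_\infty$, the gaps $\boldsymbol{\delta}(t)$, $\tilde{\boldsymbol{h}}^*_T-\tilde{\boldsymbol{h}}^*_\infty$ and the control gap enter one another through the Lipschitz term, so one must close a single simultaneous Gronwall inequality rather than two independent ones and verify that the effective rate $\mu_\infty-K_2L$ stays positive---the same smallness of $L$ already required in Theorem 1. A secondary subtlety is checking that identifying $\boldsymbol{u}_{(\psi^*,\phi^*)}$ with $\tilde{\boldsymbol{u}}^*_T$ only costs an error already absorbed by the second condition of Assumption~\ref{assum1}, so that no term beyond those in \eqref{eqn:generalizability_appen} survives.
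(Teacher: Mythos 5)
Your proposal follows essentially the same route as the paper's proof: identify $\boldsymbol{u}_{(\psi^*,\phi^*)}$ with the $M$-horizon LQR feedback $\tilde{\boldsymbol{u}}^*_T$ via the second condition of Assumption \ref{assum1}, bound the Riccati mismatch $P(T-(t\bmod\tau))-P_\infty$ by the exponential estimate \eqref{eqn:pt_pinfty_conver}, form a perturbed error ODE for the state gap with $A_\infty$ as the stable generator, and close via variation of constants plus Gronwall while reusing the $\boldsymbol{\epsilon}$-bound \eqref{eqn:norm_eplison} from Theorem 1. The only cosmetic difference is that you split the control gap through the linear infinite-horizon trajectory $\tilde{\boldsymbol{h}}^*_\infty$, whereas the paper splits it as $-R^{-1}B^{\top}\bigl(P(\cdot)\boldsymbol{\epsilon}+(P(\cdot)-P_\infty)\boldsymbol{h}_{(\psi^*,\phi^*)}+P_\infty\boldsymbol{e}\bigr)$ so that only quantities already controlled appear; this is bookkeeping, not a different argument.
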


\begin{proof}
Recall that the infinite-horizon cost minimization is in Equation \eqref{eqn:J_infty} with the SSM model gives optimal state and control action trajectories in Equation \eqref{eqn:optimal_traj_hat_h_infty}. By definition, $\boldsymbol{u}_{\infty}^*(t)=\tilde{\boldsymbol{u}}^*_{\infty}(t)=R^{-1}B^{\top} P(T-(t \ \text{mod}\ \tau))\tilde{\boldsymbol{h}}^*_{T}(t)$ in Equation \eqref{eqn:optimal_traj_hat_h_infty}. Thus, we have:
\begin{equation}
\begin{aligned}
\label{eqn:h_infty_star}
\frac{d\boldsymbol{h}_{\infty}^*(t)}{dt} & =f(\boldsymbol{h}^*_{\infty}(t),\boldsymbol{u}_{\infty}^*(t)) + \boldsymbol{w}(t)+A_{\infty}\boldsymbol{h}^*_{\infty}(t) \\
& - A\boldsymbol{h}^*_{\infty}(t) - B\boldsymbol{u}_{\infty}^*(t),
\end{aligned}
\end{equation}
where $A_{\infty} = A - BR^{-1}B^{\top}P_{\infty}$ is defined in Equation \eqref{eqn:optimal_traj_hat_h_infty}. Then, we evaluate the difference $\boldsymbol{e}(t)=\boldsymbol{h}_{(\psi^*,\phi^*)}(t) - \boldsymbol{h}^*_{\infty}(t)$ by Equations \eqref{eqn:h_infty_star} and \eqref{eqn:h_optimal_npc}. 

\begin{equation}
\begin{aligned}
\label{eqn:e_state}
\frac{d\boldsymbol{e}(t)}{dt}&=A_{\infty}\boldsymbol{e}(t)+(A_{T,\tau}(t)-A_{\infty})\boldsymbol{h}_{(\psi^*,\phi^*)}(t) \\
&-BR^{-1}B^{\top}P(T-(t \ \text{mod}\ \tau))\boldsymbol{\epsilon}(t)\\
&+f(\boldsymbol{h}_{(\psi^*,\phi^*)}(t),\tilde{\boldsymbol{u}}_{T}^*(t)) - f(\boldsymbol{h}^*_{\infty}(t),\boldsymbol{u}_{\infty}^*(t)) \\
& - A\boldsymbol{e}(t) - B(\tilde{\boldsymbol{u}}_{T}^*(t)-\boldsymbol{u}_{\infty}^*(t)).
\end{aligned}
\end{equation}

By the last condition in Assumption 1,
% \ref{assum1}, 
\begin{equation}
\begin{aligned}
\label{eqn:norm_f_infty}
&||f(\boldsymbol{h}_{(\psi^*,\phi^*)}(t),\tilde{\boldsymbol{u}}_{T}^*(t)) - f(\boldsymbol{h}^*_{\infty}(t),\boldsymbol{u}_{\infty}^*(t)) - A\boldsymbol{e}(t) \\
&- B(\tilde{\boldsymbol{u}}_{T}^*(t)-\boldsymbol{u}_{\infty}^*(t))|| \\
&\leq L(||\boldsymbol{e}(t)||+||\tilde{\boldsymbol{u}}_{T}^*(t)-\boldsymbol{u}_{\infty}^*(t)||)\\
&\leq L((1+K_2')||\boldsymbol{e}(t)||+K_2'||\boldsymbol{\epsilon}(t)|| \\
&+Ke^{-2\mu_{\infty}(T-\tau)}||\boldsymbol{h}_{(\psi^*,\phi^*)}(t)||),
\end{aligned}
\end{equation}
where $K_2'=||R^{-1}B^{\top}||(K_0+||P_{\infty}||)$ and $K_0$ and $P_{\infty}$ are defined in Equation \eqref{eqn:pt_pinfty_conver}. The last inequality holds because we have:
\begin{equation}
\begin{aligned}
\label{eqn:uhatstar_uinfty}
&\tilde{\boldsymbol{u}}_{T}^*(t)-\boldsymbol{u}_{\infty}^*(t) \\
& =-R^{-1}B^{\top}(P(T-(t \ \text{mod}\ \tau))\tilde{\boldsymbol{h}}^*_{T}(t)-P_{\infty}\boldsymbol{h}_{\infty}^*(t))\\
&=-R^{-1}B^{\top}(P(T-(t \ \text{mod}\ \tau))\boldsymbol{\epsilon}(t)+ \\
& \quad (P(T-(t \ \text{mod}\ \tau))-P_{\infty})\boldsymbol{h}_{(\psi^*,\phi^*)}(t)+P_{\infty}\boldsymbol{e}(t)).
\end{aligned}
\end{equation}

Then, we apply the variation of constant formula to Equation \eqref{eqn:e_state} and take norms. By Equation \eqref{eqn:norm_f_infty}, we have:
\begin{equation}
\begin{aligned}
||\boldsymbol{e}(t)|| & \leq K_2L\int_0^t e^{-\mu_{\infty}(t-s)}||\boldsymbol{e}(s)||ds + \\
& \quad K(L+1)e^{-2\mu_{\infty}(T-\tau)}||\boldsymbol{h}_{(\psi^*,\phi^*)}(t)||_{l^1(0,t)}\\
&+K(L+1)\int_0^te^{-\mu_{\infty}(t-s)}||\boldsymbol{\epsilon}(s)||ds \\
&\leq K_2L\int_0^t e^{-\mu_{\infty}(t-s)}||\boldsymbol{e}(s)||ds \\
&+ K(L+1)((e^{-2\mu_{\infty}(T-\tau)}+L\tau e^{K(L+1)\tau}) \\
& \quad ||\boldsymbol{h}_{(\psi^*,\phi^*)}(t)||_{l^1(0,t)}+\tau e^{K(L+1)\tau}||\boldsymbol{w}||_{l^{\infty}(0,t)}) \\
&\leq  \frac{1}{\mu_{\infty}-K_2L}K(L+1) ((e^{-2\mu_{\infty}(T-\tau)}+L\tau e^{K(L+1)\tau}) \\
& \quad ||\boldsymbol{h}_{(\psi^*,\phi^*)}(t)||_{l^1(0,t)}+\tau e^{K(L+1)\tau}||\boldsymbol{w}||_{l^{\infty}(0,t)})
\end{aligned}
\end{equation}
where $K_2=M_{\infty}(1+K_2')$ and the last second and the last inequalities is derived by Gronwall’s lemma. Similarly, we can take norm of Equation \eqref{eqn:uhatstar_uinfty} and finally obtain:
    \begin{equation}
    \begin{aligned}
    \label{eqn:generalizability_appen2}
& ||\boldsymbol{h}_{(\psi^*,\phi^*)}(t) - \boldsymbol{h}^*_{\infty}(t)|| + ||\boldsymbol{u}_{(\psi^*,\phi^*)}(t) - \boldsymbol{u}^*_{\infty}(t)|| \\
& \leq K_3e^{-2\mu_{\infty}(T-\tau)}\Big(\frac{L+1}{\mu_{\infty}-K_2L}||\boldsymbol{h}||_{l^1(0,t)} +||\boldsymbol{h}(t)||\Big) \\
&+ K_3\tau e^{K_3(L+1)\tau}\frac{L+1}{\mu_{\infty}-K_2L}(L||\boldsymbol{h}||_{l^1(0,t)}+||\boldsymbol{w}||_{l^{\infty}(0,t)}),
\end{aligned}
    \end{equation}
where $\boldsymbol{u}_{(\psi^*,\phi^*)}(t) \approx \tilde{\boldsymbol{u}}_{T}^*(t)$ by the second condition in Assumption 1. 
% \ref{assum1}.
\end{proof}

%\section{Research Methods}

%\subsection{Part One}

%111

\section{Appendix C: Synthetic Data Generation}
\label{sec:synthe_data}
We generate a binary time series classification synthetic training data and test data, separately.
For training dataset, it has 50 samples per class, each with 100 time steps. 
The data is created by applying sine and cosine functions to a sequence of 100 evenly spaced time steps between 0 and 6 and adding random noise. Mathematically,
\begin{equation}
\begin{cases}
    y^{class_0} = 7 + \sin(t) + \cos(t) + 0.2 \cdot N, \\
    y^{class_1} = 2 \sin(t) + 2 \cos(t) + 0.2 \cdot N
\end{cases}
\label{toy_functions}
\end{equation}
where $N$ is the random noise sampled from a normal distribution $\mathcal{N}(0,1)$. The generated data for both classes is combined, reshaped to include a singleton dimension, and labeled, creating an array of shapes of $(100, 100, 1)$ for the time series data and a corresponding label array. The data is then shuffled, normalized, and converted to $float 32$ type, while labels are converted to $int 64$ type. 

For the test dataset, we consider a more complex scenario.
Class 0 has 20 different types of time series patterns, each repeated 50 times, resulting in a total of 1000 samples. The time steps are generated as 100 evenly spaced values between 0 and 6.
To introduce variability, for each of the 20 Class 0 types, a parabolic abnormal noise effect is calculated and added to the base sine and cosine waveform. The specific process of adding the parabolic abnormal noise to class 0 is shown in Algorithm \ref{alg:toy_test_dataset}.
\begin{algorithm}
\caption{Data Generation with Parabolic Noise}
\begin{algorithmic}[1]
\State $n \gets 50$  \Comment{Number of samples per type}
\State $kinds \gets 20$ \Comment{Number of different Class 0 types}
\State $idx_1, idx_2 \gets 60, 100$ \Comment{Indices for noise time range}
\State $time\_steps \gets \text{linspace}(0, 6, 100)$
\State $y^{class_0} \gets \sin(time\_steps) + \cos(time\_steps) + \mathcal{N}(0, 1) + 7$
\State $y^{class_1} \gets 2 \cdot \sin(time\_steps) + 2 \cdot \cos(time\_steps) + \mathcal{N}(0,1)$

\State $x_{pass} \gets \frac{idx_1}{n\_steps} \cdot 6$
\State $y_{pass} \gets class\_0[0, idx_1]$
\State $h \gets \frac{idx_1 + idx_2}{2} \cdot \frac{6}{n\_steps}$  \Comment{Axis of symmetry}
\State $t \gets time\_steps[idx_1:idx_2]$  \Comment{Noise time range}
\State $class\_noise \gets \text{zeros}(kinds, n\_steps)$

\For{$i \gets 0$ to $kinds-1$}
    \State $k \gets y_{pass} - 0.3 \cdot i \cdot (x_{pass} - h)^2$  \Comment{Ensure passing through $(x_{pass}, y_{pass})$}
    \State $y \gets 0.3 \cdot i \cdot (t - h)^2 + k$
    \State $noise \gets \text{zeros}(n\_steps)$
    \State $noise[idx_1:idx_2] \gets y$
    \State $y\_ori \gets class_0[0]$
    \State $y\_ori[idx_1: idx_2] \gets 0$
    \State $class\_noise[i, :] \gets noise + y\_ori$
\EndFor
\State $y^{class_0} \gets \text{repeat}(class\_noise, \text{repeats}=n, \text{axis}=0)$
\label{alg:toy_test_dataset}
\end{algorithmic}
\end{algorithm}

% The parabolic effect ensures the waveform passes through a specific point \( y_{\text{pass}} \) at a calculated axis of symmetry \( h \). This creates diverse patterns for Class 0.
% We define the noise time indices as \( t_1 = 60 \) and \( t_2 = 100 \).
% $x_{\text{pass}} = \frac{t_1}{n_{\text{steps}}} \cdot 6, \quad y_{\text{pass}} = class_0 [0, t_1]$,
% $h = \frac{t_1 + t_2}{2} \cdot \frac{6}{n_{\text{steps}}}$
% The time range for noise \( t_{\text{noise}} \) is:
% $
% t_{\text{noise}} = t[t_1:t_2]
% $
% For each kind \( i \) (total \( \text{kinds} = 20 \)), the parabolic noise is defined and added as follows:
% $
% k = y_{\text{pass}} - 0.3 \cdot i \cdot (x_{\text{pass}} - h)^2
% $
% The parabolic function \( y_i(t) \) is:
% $
% y_i(t) = 0.3 \cdot i \cdot (t - h)^2 + k \quad \text{for} \; t \in [t_1, t_2]
% $
% The noise array \( \text{noise} \) is:
% $
% \text{noise}[t_1:t_2] = y_i(t)
% $
% The original `class\_0` values \( y_{\text{ori}} \) are set to zero in the noise range:
% $
% y_{\text{ori}} = \text{class\_0}[0], \quad y_{\text{ori}}[t_1:t_2] = 0
% $
% The final `class\_noise` for each kind \( i \) is:
% $
% \text{class\_noise}[i, :] = \text{noise} + y_{\text{ori}}
% $
% The final `class\_0` data is repeated for all samples:
% $
% \text{class\_0} = \text{repeat}(\text{class\_noise}, \; \text{repeats}=n_{\text{samples}}, \; \text{axis}=0)
% $
% Class 1 is generated as:
% $
% \text{class\_1}(t) = 2 \sin(t) + 2 \cos(t)
% $
The data for both classes is combined into a single dataset and reshaped to include a singleton dimension, resulting in an array of shapes (1050, 100, 1). Labels are created as an array of 1000 zeros for Class 0 and 50 ones for Class 1.

\section{Appendix D: Implementation Details}
\label{sec:implementation}

\subsection{Computational Setting}
All the models were implemented in Python 3.9 and realized in PyTorch. All experiments were conducted using a device equipped with an Apple M2 chip featuring an 8-core CPU.

\subsection{Key Hyperparameters}
All hyperparameters can be seen in our codes in supplemental materials. In this subsection, we summarize some key hyperparameters, including the RNN input window length ($N_1$), look-ahead horizons $M$, learning rate $lr$, and penalty term $\lambda$. They generally vary based on different datasets. For all of our test cases, we present them in the following Table \ref{table:hyper-para}.

\begin{table}[h]
% \vskip -0.1in
\caption{Key hyper-parameters
}
\label{table:hyper-para}
\begin{center}
\begin{small}
\begin{sc}
\resizebox{\columnwidth}{!}{
\begin{tabular}{l|cccccccccccc}
\toprule
&HAR & Earth   &  ECG&  Car   & WorSyn. & Trace & Plane & Fish & Symbol & SynCon. & PV\\
\midrule
$N_1$      & 4  & 6    & 4 & 8  &  12 & 10 & 12 & 14 & 10 & 12 & 10 \\
$M$ & 5 & 10   & 5 & 6 & 10 & 8 & 12 & 14 & 12 & 14 & 4\\
$lr$        &0.001  & 0.0001   & 0.0008 & 0.003   & 0.005 & 0.001 & 0.003 & 0.003 & 0.003 & 0.002 & 0.0002\\
$\lambda$         & 0.2  & 0.01        & 0.1 & 0.01   & 0.01 & 0.01 & 0.005 & 0.02 & 0.03 & 0.01 &0.005\\
\bottomrule
\end{tabular}
}
\end{sc}
\end{small}
\end{center}
\end{table}

\subsection{Code Organization}
Our model contains two parts. Custom RNN Module and ODE-RNN Module.
The Custom RNN Module includes three linear layers for processing the input and hidden states, followed by three output layers. Additionally, three classifier layers are included. The model uses ReLU and Tanh activation functions, with a Sigmoid function in the final layer for classification.
In the forward pass, the model concatenates the input and hidden state tensors, processes them through the RNN layers with ReLU activations, and then through the output and classifier layers. The final output includes the reshaped tensor $u$, the updated hidden state, and the classification output. We use an MLP to parameterize an ODE function for the ODE-RNN model. It consists of two linear layers and a Tanh activation function. The forward pass transforms the input tensor through these layers and applies the Tanh activation, producing the evolved hidden state.
The ODE-RNN model combines the above ODE function with an RNN cell to process sequential data with continuous-time dynamics. It includes an ODE function, a GRU cell, and two linear layers. ReLU activations are used, with a final Sigmoid activation for classification.

The training process involved optimizing the Custom RNN and ODE-RNN models over $400$ epochs using the Adamax optimizer. Each epoch consisted of iterating through batches of data with a batch size of $32$. For each time step, the RNN model processed the input window to produce an intermediate output and hidden state. This intermediate output was then passed to the ODE-RNN model, which evolved the hidden state using the ODE function and updated it through the GRU cell. The models' outputs were compared to the true labels using the CrossEntropyLoss criterion, with regularization terms applied to prevent overfitting.

\section{Appendix E: Additional Visualization for Time-Series Regression}
\label{sec:additional_inter}
In this subsection, we present additional visualization for time-series regression, shown in Fig. \ref{fig:NPC1} to \ref{fig:ODE-RNN3}. For all scenarios, NPC outperforms ODE-RNN for interpolation tasks.

\begin{figure}[h]
\centering
\includegraphics[width=3.4in,trim=5 5 5 0, clip]{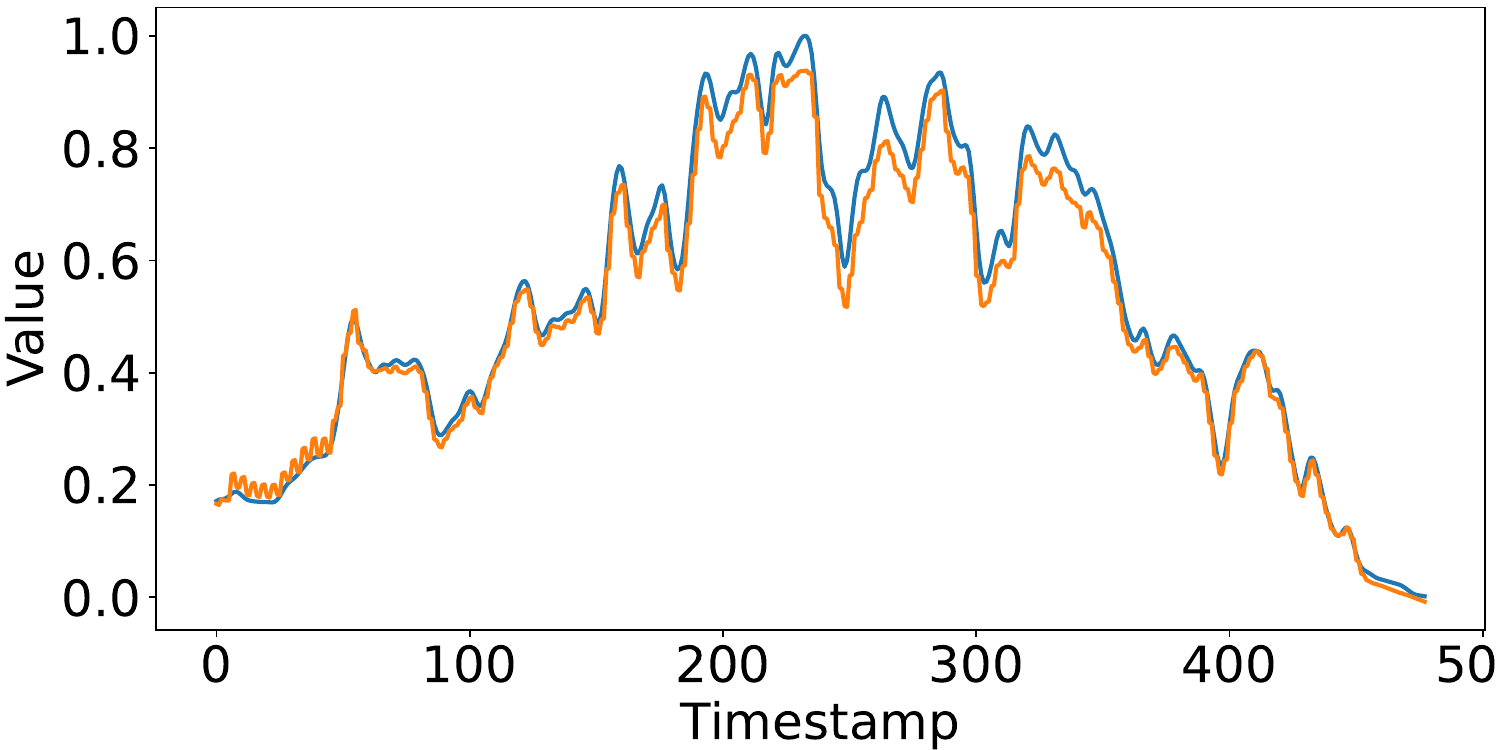}
\centering
\caption{NPC's interpolation result with a data drop rate of $40\%$.}
\label{fig:NPC1}
\end{figure}

\begin{figure}[h!]
\centering
\includegraphics[width=3.4in,trim=5 5 5 0, clip]{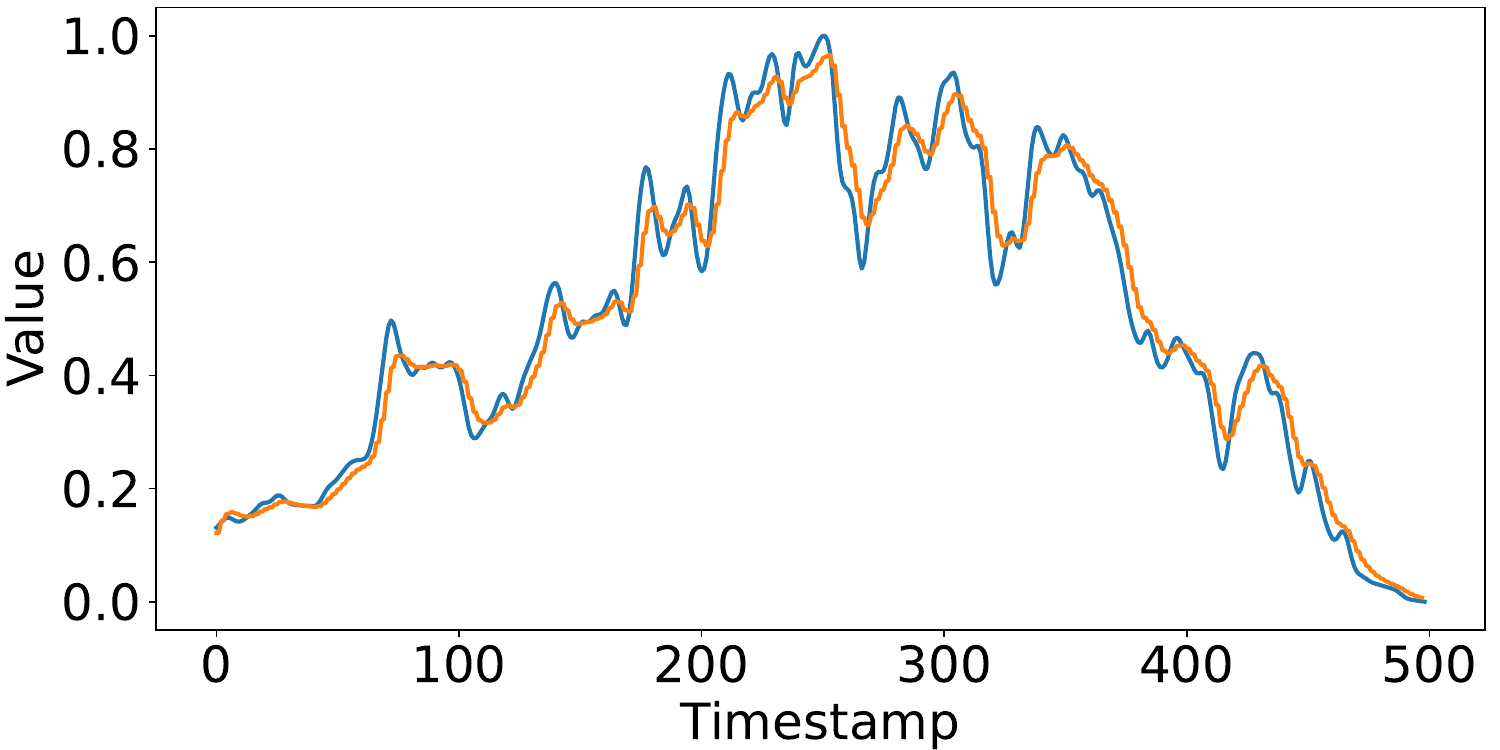}
\centering
\caption{ODE-RNN's interpolation result with a data drop rate of $40\%$.}
\label{fig:ODE-RNN1}
\end{figure}

\begin{figure}[h!]
\centering
\includegraphics[width=3.4in,trim=5 5 5 0, clip]{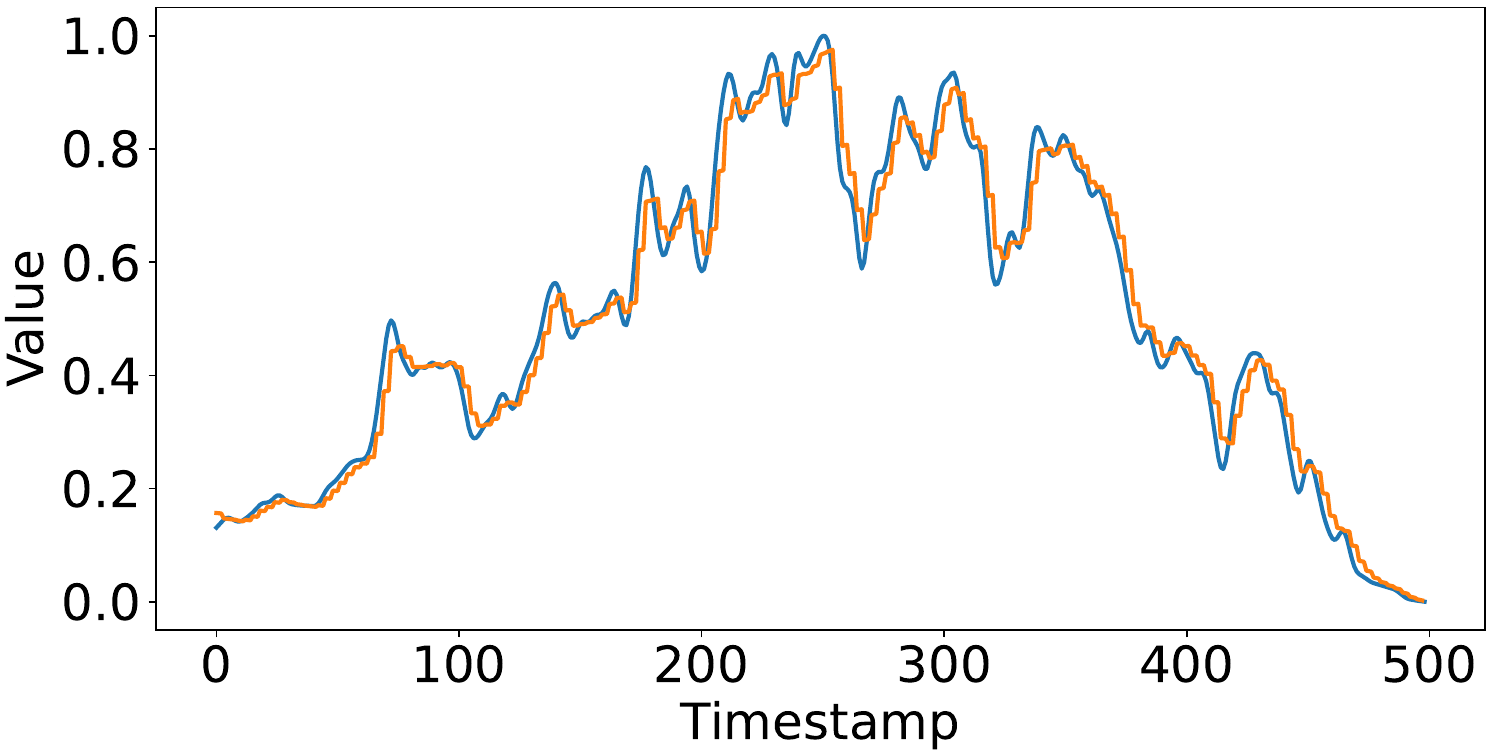}
\centering
\caption{NPC's interpolation result with a data drop rate of $60\%$.}
\label{fig:NPC2}
\end{figure}

\begin{figure}[h!]
\centering
\includegraphics[width=3.4in,trim=5 5 5 0, clip]{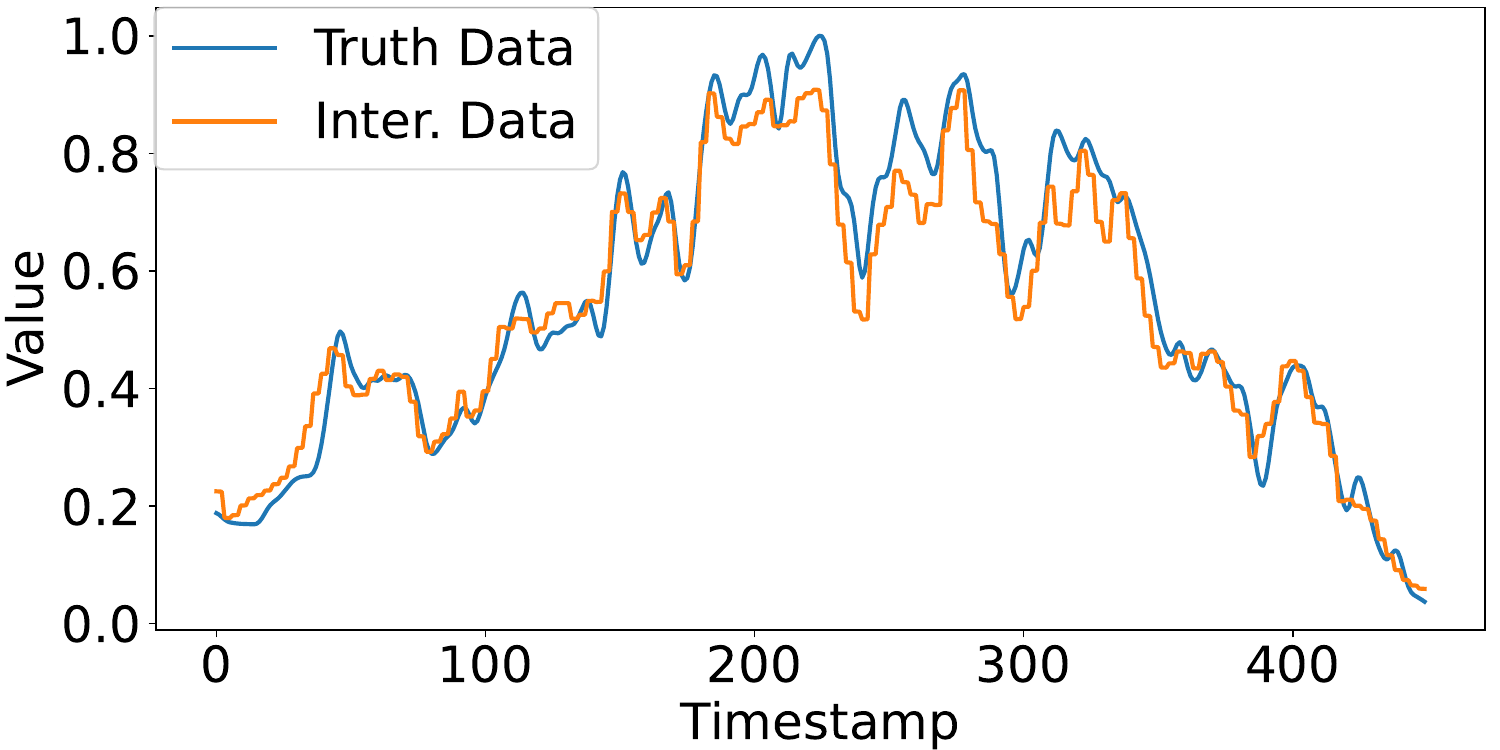}
\centering
\caption{ODE-RNN's interpolation result with a data drop rate of $60\%$.}
\label{fig:ODE-RNN2}
\end{figure}

\begin{figure}[h!]
\centering
\includegraphics[width=3.4in,trim=5 5 5 0, clip]{fig_NPC_interpolation5.pdf}
\centering
\caption{NPC's interpolation result with a data drop rate of $80\%$.}
\label{fig:NPC3}
\end{figure}

\begin{figure}[h!]
\centering
\includegraphics[width=3.4in,trim=5 5 5 0, clip]{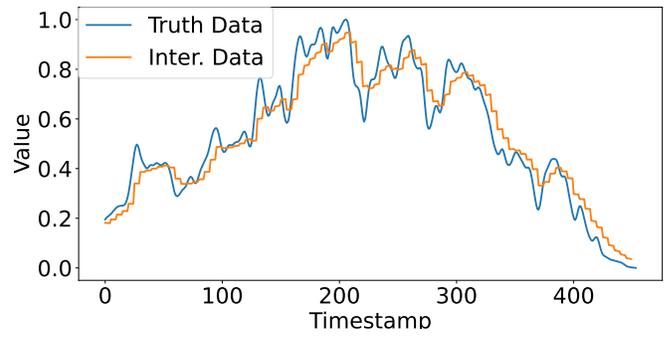}
\centering
\caption{ODE-RNN's interpolation result with a data drop rate of $80\%$.}
\label{fig:ODE-RNN3}
\end{figure}

\end{document}